\newcommand\footnoteref[1]{\protected@xdef\@thefnmark{\ref{#1}}\@footnotemark}
\title[Quantile Multi-Armed Bandits with 1-bit Feedback]{Quantile Multi-Armed Bandits with 1-bit Feedback}
\renewcommand{\cite}{\citep}
\begin{document}
\maketitle

\begin{abstract}
   In this paper, we study a variant of best-arm identification involving elements of risk sensitivity and communication constraints. Specifically, the goal of the learner is to identify the arm with the highest quantile reward, while the communication from an agent (who observes rewards) and the learner (who chooses actions) is restricted to only one bit of feedback per arm pull. We propose an algorithm that utilizes noisy binary search as a subroutine, allowing the learner to estimate quantile rewards through 1-bit feedback. We derive an instance-dependent upper bound on the sample complexity of our algorithm and provide an algorithm-independent lower bound for specific instances, with the two matching to within logarithmic factors under mild conditions, or even to within constant factors in certain low error probability scaling regimes. The lower bound is applicable even in the absence of communication constraints, and thus we conclude that restricting to 1-bit feedback has a minimal impact on the scaling of the sample complexity.
\end{abstract}

\begin{keywords}%
  Best-Arm identification, quantile bandits, 1-bit quantization
\end{keywords}

\section{Introduction}
The multi-armed bandit (MAB) is a well-studied decision-making framework due to its effectiveness in modelling a wide range of application domains such as online advertising, recommendation systems, clinical trials, and A/B testing.
Two common but distinct objectives in theoretical MAB studies are regret minimization and best arm identification (BAI), and this paper is focused on the latter. 
The goal of BAI is for the learner/decision-maker to efficiently identify the ``best'' arm (decision) from a set of arms, where the learning process occurs through ``pulling'' the arms and receiving some feedback about their rewards.

In the vanilla setting of BAI, the best arm is defined as the arm whose reward distribution has the highest mean, and the learner has access to direct observations of the rewards of the pulled arms. 
To tailor to certain practical applications, the best arm is sometimes defined using a different performance measure, and certain constraints are sometimes incorporated into the feedback/learning process.
Examples of this include (but are not limited to) the following:
\begin{itemize}[topsep=0pt, itemsep=0pt]
    \item[(i)] 
    in settings where the decision-making is risk-sensitive, using quantiles or value-at-risk as the performance measure may be more appropriate than using mean reward;

    \item[(ii)]
    in settings where the uplink communication from the sensor to the server (learner) is costly, the communication to the learner may be restricted, e.g., to send only a few bits rather than sending the exact reward.
\end{itemize}
In this paper, we consider a setup for BAI that features both of these aspects. Specifically, the communication to the learner is restricted to \textit{one bit} of feedback per arm pull, and the goal of the learner is to identify the arm with the highest $q$-quantile for some $q \in (0,1)$. The problem setup is described formally in Section~\ref{sec: setup}. The main contribution of this paper is an algorithm (Algorithm~\ref{alg: main}) for this problem whose upper bound on the sample complexity nearly matches the lower bound for the problem \textit{without} the communication constraint.  This complements an analogous study of highest mean BAI giving evidence that \emph{multiple bits per arm pull} need to be used \cite{hanna2022solving}, suggesting that the quantile-based objective may be easier to handle in highly quantized scenarios.  
The details of our results and contributions are given in Section~\ref{sec: contributions}. 
Before formally introducing the problem and stating our contributions, we outline some related work.

\subsection{Related Work}
The multi-armed bandit (MAB) problem was first studied in the context of clinical trials in~\cite{thompson1933likelihood} and was formalized as a statistical problem in~\cite{Robbins1952SomeAO}. The related work on MAB is extensive (e.g., see \cite{slivkins2019introduction, Csa18} and the references therein); we only provide a brief outline here, emphasizing the most closely related works.

\textbf{Best arm identification.}
The early work on MAB focused on balancing the trade-off between exploration 
and exploitation
for cumulative regret minimization.
The best arm identification (BAI) problem was introduced in~\cite{EvenDar2002PACBF} as a ``pure exploration" problem, where the goal is to find from an arm set $\A$, the arm $k^* = \argmax_{k \in \A} \mu_k$ with the highest mean reward (the ``best'' arm).
Subsequent work on BAI includes \cite{bubeck2009pure, Audibert2010BestAI, gabillon2012unified, karnin2013almost, jamieson2014lil, kaufmann2016complexity, garivier2016optimal}, and these are commonly categorized into the
fixed budget setting and fixed confidence setting.
In the fixed confidence setting, which is the focus of our work, the target error probability is fixed, and the objective is to devise an algorithm that identifies the best arm, in the Probably Approximately Correct (PAC) sense, using a minimal average number of arm pulls.  
Formally, an algorithm is $\delta$-PAC correct if it satisfies 
$
    \sup_{\nu} \mathbb{P}\big(\hat{k} \ne k^* \big) \le \delta,
$ where $\hat{k}$ is the output of the algorithm, $k^*$ is the best arm, and the supremum is taken over the collection of instances $\nu$ such that there exists a unique best arm.
A lower bound of $\sum_{k \ne k^*} \Delta_k^{-2} \log(\delta^{-1})$ on the sample complexity was given in
~\cite{mannor2004sample}, where $\Delta_k = \mu_{k^*} - \mu_k$ is the arm suboptimiality gap.
Several subsequent algorithms managed to match the dependence on $\Delta_k$ of the lower bound to within a doubly logarithmic factor. 
Despite the multitude of algorithms, these are usually based on one of the following general sampling strategies: arm/action elimination, upper confidence bounds (UCB), lower upper confidence bound (LUCB), and Thompson sampling.  See~\cite{jamieson2014best} for an overview and relationships between these sampling strategies.

\textbf{Quantile bandits.}
In certain real-world applications, the mean reward does not satisfactorily capture the merits of certain decisions. 
This has motivated the use of other risk-aware performance measures in place of the mean~\cite{yu2013sample}, such as the mean-variance risk, the (conditional) value-at-risk, and quantile rewards -- see~\cite{tan2022survey} for an extensive survey.
Among these, our work is most closely related to
the quantile multi-armed bandit problem (QMAB), a variant of the MAB problem in which the learner is interested in the arm(s) with the highest quantile reward (e.g., the median).
This is useful when dealing with heavy-tailed reward distributions or risk-sensitive applications, where a decision-maker might prioritize minimizing risk by focusing on lower quantiles (e.g., optimizing worst-case outcomes) or targeting the top-performing outcomes by focusing on higher quantiles.
In particular,~\cite{szorenyi2015qualitative, david2016pure, nikolakakis2021quantile, howard2022sequential} studied QMAB in BAI in the fixed confidence setting. Compared to mean-based BAI, the definition of the arm suboptimality gap $\Delta_k$ is not as straightforward, but this has been resolved in~\cite{nikolakakis2021quantile, howard2022sequential}. Based on the suboptimiality gap, a lower bound of the form $\sum_{k} \Delta_k^{-2} \log(\delta^{-1})$ was given in~\cite{nikolakakis2021quantile} for suitably-defined~$\Delta_k$. 
Algorithms in~\cite{nikolakakis2021quantile} and~\cite{howard2022sequential}, which are based on arm elimination and LUCB respectively, were shown to match the dependence on $\Delta_k$ of the lower bound, to within a doubly logarithmic factor. 
Other variants of quantile bandit problems include
fixed confidence median BAI with contaminated distributions \cite{altschuler2019best};
fixed confidence quantile BAI with differential privacy \cite{nikolakakis2021quantile}; fixed budget quantile BAI~\cite{zhang2021quantile}; and
quantile bandit regret minimization~\cite{torossian2019mathcal}.

\textbf{Communication-constrained bandits.}
Most work in MAB assumes that the arms' reward can be observed directly by the learner (with full precision).
However, this assumption may be impractical for real-world applications in which the reward observations are done by some agent (sensor) before being communicated to the learner (central server). 
This motivated the distributed MAB framework, which has garnered significant attention in recent research; see~\cite{amani2023distributed},~\cite[Appendix A]{salgia2023distributed} and the references therein.
The distributed MAB studies most pertinent to this work are those that focused on the quantization of the reward feedback communicated from agent to learner~\cite{vial2020one, hanna2022solving, mitra2023linear, mayekar2023communication}, which is motivated by applications where uplink communication bandwidth is limited (e.g., those using low-power sensors such as drones and wearable healthcare devices).
In particular,~\cite{vial2020one, hanna2022solving} studied constant bit
quantization schemes for
cumulative regret minimization problem in mean-based bandits, where only a constant number of bits are used to communicate each reward observation.
They showed that if the rewards are all supported on $[0, 1]$, then there exists a 1-bit quantization scheme that can achieve 
regret comparable to those in unquantized setups.
However, \cite[Sec. 3]{hanna2022solving} showed that if the rewards are supported on $[0, \lambda]$ for general $\lambda > 0$, then the same scheme would result in a regret that scales linearly in $\lambda$.
They further established that, in order to attain a natural set of sufficient (albeit not necessary) conditions for matching the unquantized regret to within a constant factor, at least 2.2 bits per reward observation are necessary.  
This suggests a possible inherent challenge, or at least a need for different techniques, when using 1-bit quantization.  
Finally, while some distributed MAB studies considered BAI problems~\cite{hillel2013distributed, karnin2013almost, tao2019collaborative, reda2022nearoptimal}, we are unaware of any that addressed the number of bits of feedback per round or used quantile-based performance measures.

\section{Problem Setup and Contributions}
\subsection{Problem Setup}
\label{sec: setup}
We study the following variant of fixed-confidence best arm identification for quantile bandits. 

\textbf{Arms and quantile rewards.}
The learner is given a set of arms $\A = \{1, 2, \dots, K\}$ with a stochastic reward setting. That is, for each arm $k \in \A$, the observations/realizations of its reward are i.i.d. random variables from some fixed but unknown reward distribution with CDF~$F_k$. This defines a (lower) quantile function $Q_k \colon [0,1] \to \R$ for each $k \in \A$ as follows:\footnote{The equality follows from the right-continuity of $F_k$.}
\begin{equation}
    Q_k(p) \coloneqq \sup \{ x  \in \R : F_k(x) < p \} 
    =
    \inf \{ x \in \R : F_k(x) \ge p \}.
\end{equation} 
The learner is interested in identifying an arm $\hat{k}$ with the highest $q$-quantile.
While the reward of each arm is allowed to be unbounded, 
we assume the $q$-quantile of each arm to be bounded in a known range $[0, \lambda]$.\footnote{We note that setting the lower limit to 0 is without loss of generality, and regarding the interval length $\lambda$, even a crude upper bound is reasonable since the sample complexity will only have logarithmic dependence; see Theorem~\ref{theorem: upper bound}.}
We let $\gP = \gP(q, \lambda)$ denote the collection of all distributions with $q$-quantile in $[0, \lambda]$, and let $\cE \coloneqq \gP
^K$ be the collection of all possible instances the learner could face.  We will sometimes write $\PP_{\nu}[\cdot]$ and $\EE_{\nu}[\cdot]$ to explicitly denote probabilities and expectations under an instance $\nu \in \cE$.

\textbf{1-bit communication constraint.} We frame the problem as having a single learner that makes decisions, and a single agent that observes rewards and sends information on them to the learner. In Remark \ref{rem:assump} below, we discuss how this can also have a multi-agent interpretation. 
With a single agent, the following occurs at each iteration/time~$t \ge 1$ indexing the number of arm pulls:
\begin{enumerate}[topsep=0pt, itemsep=0pt]
    \item The learner asks the agent
    to pull an arm $a_t \in \A$, and sends the agent some side information~$S_t$.

    \item The agent pulls $a_t$ and observes a random reward $r_{a_t, t}$
    distributed according to CDF $F_{a_t}$.
    
    \item The agent transmits a 1-bit message to the learner, where the message is based on $r_{a_t, t}$ and $S_t$.

    \item The learner decides on arm $a_{t+1} \in \A$ and side information $S_{t+1}$,
    based on arms and the 1-bit information
    received in iterations $1, \ldots, t$.
   
\end{enumerate}
We will focus on the \emph{threshold query model}, where at iteration $t$, the side information $S_t$ is a query of the form 
``Is $r_{a_t, t} \le \gamma_t$?'' and the 1-bit message is the corresponding binary feedback $\boldsymbol{1}\{ r_{a_t,t} \le \gamma_t \}$.
The learner will only use such queries as side information in our algorithm, though the problem itself is of interest for both threshold queries and general 1-bit quantization methods (possibly having different forms of side information).

\begin{remark} \label{rem:assump}
    We do not impose any (downlink) communication constraint from the learner to the agent, as this cost is typically not expensive.  While we framed the problem as having a single agent for clarity, we are motivated by settings where the agent at each time instant could potentially correspond to a different user/device.  For this reason, and also motivated by settings where agents are low-memory sensors, we assume that the agent is `memoryless', meaning the 1-bit message transmitted cannot be dependent on rewards observed from previous arm pulls.  The preceding assumptions were similarly adopted in some of the most related previous works \cite{hanna2022solving, mitra2023linear, mayekar2023communication}.
\end{remark}

\textbf{$\epsilon$-relaxation.}
Fix a QMAB instance $\nu  \in \cE$,
and let $k^* \in \A$ be an arm with the largest $q$-quantile for the instance $\nu$.
Instead of insisting on identifying an arm with the exact highest quantile, we relax the task by only requiring the identified arm $\hat{k}$ to be at most $\epsilon$-suboptimal in the following sense:
    \begin{equation}
    \label{def: performance def}
    \hat{k} \in
    \A_{\epsilon}(\nu) \coloneqq 
    \Big\{ k \in \A 
    \ \Big\vert\
     Q_k(q)
        \ge
        Q_{k^*}(q)
        - \epsilon
        \Big\}.
\end{equation}
This allows us to limit the effort on distinguishing arms whose $q$-quantile rewards are very close to each other; analogous relaxations are common in the BAI literature.
This relaxation is also motivated by the threshold query model mentioned above; specifically, we will see in Section \ref{sec: log lambda epsilon dependence} that achieving~\eqref{def: performance def} under the threshold query model requires
$\Omega(\log(\lambda/ \epsilon))$ arm pulls even in the case of \textit{deterministic} two-arm bandits.
Our goal is to design an algorithm to identify an arm satisfying~\eqref{def: performance def} with high probability while using as few arm pulls as possible.

\subsection{Summary of Contributions.}
\label{sec: contributions}
With the problem setup now in place, we summarize our main contributions as follows:

\begin{itemize}[topsep=0pt, itemsep=0pt]
    \item We provide an algorithm (Algorithm~\ref{alg: main}) for our setup, with the uplink communication satisfying the 1-bit constraint. Unlike standard bandit algorithms that compute empirical statistics using lossless observations of rewards, we use a noisy binary search subroutine for the learner to estimate the quantile rewards (see Appendix~\ref{sec: appendix QuantEst}).
    
    \item  We introduce fundamental arm gaps~$\Delta_k$ (Definition~\ref{def: our gap}) that generalize those proposed in prior work (see Remark~\ref{rem: gap generalization}). These gaps capture the
    difficulty of our problem setup in the sense that the problems with positive gaps essentially coincide with the set of problems that are solvable; see Theorem~\ref{thm: zero gap is unsolvable} and Remark \ref{rem: picking large enough c} for precise statements.

    \item We provide an instance-dependent upper bound on the number of arm pulls to guarantee~\eqref{def: performance def} with high probability (Corollary~\ref{cor: combined guarantee}), expressed in terms of $\lambda, \epsilon$, and fundamental arm gap~$\Delta_k$.
    Our upper bound scales logarithmically with $\lambda/\epsilon$, which contrasts with the existing upper bound for mean-based bandits with 1-bit quantization scaling linearly with $\lambda$~\cite{vial2020one, hanna2022solving}.

    \item  We also derive a worst-case lower bound (Theorem~\ref{thm: lower bound unquantized}) showing that our upper bound is tight to within logarithmic factors under mild conditions, and can even be tight to within constant factors when the target error probability $\delta$ decays to zero fast enough.  We additionally provide a lower bound (Theorem \ref{thm: log lambda/epsilon dependence}) showing that $\Omega(\log(\lambda/ \epsilon))$ dependence is unavoidable under threshold queries in arbitrary scaling regimes.  
    The former lower bound is applicable even in the absence of communication constraints, so we can conclude that restricting to 1-bit feedback has a minimal impact on the sample complexity, at least in terms of scaling laws.

\end{itemize}
\section{Algorithm and Upper Bound}

In this section, we introduce our main algorithm and provide its performance guarantee. 

\subsection{Description of the Algorithm}
Our algorithm (Algorithm~\ref{alg: main}) is based on successive elimination, which is well-studied in the standard BAI problem and has also been adapted for other variations.  The algorithm pulls arms in \emph{rounds}, where each round consists of multiple pulls (namely, pulling all non-eliminated arms).  
For each arm~$k$ that is active at round~$t$,\footnote{We slightly abuse notation and use $t$ to index ``rounds'', each consisting of several arm pulls; it will be clear from the context whether $t$ is indexing a round or indexing the number of pulls so far.  We still use the \emph{total} number of arm pulls to characterize the performance of the algorithm.} the algorithm computes a confidence interval 
$\left[ \mathrm{LCB}_t(k), \mathrm{UCB}_t(k)\right]$ that
contains the $q$-quantile $Q_k(q)$ with high probability (see Lines~\ref{LCB definition} and~\ref{UCB definition}). Based on the confidence intervals, the algorithm eliminates arms that are suboptimal (see Line~\ref{line: active arm}). When the algorithm identifies that some arm satisfies~\eqref{def: performance def} based on the confidence bounds, it terminates and returns that arm (see Lines~\ref{line: start while loop} and~\ref{line: return}).

This high-level idea was also used in~\cite{szorenyi2015qualitative, nikolakakis2021quantile} for the quantile bandit problem with no communication constraint, but the procedures to obtain the confidence intervals are very different.
Their confidence intervals are computed using empirical quantiles of the (direct) observed rewards, which our learner does not have the luxury of accessing. Instead, we discretize the continuous interval $[0, \lambda]$ to a discrete interval $\left[0, \tilde{\epsilon}, 2 \tilde{\epsilon},  \ldots,  \lambda\right]$,\footnote{We use an input parameter $c > 0$ to control how finely the continuous interval is discretized; see Remark~\ref{rem: input c}.} and use a quantile estimation algorithm $\mathtt{QuantEst}$
to help us find $\mathrm{LCB}_t(k)$ and $\mathrm{UCB}_t(k)$ from the discretized interval; see Lines~\ref{line: number of points}--\ref{line: tilde epsilon} and Lines~\ref{ltk def}--\ref{UCB definition} respectively.
$\mathtt{QuantEst}$ can be implemented in our problem setup while respecting the 1-bit uplink communication constraint: the learner sends threshold queries in the form ``Is $r_{a_t, t} \le \gamma_t$?'' to the agent and receives 1-bit comparison feedback $\mathbf{1}(r_{a_t, t} \le \gamma_t)$.
Based on the feedback received, the learner then uses a noisy binary search strategy to compute $\mathrm{LCB}_t(k)$ and $\mathrm{UCB}_t(k)$. 
The details of $\mathtt{QuantEst}$ are deferred to Algorithm~\ref{alg: quantile interval} in Appendix~\ref{sec: appendix QuantEst}. 
 For now, we only need to treat $\mathtt{QuantEst}$ as a ``black box" with the following guarantee: Given input CDF $F$, non-decreasing list $\mathbf{x} = [x_1, \ldots, x_n]$, quantile of interest $\tau \in (0, 1)$, approximation parameter $\Delta \le \min(\tau, 1-\tau)$ and probability parameter $\delta$, $\mathtt{QuantEst}(F, \mathbf{x}, \tau, \Delta, \delta)$ will use at most
$O \big( \frac{1}{\Delta^2}  \log  \frac{n}{\delta} \big)$
threshold queries and output an index $i$ satisfying
    $\mathbb{P}
    \left([F(x_i), F(x_{i+1})] \cap  (\tau - \Delta, \tau + \Delta) = \varnothing \right) < \delta$. 
 Formally, the guarantees on its outputs $l_{t, k}$ and $u_{t, k}$ (see Lines~\ref{ltk def} and~\ref{utk def}) as well as the number of arm pulls used are stated as follows.
\begin{algorithm}[!t]
    \caption{Main Algorithm}
      \hspace*{\algorithmicindent} \textbf{Input}:
      Arms $\A = \{1, \dots, K\}$,
        and 
        $\lambda, \epsilon, q, \delta$,
        where
        $\lambda > \epsilon$ 
        and $q, \delta \in (0,1)$ 
        
        \hspace*{\algorithmicindent} \textbf{Parameter}: $c \in \mathbb{Z}^+$
    \label{alg: main}
    \begin{algorithmic}[1]
        \State Set 
        $n \coloneqq 
        \left\lceil (c+1) \lambda/\epsilon \right\rceil
        $
        \label{line: number of points}
        
        \State Set 
        $\tilde{\epsilon} \coloneqq    
        \lambda / n
        $  \footnote{The distance between $x_i$ and $x_{i+1}$ for $1 \le i \le n$ is exactly $\tilde{\epsilon}$, which is approximately $\epsilon/(c+1)$ (up to the impact of rounding in Line 1).
    We choose the spacings to be equal for ease of analysis.}
        \label{line: tilde epsilon}

        \State Initiate a list $\mathbf{x} 
        = [x_0, x_1, \ldots, 
        x_n, x_{n+1}, x_{n+2} ]
        = \left[
        -\infty, 0, 
         \tilde{\epsilon}, 2 \tilde{\epsilon}, 
       \ldots,
        (n-1)  \tilde{\epsilon}, \lambda,
        \infty
        \right]$
        \footnote{
        We add $\pm \infty$ to the ends of the list $\mathbf{x}$ to handle the edge cases
    $F_k(0) = q$ and $F_k(\lambda) = q$. Without this, Lemma~\ref{lem: good events} may not be satisfied:
    $[F_k(0), F_k( \tilde{\epsilon })] \cap  
        \big( q - \Delta^{(t)}, q   \big) = \varnothing$
    and 
    $[F_k(\lambda - \tilde{\epsilon}), F_k(\lambda)] \cap  
        \big( q, q+\Delta^{(t)}  \big) = \varnothing$.}
        \label{line: list of points}

        \State Initiate round index $t = 1$

        \State Initiate the set of active arms $\mathcal{A}_t = \A = \{1, \dots, K \}$

        \For{arm $k \in \mathcal{A}_t$}
            \State  $\mathrm{LCB}_0(k) = x_1 = 0; \
            \mathrm{UCB}_0(k) = x_{n+1} = \lambda$
            \label{eq: initiate default conf interval}
        \EndFor
                
        \While {$\mathrm{LCB}_{t-1}(k) < 
                \max\limits_{a \in \mathcal{A}_{t} \setminus \{k\} }  
                \mathrm{UCB}_{t-1}(a) -  (c+1)\tilde{\epsilon}$ for all arm $k \in \A_t$}
                \footnote{We use the convention that the maximum of an empty set is $- \infty$, so the while-loop termination condition is trivially satisfied when $|\A_t|= 1$.}

        \label{line: start while loop}

            \State $\Delta^{(t)} \leftarrow 2^{-t+1} \cdot \min(q, 1-q)$
            \label{def: Delta_t}
            
            \For{arm $k \in \mathcal{A}_t$}

                \State  
                \label{ltk def}
                \parbox[t]{\dimexpr\linewidth-3em}{%
          Run $\mathtt{QuantEst}$ (Algorithm~\ref{alg: quantile interval}) with input $\big(F_k, \mathbf{x}, q - \frac{\Delta^{(t)}}{2}, 
                \frac{\Delta^{(t)}}{2}, 
                \frac{\delta \cdot \Delta^{(t)}}{2 |\mathcal{A}_t|}\big)$
                to obtain an index~$l_{t, k} 
                 \in \{0, \dots, n+1\}$
        }

    
                \State
                \label{LCB definition}
                $\mathrm{LCB}_t(k) =
                \max
                \left(
                x_{l_{t, k}},
                \mathrm{LCB}_{t-1}(k)
                \right) 
                $
                
                \State
                \label{utk def}
                \parbox[t]{\dimexpr\linewidth-3em}{%
          Run $\mathtt{QuantEst}$ (Algorithm~\ref{alg: quantile interval}) with input $\big(F_k, \mathbf{x}, q + \frac{\Delta^{(t)}}{2}, 
                \frac{\Delta^{(t)}}{2}, 
                \frac{\delta \cdot \Delta^{(t)}}{2 |\mathcal{A}_t|}\big)$
                to obtain an index~$u_{t, k}  \in \{0, \dots, n+1\}$
        }
                
    
                \State
                $\mathrm{UCB}_t(k) =
                \min
                \left(
                x_{u_{t, k} + 1},
                \mathrm{UCB}_{t-1}(k)
                \right)$
                \label{UCB definition}
            \EndFor

            \State Update $\mathcal{A}_{t+1} =
            \big\{
            k \in \mathcal{A}_{t}:
            \mathrm{UCB}_t(k) >
            \max\limits_{a \in \mathcal{A}_{t}} 
            \mathrm{LCB}_t(a)
            \big\} 
            $
            \label{line: active arm}
            
             \State  Increment round index $t \leftarrow t+1$
             
            \label{line: end while loop}
        \EndWhile

        \State \Return any arm $\hat{k} \in \A_t$ 
        satisfying $\mathrm{LCB}_t(\hat{k})  \ge
                \max\limits_{a \in \A_t \setminus \{ \hat{k} \} }  
                \mathrm{UCB}_t(a) -  (c+1)\tilde{\epsilon}$ 
        \label{line: return}

    \end{algorithmic}
\end{algorithm}

\begin{lemma}[Good event]
\label{lem: good events}
    Fix an instance $\nu \in \cE$, and suppose Algorithm~\ref{alg: main} is run with input $(\A, \lambda, \epsilon, q, \delta)$ and parameter $c \ge 1$.
     Let $\Delta^{(t)}$, $\mathcal{A}_t$, $l_{t, k}$, $u_{t, k}$ be as defined in Algorithm~\ref{alg: main} for each round index $t \ge 1$ and each arm $k \in \A_t$.
     Define events $E_{t, k, l}$ and events $E_{u, k, l}$ respectively by
    \begin{equation}
    \label{eq: event Etkl}
        E_{t, k, l} \coloneqq
        \left\{
        [F_k(x_{l_{t, k}}), F_k(x_{l_{t, k}+1})] \cap  
        \big( q - \Delta^{(t)}, q   \big) 
        \text{ is non-empty}
        \right\}
    \end{equation}
    and
     \begin{equation}
      \label{eq: event Etku}
        E_{t, k, u} \coloneqq
        \left\{
        [F_k(x_{u_{t, k}}), F_k(x_{u_{t, k}+1})] \cap  
        \big( q, q + \Delta^{(t)}  \big) 
        \text{ is non-empty}
        \right\}.
    \end{equation}
    Then the Event~$E$ defined by
    \begin{equation}
        E \coloneqq 
        \bigcap_{t \ge 1}
        \bigcap_{k \in \mathcal{A}_t} 
        \left(
        E_{t, k, l}
        \cap
        E_{t, k, u}
        \right)
    \end{equation}    
    occurs with probability at least $1 - \delta$.
    Furthermore, for each $t$ and $k \in \A_t$, the number of arm pulls used by $\mathtt{QuantEst}$ to output $l_{t,k}$ and $u_{t, k}$ scales as 
    \begin{equation}
    \label{eq: QuantEst arm pulls}
        O\left(
    \frac{1}{(\Delta^{(t)})^2} 
    \log 
    \left(
    \frac{2n  |\A_t| }{ \delta \Delta^{(t)}}
    \right)
    \right)
    =
    O\left(
    \frac{1}{(\Delta^{(t)})^2} 
    \cdot
    \left( 
     \log \left(\frac{1}{ \delta } \right) +
     \log \left(\frac{1}{ \Delta^{(t)}}\right) +
     \log \left(\frac{c \lambda K}{ \epsilon } \right)
    \right)
    \right),
    \end{equation}
    where $n =\left\lceil (c+1) \lambda/\epsilon \right\rceil$ and $\Delta^{(t)}= 2^{-t+1} \cdot \min(q, 1-q)$ as stated in Lines~\ref{line: number of points} and~\ref{def: Delta_t} of Algorithm~\ref{alg: main}.
\end{lemma}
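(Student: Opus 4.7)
The plan is to apply the black-box guarantee of \texttt{QuantEst} stated just above the lemma to each of the calls in Algorithm~\ref{alg: main}, and then union bound over all arms and all rounds. The complication that $\mathcal{A}_t$ is itself a random set produced by the algorithm is handled by conditioning on the history up to the start of round~$t$.

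\emph{Step 1 (per-call translation).} For the call producing $l_{t,k}$ at Line~\ref{ltk def}, the \texttt{QuantEst} inputs are $\tau = q - \Delta^{(t)}/2$, $\Delta = \Delta^{(t)}/2$, $\delta' = \delta\Delta^{(t)}/(2|\mathcal{A}_t|)$. The guarantee says that with probability at least $1-\delta'$, the interval $[F_k(x_{l_{t,k}}), F_k(x_{l_{t,k}+1})]$ intersects $(\tau - \Delta, \tau + \Delta) = (q - \Delta^{(t)}, q)$, which is exactly the event $E_{t,k,l}$ in~\eqref{eq: event Etkl}. The analogous substitution with $\tau = q + \Delta^{(t)}/2$ gives that $E_{t,k,u}$ in~\eqref{eq: event Etku} fails with probability at most $\delta\Delta^{(t)}/(2|\mathcal{A}_t|)$. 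Here the hypothesis $\Delta^{(t)} \le \min(q,1-q)$ required by \texttt{QuantEst} is met by the definition $\Delta^{(t)} = 2^{-t+1}\min(q,1-q) \le \min(q,1-q)$ (equality only at $t=1$), and the validity of the choice $\delta' \le 1$ follows from $\Delta^{(t)} \le 1$ and $|\mathcal{A}_t| \ge 1$.

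\emph{Step 2 (union bounds).} Condition on the history $\mathcal{F}_{t-1}$ at the start of round~$t$; then $\mathcal{A}_t$ is determined, and the two \texttt{QuantEst} calls for each $k \in \mathcal{A}_t$ use fresh randomness independent of $\mathcal{F}_{t-1}$. Summing the two per-call failure probabilities over $k \in \mathcal{A}_t$ yields
\begin{equation}
    \mathbb{P}\!\left(\bigcup_{k \in \mathcal{A}_t}(E_{t,k,l}^c \cup E_{t,k,u}^c)\,\Big|\,\mathcal{F}_{t-1}\right)
    \le |\mathcal{A}_t| \cdot 2 \cdot \frac{\delta\Delta^{(t)}}{2|\mathcal{A}_t|}
    = \delta\Delta^{(t)},
\end{equation}
and taking expectations then summing over $t \ge 1$ gives
\begin{equation}
    \mathbb{P}(E^c) \le \sum_{t \ge 1} \delta\Delta^{(t)}
    = \delta \cdot \min(q,1-q) \sum_{t \ge 1} 2^{-t+1}
    = 2\delta\min(q,1-q) \le \delta,
\end{equation}
where the last inequality uses $\min(q,1-q) \le 1/2$. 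This establishes $\mathbb{P}(E) \ge 1 - \delta$.

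\emph{Step 3 (arm-pull count).} Plugging the inputs $\Delta = \Delta^{(t)}/2$ and $\delta' = \delta\Delta^{(t)}/(2|\mathcal{A}_t|)$ into the $O\!\left(\frac{1}{\Delta^2}\log\frac{|\mathbf{x}|}{\delta'}\right)$ cost of \texttt{QuantEst}, and noting that $|\mathbf{x}| = n+3 = \Theta(n)$, gives the first expression in~\eqref{eq: QuantEst arm pulls}. The second expression follows by splitting the logarithm, bounding $|\mathcal{A}_t| \le K$, and using $n = \lceil (c+1)\lambda/\epsilon\rceil = O(c\lambda/\epsilon)$ so that $\log(2n|\mathcal{A}_t|/(\delta\Delta^{(t)})) = O\!\left(\log(1/\delta) + \log(1/\Delta^{(t)}) + \log(c\lambda K/\epsilon)\right)$.

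The only nontrivial point is handling the randomness of $\mathcal{A}_t$, which is why I would introduce the conditioning on $\mathcal{F}_{t-1}$ explicitly in Step~2; everything else is a direct substitution into the \texttt{QuantEst} guarantee combined with the geometric sum $\sum_t 2^{-t+1} = 2$.
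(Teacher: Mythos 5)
Your proposal is correct and follows essentially the same route as the paper: apply the $\mathtt{QuantEst}$ guarantee (Corollary~\ref{cor: QuantEst guarantee}) to each call, union bound over $k \in \mathcal{A}_t$ and $t \ge 1$, note that the $|\mathcal{A}_t|$ factors cancel, and evaluate the resulting geometric sum using $\Delta^{(t)} = 2^{-t+1}\min(q,1-q) \le 2^{-t}$; the arm-pull count is a direct substitution plus $|\A_t| \le K$ and $n = \Theta(c\lambda/\epsilon)$. The one place you go slightly beyond the paper's writeup is Step~2, where you make explicit the conditioning on $\mathcal{F}_{t-1}$ needed to justify the union bound over the random index set $\mathcal{A}_t$ and the use of the random quantity $|\mathcal{A}_t|$ in the failure budget; the paper leaves this implicit, so this is a welcome clarification rather than a different argument.
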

\begin{proof}
    See Appendix~\ref{sec: appendix QuantEst} for the details, in which we make use of a noisy binary search subroutine from \cite{gretta2023sharp}.
\end{proof}
\begin{remark}
    \label{rem: input c}
    We note that the parameter $c \ge 1$ in the algorithm controls how finely the continuous interval $[0,\lambda]$ is discretized; one can think of $c=1$ for simplicity to have roughly $n = 2\lambda/\epsilon$ discretization points spaced by roughly $\epsilon/2$, but we will see in Section~\ref{sec: solvable} that picking a larger value of~$c$ can be beneficial.
\end{remark}

\subsection{Anytime Quantile Bounds}
Under Event $E$  as defined in Lemma~\ref{lem: good events}, we obtain the following anytime bounds for the quantiles when running Algorithm~\ref{alg: main}. 
These bounds will be used in the proofs of the correctness of Algorithm~\ref{alg: main} (Theorem~\ref{thm: correctness}) and the upper bound on the number of arm pulls (Theorem~\ref{theorem: upper bound}).

\begin{lemma}[Anytime quantile bounds]
\label{lem: quantile anytime bound}
    Fix an instance $\nu \in \cE$, and suppose Algorithm~\ref{alg: main} is run with input $(\A, \lambda, \epsilon, q, \delta)$ and parameter $c \ge 1$.
    Let~$\tilde{\epsilon} = \tilde{\epsilon}(\lambda, \epsilon, c)$,
    and $\Delta^{(t)}$, $\A_t$, $\mathrm{LCB}_t(k)$, and $\mathrm{UCB}_t(k)$ be as defined in Algorithm~\ref{alg: main} for each round index $t \ge 1$ and each arm $k \in \A_t$.    
    Under Event~$E$ as defined in Lemma~\ref{lem: good events}, we have the following bounds for the 
    arms' lower quantile functions $Q_k(\cdot )$ and upper quantile functions
     $Q^+_k(p) \coloneqq \sup \{ x \mid F_k(x) \le p \} $:
    \begin{equation}
    \label{eq: quantile anytime bound}
           \mathrm{LCB}_{\tau}(k)
        \le \mathrm{LCB}_t(k)
        < Q_k(q) \le  Q^+_k(q)
        \le \mathrm{UCB}_t(k)
        \le \mathrm{UCB}_{\tau}(k)
    \end{equation}
    \begin{equation}
    \label{eq: lower approx quantile anytime bound}
       Q^+_k\big(q -  \Delta^{(t)} \big)
        \le \mathrm{LCB}_t(k) + \tilde{\epsilon}
    \end{equation}
    \begin{equation}
    \label{eq: upper approx quantile anytime bound}
        \mathrm{UCB}_t(k) 
        <
         Q_k\big(q + \Delta^{(t)} \big) + \tilde{\epsilon}
    \end{equation}
    for all rounds $t  > \tau \ge 0$ and each arm $k \in \A_t$.
\end{lemma}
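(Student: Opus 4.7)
The plan is to read off each of the four inequality chains directly from the update rules of Algorithm~\ref{alg: main} (Lines~\ref{LCB definition} and~\ref{UCB definition}) combined with the structural information that Event~$E$ gives about the indices $l_{t,k}$ and $u_{t,k}$ returned by $\mathtt{QuantEst}$. The monotonicity bounds $\mathrm{LCB}_{\tau}(k) \le \mathrm{LCB}_t(k)$ and $\mathrm{UCB}_t(k) \le \mathrm{UCB}_{\tau}(k)$ for $\tau \le t$ are immediate, since the outer $\max$ in the $\mathrm{LCB}$ update makes $\{\mathrm{LCB}_t(k)\}$ non-decreasing in $t$, and the outer $\min$ in the $\mathrm{UCB}$ update makes $\{\mathrm{UCB}_t(k)\}$ non-increasing.

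For the strict middle inequalities in~\eqref{eq: quantile anytime bound}, I would induct on $t$. Event $E_{t,k,l}$ forces $F_k(x_{l_{t,k}}) < q$ (otherwise the interval $[F_k(x_{l_{t,k}}), F_k(x_{l_{t,k}+1})]$ would lie in $[q,\infty)$ and could not meet the open interval $(q - \Delta^{(t)}, q)$); right-continuity of $F_k$ then produces $\delta > 0$ with $F_k(x_{l_{t,k}} + \delta) < q$, so $x_{l_{t,k}} < x_{l_{t,k}} + \delta \le Q_k(q)$. Together with the induction hypothesis $\mathrm{LCB}_{t-1}(k) < Q_k(q)$ (base case $\mathrm{LCB}_0(k) = 0 \le Q_k(q)$ via the standing assumption $Q_k(q) \in [0,\lambda]$), this gives $\mathrm{LCB}_t(k) < Q_k(q)$. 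The inequality $Q_k(q) \le Q^+_k(q)$ is immediate from the definitions, and $Q^+_k(q) \le \mathrm{UCB}_t(k)$ follows by a symmetric argument: $E_{t,k,u}$ forces $F_k(x_{u_{t,k}+1}) > q$, so $F_k(y) > q$ for every $y \ge x_{u_{t,k}+1}$ by monotonicity of $F_k$, giving $x_{u_{t,k}+1} \ge \sup\{y : F_k(y) \le q\} = Q^+_k(q)$; combined with $\mathrm{UCB}_{t-1}(k) \ge Q^+_k(q)$ by induction (base case $\mathrm{UCB}_0(k) = \lambda$), this yields $\mathrm{UCB}_t(k) \ge Q^+_k(q)$.

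For the approximation bounds~\eqref{eq: lower approx quantile anytime bound} and~\eqref{eq: upper approx quantile anytime bound}, I would combine the \emph{other} endpoint of each $\mathtt{QuantEst}$ interval with the $\tilde{\epsilon}$-spacing of the grid $\mathbf{x}$. Event $E_{t,k,l}$ also implies $F_k(x_{l_{t,k}+1}) > q - \Delta^{(t)}$, hence $Q^+_k(q - \Delta^{(t)}) \le x_{l_{t,k}+1}$ by monotonicity of $F_k$; combining with $x_{l_{t,k}+1} - x_{l_{t,k}} \le \tilde{\epsilon}$ (true for interior indices by construction of $\mathbf{x}$) and $\mathrm{LCB}_t(k) \ge x_{l_{t,k}}$ yields~\eqref{eq: lower approx quantile anytime bound}. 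Symmetrically, $E_{t,k,u}$ gives $F_k(x_{u_{t,k}}) < q + \Delta^{(t)}$, so right-continuity yields $x_{u_{t,k}} < Q_k(q + \Delta^{(t)})$, and then $\mathrm{UCB}_t(k) \le x_{u_{t,k}+1} \le x_{u_{t,k}} + \tilde{\epsilon} < Q_k(q + \Delta^{(t)}) + \tilde{\epsilon}$, giving~\eqref{eq: upper approx quantile anytime bound}.

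The main nuisance will be the boundary bookkeeping around the padded endpoints $x_0 = -\infty$ and $x_{n+2} = +\infty$, where the grid spacing is infinite and the $\tilde{\epsilon}$-gap argument is vacuous. I would dispose of these cases by inspection: the configuration $l_{t,k} = n+1$ would force $F_k(\lambda) < q$, which contradicts the standing assumption $Q_k(q) \le \lambda$ and therefore cannot arise under~$E$; the configuration $l_{t,k} = 0$ is handled by observing that $F_k(0) > q - \Delta^{(t)}$ forces $Q^+_k(q - \Delta^{(t)}) \le 0 \le \mathrm{LCB}_t(k) + \tilde{\epsilon}$ via $\mathrm{LCB}_0(k) = 0$ and the monotonicity from the first paragraph. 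The two analogous edge cases $u_{t,k} = 0$ and $u_{t,k} = n+1$ are treated by a symmetric dichotomy, falling back on $\mathrm{UCB}_0(k) = \lambda$ and the same monotonicity.
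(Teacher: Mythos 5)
Your proof is correct and follows essentially the same route as the paper's: translate Event~$E$ into bounds on $F_k$ at the grid points $x_{l_{t,k}}, x_{l_{t,k}+1}, x_{u_{t,k}}, x_{u_{t,k}+1}$, convert these into quantile inequalities, use the outer $\max$ and $\min$ in the $\mathrm{LCB}/\mathrm{UCB}$ update rules for the monotonicity and induction steps, and dispose of the padded endpoints with a short case split. The only stylistic deviation is that you rule out the configuration $l_{t,k}=n+1$ as impossible under~$E$ (and symmetrically $u_{t,k}=0$), whereas the paper handles that case directly via $x_{n+1}=\lambda \ge Q_k(q) \ge Q^+_k(q-\Delta^{(t)})$; both arguments are valid.
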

\begin{proof}
    This follows from applying properties of quantile functions to events $E_{t, k, l}$ and $E_{t, k, u}$ defined in~\eqref{eq: event Etkl} and \eqref{eq: event Etku}; see Appendix~\ref{sec: appendix anytime quantile bounds} for the details.
\end{proof}

\begin{remark}
\label{rem: LCB non decreasing}
    The property that $ \mathrm{LCB}_t(k)$ is non-decreasing 
    in $t$, i.e., the first inequality of \eqref{eq: quantile anytime bound}, may appear to be enforced ``artificially'' by Line~\ref{LCB definition} of Algorithm~\ref{alg: main}. 
    It will turn out that this property is crucial in proving Lemma~\ref{lem: max LCB increasing}, which 
    in turn is important for the analysis in upper bounding the number of arm pulls -- see Remark~\ref{rem: elim suboptimal}.
\end{remark}

\subsection{Correctness}
In this section, we give the performance guarantee of Algorithm~\ref{alg: main} using the anytime quantile bounds (Lemma~\ref{lem: quantile anytime bound}).
We first formalize the notion of an algorithm returning an \textit{incorrect} output with at most a small error probability $\delta$.

\begin{definition}[$(\epsilon, \delta)$-reliable.]
     Consider an algorithm $\pi$ for the QMAB problem with quantized or  unquantized rewards that takes $(\Ac, \lambda, \epsilon, q, \delta)$ as input and operates on instances $\nu \in \cE$. 
     Then, we say~$\pi$ is $(\epsilon, \delta)$-reliable if for each instance $\nu \in \cE$, it returns an \emph{incorrect} output with probability at most $\delta$, i.e.,     
    \begin{equation}
    \label{def: reliable}
        \text{for each } \nu \in \cE, \quad 
        \PP_{\nu}[ \tau < \infty \cap \hat{k} \notin \Ac_{\epsilon}(\nu) ] \le \delta,
    \end{equation}
    where $\tau = \tau(\nu) \le \infty$ is the random stopping time of $\pi$ on instance $\nu$,
    arm $\hat{k}$ is the output upon termination, and~$\Ac_{\epsilon}(\nu)$ is as defined in~\eqref{def: performance def}.
\end{definition}

\begin{remark}
    \label{rem: PAC}
    This definition is related to the notion of being $(\epsilon, \delta)$-PAC (see~\cite{EvenDar2002PACBF}). It can be seen as a relaxation of $(\epsilon, \delta)$-PAC since an $(\epsilon, \delta)$-reliable algorithm is allowed to be non-terminating on some instances -- a high probability of correctness is needed only on instances it terminates on.
    As we will see in Section~\ref{sec: solvable}, this relaxation is required when considering every possible $\nu \in \cE$,
    as there are instances that are not ``solvable'' for any finite number of arm pulls.
\end{remark}

\begin{theorem}[Reliability of Algorithm~\ref{alg: main}]
\label{thm: correctness}
    Fix an instance $\nu \in \cE$, and suppose Algorithm~\ref{alg: main} is run with input $(\A, \lambda, \epsilon, q, \delta)$ and parameter $c \ge 1$.
    Under Event $E$ as defined in Lemma~\ref{lem: good events}, if Algorithm~\ref{alg: main} terminates, then
    it returns an arm~$\hat{k}$ satisfying~\eqref{def: performance def}.
\end{theorem}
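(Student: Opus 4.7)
The plan is to work entirely under Event~$E$ and exploit the anytime quantile bounds from Lemma~\ref{lem: quantile anytime bound} together with the structural conditions in Lines~\ref{line: active arm} and~\ref{line: return}. A preliminary arithmetic observation is that, by Lines~\ref{line: number of points}--\ref{line: tilde epsilon}, the discretization spacing satisfies $\tilde{\epsilon} = \lambda/n \le \epsilon/(c+1)$, since $n = \lceil(c+1)\lambda/\epsilon\rceil \ge (c+1)\lambda/\epsilon$. Hence $(c+1)\tilde{\epsilon} \le \epsilon$, which is precisely what will convert the algorithm's termination slack into the required $\epsilon$-suboptimality.

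The crucial step is to show that, under Event~$E$, a best arm $k^*$ is never eliminated; that is, $k^* \in \A_t$ for every round $t$ that the algorithm reaches. Suppose for contradiction that $k^*$ is eliminated at some round~$s$. Then by the elimination rule in Line~\ref{line: active arm}, there exists some $a^* \in \A_s$ with $\mathrm{UCB}_s(k^*) \le \mathrm{LCB}_s(a^*)$. Applying Lemma~\ref{lem: quantile anytime bound} to both arms yields
\begin{equation}
Q_{k^*}(q) \le \mathrm{UCB}_s(k^*) \le \mathrm{LCB}_s(a^*) < Q_{a^*}(q),
\end{equation}
contradicting the definition of $k^*$ as an arm with the largest $q$-quantile under $\nu$. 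Therefore $k^*$ survives every elimination step.

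Now suppose the algorithm terminates at round~$t$ and returns $\hat{k}$ per Line~\ref{line: return}. If $\hat{k} = k^*$, there is nothing to prove. Otherwise $k^* \in \A_t \setminus \{\hat{k}\}$, and the termination condition in Line~\ref{line: return} gives
\begin{equation}
\mathrm{LCB}_t(\hat{k}) \ge \max_{a \in \A_t \setminus \{\hat{k}\}} \mathrm{UCB}_t(a) - (c+1)\tilde{\epsilon} \ge \mathrm{UCB}_t(k^*) - (c+1)\tilde{\epsilon}.
\end{equation}
Combining this inequality with $Q_{\hat{k}}(q) > \mathrm{LCB}_t(\hat{k})$ and $\mathrm{UCB}_t(k^*) \ge Q_{k^*}(q)$ from Lemma~\ref{lem: quantile anytime bound}, together with the arithmetic fact $(c+1)\tilde{\epsilon} \le \epsilon$, I obtain
\begin{equation}
Q_{\hat{k}}(q) > \mathrm{LCB}_t(\hat{k}) \ge \mathrm{UCB}_t(k^*) - \epsilon \ge Q_{k^*}(q) - \epsilon,
\end{equation}
which establishes $\hat{k} \in \A_\epsilon(\nu)$ as defined in~\eqref{def: performance def}. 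The argument involves no real obstacle beyond careful bookkeeping: the strict inequality in Lemma~\ref{lem: quantile anytime bound} between $\mathrm{LCB}_t$ and the true quantile is what cleanly rules out an erroneous elimination of $k^*$, whereas only the weak inequalities are needed for the termination step.
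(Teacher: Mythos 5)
Your proof is correct and follows essentially the same route as the paper's: establish that a best arm $k^*$ survives every elimination (you argue by contradiction, the paper by induction, but these are trivial reformulations of the same use of inequality~\eqref{eq: quantile anytime bound}), then combine the termination condition in Line~\ref{line: return} with the anytime quantile bounds and $(c+1)\tilde{\epsilon}\le\epsilon$ to conclude. Your final chain is slightly more streamlined (going directly to $Q_{k^*}(q)-\epsilon$ rather than via $\max_{a}Q_a(q)-\epsilon$), but the content is identical.
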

Since Event $E$ occurs with probability at least $1-\delta$ (Lemma~\ref{lem: good events}), we conclude the following.
\begin{corollary}
\label{cor: main alg reliable}
    Algorithm~\ref{alg: main} is $(\epsilon, \delta)$-reliable.
\end{corollary}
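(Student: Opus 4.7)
The plan is a two-step argument under the good event $E$ from Lemma~\ref{lem: good events}: I would first show that the optimal arm $k^*$ is never eliminated, and then use the termination condition to force the returned arm $\hat{k}$ into $\A_\epsilon(\nu)$. Throughout, let $t$ denote the (random) terminal round, and recall the confidence sandwich from Lemma~\ref{lem: quantile anytime bound}: under $E$, for every round $\tau$ and every active arm $k \in \A_\tau$,
\begin{equation*}
    \mathrm{LCB}_\tau(k) < Q_k(q) \le Q_k^+(q) \le \mathrm{UCB}_\tau(k).
\end{equation*}
I would also extract the elementary identity $(c+1)\tilde{\epsilon} \le \epsilon$, which follows from Line~\ref{line: number of points} enforcing $n \ge (c+1)\lambda/\epsilon$, so that $\tilde{\epsilon} = \lambda/n \le \epsilon/(c+1)$.

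For the first step I would induct on the round index $\tau$. The base case $k^* \in \A_1 = \A$ is immediate. For the inductive step, Line~\ref{line: active arm} removes $k^*$ from $\A_{\tau+1}$ precisely when $\mathrm{UCB}_\tau(k^*) \le \max_{a \in \A_\tau} \mathrm{LCB}_\tau(a)$. Under $E$ the sandwich gives $\mathrm{UCB}_\tau(k^*) \ge Q_{k^*}^+(q) \ge Q_{k^*}(q)$, while for every $a \in \A_\tau$ the strict inequality $\mathrm{LCB}_\tau(a) < Q_a(q) \le Q_{k^*}(q)$ holds because $k^*$ has the largest $q$-quantile. Combining these yields $\mathrm{UCB}_\tau(k^*) > \max_{a \in \A_\tau} \mathrm{LCB}_\tau(a)$, so $k^* \in \A_{\tau+1}$, closing the induction and in particular placing $k^* \in \A_t$.

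For the second step, if $\hat{k} = k^*$ then $Q_{\hat{k}}(q) \ge Q_{k^*}(q) - \epsilon$ holds trivially. Otherwise $k^* \in \A_t \setminus \{\hat{k}\}$, so instantiating the termination condition from Line~\ref{line: return} at the competitor $k^*$ gives $\mathrm{LCB}_t(\hat{k}) \ge \mathrm{UCB}_t(k^*) - (c+1)\tilde{\epsilon}$. Chaining this with $Q_{\hat{k}}(q) > \mathrm{LCB}_t(\hat{k})$, $\mathrm{UCB}_t(k^*) \ge Q_{k^*}(q)$, and $(c+1)\tilde{\epsilon} \le \epsilon$ yields $Q_{\hat{k}}(q) > Q_{k^*}(q) - \epsilon$, verifying~\eqref{def: performance def}.

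The only delicate point is the inductive argument in the first step: it relies on Lemma~\ref{lem: quantile anytime bound} simultaneously delivering the non-strict bound $\mathrm{UCB}_\tau(k^*) \ge Q_{k^*}(q)$ (via the upper quantile $Q_{k^*}^+(q)$) and the strict bounds $\mathrm{LCB}_\tau(a) < Q_a(q)$ for every active $a$; without the mismatch in strictness, a tie between $\mathrm{UCB}_\tau(k^*)$ and some $\mathrm{LCB}_\tau(a)$ could in principle eliminate $k^*$. Once $k^* \in \A_t$ is established, the rest of the argument is essentially algebraic bookkeeping using the confidence bounds and the choice of $\tilde{\epsilon}$.
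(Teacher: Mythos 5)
Your proposal is correct and follows essentially the same path as the paper: under the good event $E$ you reprove Theorem~\ref{thm: correctness} (optimal arm never eliminated by induction, then chain the termination condition through $(c{+}1)\tilde{\epsilon}\le\epsilon$), and then invoke $\PP(E)\ge 1-\delta$ from Lemma~\ref{lem: good events} to get reliability. The paper packages this as a two-step derivation (Theorem~\ref{thm: correctness} proved in Appendix~\ref{sec: appendix correctness}, followed by the one-line corollary), but the argument content, including the role of the strict inequality $\mathrm{LCB}_t(a) < Q_a(q)$ versus the non-strict $Q^+_{k^*}(q)\le\mathrm{UCB}_t(k^*)$, matches.
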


The proof details of Theorem~\ref{thm: correctness} are given in Appendix~\ref{sec: appendix correctness}, and we provide a sketch here.
    Combining the guarantee from Line~\ref{line: return}, inequalities~\eqref{eq: quantile anytime bound}, and the choice of $\tilde{\epsilon} \le \lambda \cdot \epsilon/((c+1) \lambda) = \epsilon/(c+1)$ yields
    \begin{equation}
    Q_{\hat{k}}(q) >
    \mathrm{LCB}_t(\hat{k})  \ge
    \max\limits_{a \in \A_t \setminus \{ \hat{k} \} }  
    \mathrm{UCB}_t(a) -  (c+1)\tilde{\epsilon}    
    \ge 
    \max\limits_{a \in \A_t \setminus \{ \hat{k} \} }
    Q_a(q) -  \epsilon.  
    \end{equation}
    It remains to show that the optimal arm $k^*$ lies in $\A_t$ for all $t$, which we defer to Appendix~\ref{sec: appendix correctness}.

\subsection{Upper Bound}
\label{sec: upper bound}
In this section, we bound the number of arm pulls for a given instance $\nu \in \cE$. To characterize the number of arm pulls, we define the gap of each arm as follows.
\begin{definition}[Arm gaps]
\label{def: our gap}
     Fix an instance $\nu \in \cE$.
     Let $\tilde{\epsilon} = \tilde{\epsilon}(\lambda, \epsilon, c)$ and $\A_{\epsilon} = \A_{\epsilon}(\nu)$ be as defined in Algorithm~\ref{alg: main} and~\eqref{def: performance def} respectively.
    For each arm $k \in \A$, we define the gap  $\Delta_{k} =
    \Delta_{k}(\nu, \lambda, \epsilon, c, q)$ as follows:
\begin{equation}
    \label{eq: our gap}
    \Delta_{k}
    \coloneqq
    \begin{cases}
    \sup
    \Big\{
        \Delta \in \left[0, \min(q, 1-q) \right]
        :
        Q_k(q + \Delta) 
        \le
        \max\limits_{a \in \A  }
        Q^+_{a}(q - \Delta) - \tilde{\epsilon}
        \Big\}
    & \text{if }  k \not\in \A_{\epsilon} \\
   \max\limits_{\A_{\epsilon} \subseteq S \subseteq \A}
        \Delta_{k}^{(S)}
    & \text{if } k \in \A_{\epsilon} 
    \end{cases},
\end{equation}
where $Q^+_k(p)$ is the upper quantile function defined in Lemma~\ref{lem: quantile anytime bound},  
and
\begin{equation}
\label{eq: Delta k^S}
    \Delta_{k}^{(S)} \coloneqq
   \sup
    \Big\{
        \Delta \in 
       \Big[0, \min_{a \not\in S} \Delta_{a}  \Big]
        :
        Q^+_k(q - \Delta) 
        \ge 
        \max\limits_{ a \in S \setminus \{k\}} 
        Q_{a}(q + \Delta) - c \tilde{\epsilon}
        \Big\}
\end{equation}
for each subset $S$ satisfying $\A_{\epsilon} \subseteq S \subseteq \A$. We use the convention that the minimum (resp. maximum) of an empty set is $\infty$ (resp. $- \infty$).
\end{definition}

\begin{remark}[Intuition on arm gaps]
    We provide some intuition for the gap definitions:
\begin{itemize}[topsep=0pt,itemsep=0pt]
    \item 
    For an arm $k \not\in \A_{\epsilon}$, the gap
    $\Delta_{k}$ captures how much worse $k$ is than some other arm $a$. 
    When $k$ is sufficiently pulled relative to $1/\Delta_{k}$, 
    we can establish that
    $\mathrm{UCB}_t(k) \le \mathrm{LCB}_t(a)$, 
    which implies that $k$ is suboptimal, and we can stop pulling it. The details and derivation are given in Lemma~\ref{lem: elim suboptimal} and its proof.
    
    \item 

     To understand the gap $\Delta_{k} = \max\limits_{\A_{\epsilon} \subseteq S \subseteq \A}
        \Delta_{k}^{(S)}$ for a satisfying arm $k \in \A_{\epsilon}$,
    we first consider $\Delta_{k}^{(S)}$ for a fixed subset $S \supseteq \A_{\epsilon}$.
     This captures how much better arm $k$ is than the ``best'' arm $a \in S$ (up to $\epsilon$). 
     When arm $k$ is sufficiently pulled relative to $1/\Delta_{k}^{(S)}$, we can establish that 
    the termination condition is satisfied.
    Since $S \supseteq \A_{\epsilon}$ is arbitrary, we define $\Delta_{k}$ based on the set $S$ giving the highest $\Delta_{k}^{(S)}$.
     The details and derivation are given in Lemma~\ref{lem: termination} and its proof.
    
    \item When some arm $k^*$ is the only satisfying arm (i.e., $\A_{\epsilon} = \{k^*\}$), we have 
    \begin{equation}
    \label{eq: lower bound k* arm gap}
    \Delta_{k^*}  
     \ge  \Delta_{k^*}^{(\A_{\epsilon})}
     = \sup  \Big\{   \Delta \in 
       \Big[0, \min_{a \not\in \A_{\epsilon}} \Delta_a \Big] :
        Q^+_k(q - \Delta)  \ge  -\infty
        \Big\} 
    = \min\limits_{a \not\in \A_{\epsilon}} \Delta_a
    = \min\limits_{a \neq k^*} \Delta_a.
    \end{equation}
    This indicates that $k^*$ is pulled at most as many times as the smallest $\Delta_a$ value would dictate, and possibly fewer (if the while-loop terminates before $|\A_t| = 1$).    
\end{itemize}
\end{remark}

\begin{remark}[Generalization and improvement over existing arm gap]
    \label{rem: gap generalization}
    Our gap definitions were developed with the view of ensuring that we can solve essentially all solvable instances, and we will establish results of this type in Section~\ref{sec: solvable}.  Achieving this goal required several subtle choices in our gap definition, including the parameter $c$ and the optimization over $S$.
    We generalize existing gaps for the QMAB problem in the sense that those are recovered by considering $c \to \infty$, $S = \A$, and using only lower quantile functions. In Appendix~\ref{sec: appendix gap definition generalization}, we provide more details about these choices and give an instance where the gap is positive under our definition but is zero using existing definitions.
\end{remark}

\begin{remark}[Further improvement]
    \label{rem: further improvement}
     Due to the assumption that the $q$-quantile of each arm is in $[0, \lambda]$, we can improve our gap definition by replacing the terms $Q^+_{(\cdot)}(q - \Delta)$ and  $Q_{(\cdot)}(q + \Delta)$
    with $\max\big\{0, Q^+_{(\cdot)}(q - \Delta)\big\}$
    and $\min\big\{\lambda, Q_{(\cdot)}(q + \Delta)\big\}$ respectively.  
    We adopt Definition~\ref{def: our gap} to avoid further complicating the gap definition and subsequent analysis, but we will provide detailed discussion of this modified gap in Appendix~\ref{sec: appendix potential improvement}.
\end{remark}

Having defined the arm gaps, we now state an upper bound on the total number of arm pulls by Algorithm~\ref{alg: main}.

\begin{theorem}[Upper bound]
\label{theorem: upper bound}
   Fix an instance $\nu \in \cE$, and suppose Algorithm~\ref{alg: main} is run with input $(\A, \lambda, \epsilon, q, \delta)$ and parameter $c \ge 1$.
    Let $\A_{\epsilon}(\nu) $ be as defined in~\eqref{def: performance def} and let the gap $\Delta_{k} = \Delta_{k}(\nu, \lambda, \epsilon, c, q)$ be as defined in Definition~\ref{def: our gap} for each arm $k \in \A$.
    Under Event~$E$ as defined in Lemma~\ref{lem: good events},
    the total number of arm pulls is upper bounded~by    
    \begin{equation}
    \label{eq: upper bound}
        O
        \left(
        \left(
        \sum_{ k \in \A }
        \dfrac{1}{ \max\big( \Delta_{k},  \Delta  \big)^2} \cdot 
        \left( 
         \log \left(\frac{1}{ \delta } \right) +
         \log \left(\frac{1}{ \max\big( \Delta_{k},  \Delta  \big)}\right) +
         \log \left(\frac{c \lambda K}{ \epsilon } \right)    
        \right)
        \right)
        \right),
    \end{equation}
    where $\Delta  =  \Delta(\nu, \lambda, \epsilon, c, q) = \max\limits_{a \in \A_{\epsilon}(\nu)} \Delta_{a}$.
\end{theorem}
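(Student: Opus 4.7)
The plan is to decompose the total arm-pull count into per-arm, per-round contributions, bound each contribution using Lemma~\ref{lem: good events}, bound the number of rounds $T_k$ each arm $k$ remains active in terms of $\Delta_k$ and $\Delta$, and finally exploit the geometric decay of $\Delta^{(t)}=2^{-t+1}\min(q,1-q)$ to collapse the sum over $t$ into its final term. Every arm pull in round $t$ is used by a call to $\mathtt{QuantEst}$ with accuracy parameter $\Delta^{(t)}/2$ and confidence parameter $\Theta(\delta\Delta^{(t)}/|\A_t|)$, so Lemma~\ref{lem: good events} immediately gives a per-round cost for arm $k$ of
\begin{equation*}
O\!\left(\frac{1}{(\Delta^{(t)})^2}\Bigl(\log(1/\delta)+\log(1/\Delta^{(t)})+\log(c\lambda K/\epsilon)\Bigr)\right).
\end{equation*}

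The heart of the argument is to show $\Delta^{(T_k)}=\Omega(\max(\Delta_k,\Delta))$ for every arm $k$. For a non-satisfying arm $k\notin\A_\epsilon$, I will prove an elimination lemma (this is the Lemma~\ref{lem: elim suboptimal} already previewed in the remark on gap intuition): once $\Delta^{(t)}$ drops below a constant multiple of $\Delta_k$, the defining inequality of $\Delta_k$ in~\eqref{eq: our gap}, together with the anytime bounds~\eqref{eq: lower approx quantile anytime bound}--\eqref{eq: upper approx quantile anytime bound} of Lemma~\ref{lem: quantile anytime bound}, forces $\mathrm{UCB}_t(k)\le \mathrm{LCB}_t(a)$ for some competing arm $a$, so $k$ is dropped from $\A_{t+1}$ by Line~\ref{line: active arm}. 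For a satisfying arm $k\in\A_\epsilon$, I will prove a termination lemma (Lemma~\ref{lem: termination} in the preview): once $\Delta^{(t)}$ falls below a constant multiple of $\Delta_k^{(S)}$, where $S$ is the active set at round $t$, the inequality defining $\Delta_k^{(S)}$ in~\eqref{eq: Delta k^S} combined with Lemma~\ref{lem: quantile anytime bound} yields $\mathrm{LCB}_t(k)\ge \max_{a\in S\setminus\{k\}}\mathrm{UCB}_t(a)-(c+1)\tilde\epsilon$, which triggers the exit condition of Line~\ref{line: start while loop}. The optimization over $S\supseteq\A_\epsilon$ in Definition~\ref{def: our gap} is precisely what lets us invoke the most favorable active subset that remains after the genuinely suboptimal arms have been eliminated; the monotonicity of $\max_{a\in\A_t}\mathrm{LCB}_t(a)$ in $t$ (Lemma~\ref{lem: max LCB increasing}, cf.\ Remark~\ref{rem: LCB non decreasing}) ensures that once an arm leaves $\A_t$ it stays out, so the $S$ witnessing the supremum is consistent over the relevant rounds.

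Given these two structural bounds on $T_k$, a geometric-series calculation finishes the argument: since $\Delta^{(t)}$ halves each round, $\sum_{t=1}^{T_k}(\Delta^{(t)})^{-2}\le \tfrac{4}{3}(\Delta^{(T_k)})^{-2}$, and the logarithmic factor is within a constant of its value at $t=T_k$. Substituting $\Delta^{(T_k)}=\Omega(\max(\Delta_k,\Delta))$ and summing over $k\in\A$ produces exactly~\eqref{eq: upper bound}.

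The main obstacle is Step~2, and within it the termination argument for satisfying arms. Definition~\ref{def: our gap} is intricate: it involves a supremum over $\Delta$, an optimization over subsets $S\supseteq\A_\epsilon$, and both the lower and upper quantile functions $Q_k$ and $Q_k^+$. The slack constant $c$ in the while-loop threshold must simultaneously absorb the $\tilde\epsilon$ discretization error on both sides of the confidence interval and leave a strict gap to trigger termination. Reconciling these pieces with the confidence-interval bookkeeping of Algorithm~\ref{alg: main} is where essentially all of the technical work lies; once the elimination and termination lemmas are in hand, the geometric summation and the final bound are routine.
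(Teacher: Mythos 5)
Your proposal is correct and follows essentially the same route as the paper: per-round per-arm cost from Lemma~\ref{lem: good events}, the elimination lemma (Lemma~\ref{lem: elim suboptimal}) and termination lemma (Lemma~\ref{lem: termination}) driven by the anytime bounds of Lemma~\ref{lem: quantile anytime bound} and the monotonicity of $\max_a \mathrm{LCB}_t(a)$ (Lemma~\ref{lem: max LCB increasing}), then a geometric summation over rounds. One minor imprecision: in the termination step, $S$ is not the active set itself but the maximizer of $\Delta_k^{(S)}$ over $\A_\epsilon \subseteq S \subseteq \A$; the paper then uses the elimination lemma to show $\A_{t+1} \subseteq S$ (and handles separately the case where the arm attaining $\max_{a\in\A_\epsilon}\Delta_a$ has itself been eliminated), so the content you correctly flag as the technical heart is exactly this containment argument.
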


Combining Lemma~\ref{lem: good events}, Theorem~\ref{thm: correctness}, and Theorem~\ref{theorem: upper bound}, we obtain
the high-probability correctness for instances with positive gap.
\begin{corollary}
\label{cor: combined guarantee}
    Fix an instance $\nu \in \cE$ and suppose
    $\Delta =  \Delta(\nu, \lambda, \epsilon, c, q) $ as defined in Theorem~\ref{theorem: upper bound} is positive. Then, with probability at least $1-\delta$, Algorithm~\ref{alg: main} returns an arm~$\hat{k}$ satisfying~\eqref{def: performance def} and uses a total number of arm pulls satisfying~\eqref{eq: upper bound}.
\end{corollary}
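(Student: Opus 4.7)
The plan is to assemble the corollary as a direct consequence of the three preceding results: Lemma~\ref{lem: good events} (probability of the good event), Theorem~\ref{thm: correctness} (correctness under that event), and Theorem~\ref{theorem: upper bound} (arm-pull complexity under that event). No new estimates are required; the only subtle point to reconcile is that Theorem~\ref{thm: correctness} is stated conditionally on termination, while Theorem~\ref{theorem: upper bound} a priori only upper bounds the number of arm pulls.

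First I would invoke Lemma~\ref{lem: good events} to assert that $\PP_{\nu}[E] \ge 1 - \delta$, and then argue conditionally on $E$ for the remainder. Under the hypothesis $\Delta > 0$, I note that for every arm $k \in \A$ we have $\max(\Delta_{k}, \Delta) \ge \Delta > 0$, so the bound in \eqref{eq: upper bound} is a finite quantity. Theorem~\ref{theorem: upper bound} then gives two immediate consequences on $E$: (i) the total number of arm pulls is at most \eqref{eq: upper bound}, and (ii) Algorithm~\ref{alg: main} necessarily terminates, since a non-terminating execution would entail infinitely many arm pulls, contradicting the finite upper bound.

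Having established termination on $E$, I would then apply Theorem~\ref{thm: correctness}, which guarantees that under Event $E$ a terminating execution returns some $\hat{k}$ satisfying~\eqref{def: performance def}. Intersecting, on the event $E$ (of probability at least $1-\delta$) the algorithm terminates, outputs an arm in $\Ac_{\epsilon}(\nu)$, and uses a number of arm pulls bounded by~\eqref{eq: upper bound}, which is precisely the joint high-probability guarantee claimed.

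The only conceptual step is the observation that finiteness of the complexity bound forces termination, and this is exactly why the hypothesis is placed on $\Delta$ (the maximum over satisfying arms) rather than only on the $\Delta_k$ with $k \notin \A_{\epsilon}(\nu)$: without a positive gap from the satisfying side, the termination condition in Line~\ref{line: start while loop} could in principle fail to trigger for arbitrarily many rounds, and one would not be entitled to apply Theorem~\ref{thm: correctness}. Beyond this bookkeeping, no real obstacle arises, and the corollary follows by assembly.
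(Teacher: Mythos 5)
Your proposal is correct and matches the paper's own treatment: the corollary is stated immediately after Theorem~\ref{theorem: upper bound} as a direct combination of Lemma~\ref{lem: good events}, Theorem~\ref{thm: correctness}, and Theorem~\ref{theorem: upper bound}. Your extra observation that the finite upper bound forces termination under Event $E$ (which is needed to invoke Theorem~\ref{thm: correctness}) is exactly the implicit bookkeeping step the paper omits, so your argument is a complete and faithful rendering of the intended proof.
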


We will provide near-matching lower bounds in the next section, and an impossibility result for the instances with zero gap in Section~\ref{sec: solvable}. The proof details of Theorem~\ref{theorem: upper bound} are given in Appendix~\ref{sec: appendix upper bound}, and we provide a sketch here.

\begin{proof}[Proof outline for Theorem~\ref{theorem: upper bound}]
    Under Event $E$, the while-loop of Algorithm~\ref{alg: main} terminates
    when the round index~$t$ is large enough to satisfy
    $\Delta^{(t)} \le \frac{1}{2} \Delta $,
    which happens when $t = \log_2 (1/\Delta ) + \Theta(1)$ since $\Delta^{(t)} = 2^{-t+1} \dot \min(q,1-q)$.
    Summing through the number of arm pulls $\widetilde{O}\big( \left(\Delta^{(t)}\right)^{-2}\big)$ given in~\eqref{eq: QuantEst arm pulls} for $t = 1, \ldots, \log_2 (1/\Delta ) + \Theta(1) $
    yields the upper bound $\widetilde{O}\left( \Delta ^{-2}\right)$ for each arm $k \in \A$.
    However, it is also possible that some arms are eliminated before the while-loop terminates.
    Specifically, each non-satisfying arm $k \not\in \A_{\epsilon}(\nu)$ is eliminated when the index~$t$ satisfies $\Delta^{(t)} \le \frac{1}{2} \Delta_{k}$,  which yields the upper bound $\widetilde{O}\left(\Delta_{k}^{-2}\right)$. Taking the minimum between these two gives~\eqref{eq: upper bound}.
\end{proof}

\section{Lower Bounds}
\label{sec: lower bound}

In this section, we provide two lower bounds on the number of arm pulls.  In Section~\ref{sec: lower bound unquantized}, we provide a near-matching worst-case lower bound 
$\Omega \big( \sum_{k\in\A} \Delta_k^{-2} \log(\delta^{-1})  \big)$
for instances with positive gap and $\epsilon$ is small enough such that $\A_{\epsilon}(\nu)= \{k^*\}$.
This lower bounds holds even in the absence of communication constraints.
 In Section~\ref{sec: log lambda epsilon dependence}, we address the $\log(\lambda/\epsilon)$ dependence in the upper bound by showing that $\Omega(\log(\lambda/\epsilon))$ arm pulls are needed for any $(\epsilon, \delta)$-reliable algorithm when 1-bit threshold queries are used; in particular, targeting $\epsilon = 0$ is infeasible without further assumptions.

\subsection{Lower Bound for the Unquantized Variant}
\label{sec: lower bound unquantized}
We present a worst-case lower bound on the expected number of arm pulls for the setup with no communication constraint.
The lower bound is based on a bad instance adapted from \cite[Theorem 4]{nikolakakis2021quantile}, which is for the quantile bandit problem of identifying the unique optimal arm $k^*$.
Specifically, for instances with satisfying arm set $\A_{\epsilon}(\nu) = \{k^*\}$,
the only correct output in both problem formulations is $k^*$.
By choosing $\epsilon$ to be sufficiently small, the hard instance in their problem formulation (which does not allow an $\epsilon$ relaxation) can be adapted to be a hard instance in our problem formulation.

\begin{theorem}[Worst-case lower bound]
\label{thm: lower bound unquantized}
Fix  $q, \delta \in (0, 1)$ and $\lambda \ge 1$.
There exists a quantile bandit instance~$\nu \in \cE$ with a unique best arm $k^*$ such that for any $\epsilon > 0$ satisfying
\begin{equation}
\label{eq: epsilon condition}
    \epsilon  \le
    \frac{1}{2} \big(Q_{k^*}(q) - \max \limits_{k \ne k^*} Q_k(q) \big)
\end{equation}
and any $(\epsilon, \delta)$-reliable algorithm,
 the number of arm pulls $\tau$ satisfies
\begin{equation}
\label{eq: lower bound unquantized}
\E[\tau]
\ge
\Omega \bigg(
\sum_{k=1}^K
 \frac{1}{\Delta_k^2}
    \log \left( \frac{1}{\delta} \right)
    \bigg),
\end{equation}
where $\Delta_{k}(\nu, \epsilon, q) \coloneqq \lim\limits_{c \to \infty} \Delta_{k}(\nu, \lambda, \epsilon, c, q)$ is the gap defined in Definition~\ref{def: our gap} with $c \to \infty$ (see~\eqref{eq: gap k infinite c} for the explicit form).
\end{theorem}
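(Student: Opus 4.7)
\textbf{Proof proposal for Theorem~\ref{thm: lower bound unquantized}.}
The plan is to reduce the $(\epsilon,\delta)$-reliable QMAB problem to the exact quantile best-arm identification problem studied in~\cite{nikolakakis2021quantile}, and then invoke (a mild adaptation of) their lower bound. The key observation is that the condition~\eqref{eq: epsilon condition} forces the satisfying set $\A_\epsilon(\nu)$ to collapse to $\{k^*\}$: for any $k\neq k^*$, the inequality $\epsilon\le \tfrac{1}{2}(Q_{k^*}(q)-\max_{k'\neq k^*}Q_{k'}(q))$ gives $Q_{k^*}(q)-\epsilon > \max_{k'\neq k^*}Q_{k'}(q)\ge Q_k(q)$, so $k\notin \A_\epsilon(\nu)$. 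Consequently any $(\epsilon,\delta)$-reliable algorithm must output $k^*$ with probability at least $1-\delta$ on this instance (on the event that it terminates), i.e., it solves the exact identification problem. On non-terminating executions, $\E[\tau]=\infty$ and the bound is vacuous.

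Next, I would import the hard instance construction of \cite[Theorem 4]{nikolakakis2021quantile}: a family of Bernoulli-like reward distributions parameterized so that each arm's $q$-quantile is tunable, with one distinguished arm $k^*$ and nearby alternatives, together with a family of $K-1$ perturbed instances $\nu^{(k)}$ (for $k\neq k^*$) in which arm $k$ becomes the unique optimal arm while other arms' marginals change only slightly. This ensures that any correct algorithm on $\nu$ must distinguish $\nu$ from every $\nu^{(k)}$. Applying the change-of-measure transportation lemma (Kaufmann--Capp\'e--Garivier style), one obtains
\begin{equation}
\E_\nu[N_k(\tau)]\cdot \mathrm{KL}(F_k\Vert F_k^{(k)}) \;\ge\; \mathrm{kl}(\delta,1-\delta) \;=\; \Omega\bigl(\log(1/\delta)\bigr),
\end{equation}
where $N_k(\tau)$ is the number of times arm $k$ is pulled and $F_k^{(k)}$ is the perturbed CDF. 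Choosing the perturbations at scale $\Delta_k$ makes $\mathrm{KL}(F_k\Vert F_k^{(k)}) = O(\Delta_k^2)$, so $\E_\nu[N_k(\tau)]=\Omega(\Delta_k^{-2}\log(1/\delta))$; summing over $k$ yields~\eqref{eq: lower bound unquantized}. Since the learner is assumed to see full (unquantized) rewards in this lower bound, the argument is already a fortiori valid for the 1-bit setting.

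The remaining piece is to verify that the gap $\Delta_k=\lim_{c\to\infty}\Delta_k(\nu,\lambda,\epsilon,c,q)$ coincides with the quantile gap appearing in \cite{nikolakakis2021quantile} (up to constants). As $c\to\infty$, the spacing $\tilde\epsilon=\lambda/\lceil (c+1)\lambda/\epsilon\rceil\to 0$; for $k\notin\A_\epsilon$ the defining condition in~\eqref{eq: our gap} reduces to $\sup\{\Delta:Q_k(q+\Delta)\le\max_a Q_a^+(q-\Delta)\}$, which is precisely (the two-sided version of) the standard QMAB gap. For the unique satisfying arm $k^*$ one uses~\eqref{eq: lower bound k* arm gap}: $\Delta_{k^*}\ge\min_{a\neq k^*}\Delta_a$, matching the way $\Delta_{k^*}$ is treated in the lower bound of \cite{nikolakakis2021quantile} by reusing the smallest alternative's perturbation scale.

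The main obstacle I expect is not the information-theoretic core (which is essentially~\cite{nikolakakis2021quantile}), but rather the bookkeeping required to show that our more intricate gap definition (optimization over $S$ and finite $c$) specializes to their gap in the limit $c\to\infty$, and that the perturbation construction in~\cite{nikolakakis2021quantile} can be built so that every $\nu^{(k)}$ still lies in $\cE=\gP(q,\lambda)^K$ (i.e., all perturbed $q$-quantiles remain in $[0,\lambda]$), which is harmless since $\lambda\ge 1$ leaves ample slack. A secondary subtlety is handling the possibility of non-terminating runs under the $(\epsilon,\delta)$-reliable relaxation; this is absorbed by splitting the analysis on $\{\tau<\infty\}$ and using the correctness guarantee~\eqref{def: reliable} there.
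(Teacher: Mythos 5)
Your proposal is correct and takes essentially the same route as the paper: observe that condition~\eqref{eq: epsilon condition} forces $\A_\epsilon(\nu)=\{k^*\}$ so that $(\epsilon,\delta)$-reliability becomes exact identification, then adapt the hard-instance family and change-of-measure argument from \cite[Theorem~4]{nikolakakis2021quantile} (the paper uses the Dirac-plus-uniform mixture $g_w(x)=w\delta(x)+1-w$ at parameters $1/3$, $1/3-\gamma$, $1/3-2\gamma$), and finally relate the perturbation scale $\gamma$ to the limiting gap $\Delta_k$ from~\eqref{eq: gap k infinite c} to convert $\Omega(\gamma^{-2}\log(1/\delta))$ into $\Omega(\Delta_k^{-2}\log(1/\delta))$. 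One small nuance worth noting: in the actual construction all suboptimal arms share the same CDF $g_{1/3}$ and hence identical gaps, so ``perturbation at scale $\Delta_k$'' is really ``at a single scale $\gamma$'' with $\gamma=\Theta(\Delta_k)$ uniformly; this does not affect the validity of your argument.
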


\begin{proof}
    See Appendix~\ref{sec: appendix unquantized lower bound}.
\end{proof}

The only difference in the upper bound~\eqref{eq: upper bound} and lower bound is that the lower bound only contains the log factor $\log(\delta^{-1})$ rather than the sum of three log factors, and so our upper bound matches the dependence on $\Delta_k$ of the lower bound to within a logarithmic factor.  We note that if $\delta \le \max(\Delta_k,\Delta)^{\Theta(1)}$ and $\delta \le \big( \frac{\epsilon}{c\lambda K} \big)^{\Theta(1)}$ then the sum of three log terms in \eqref{eq: upper bound} simplifies to $O\big( \log(\delta^{-1}) \big)$, so in this ``low error probability'' regime we in fact get matching scaling laws in the upper and lower bound.

\subsection{$\Omega(\log(\lambda/\epsilon))$ Dependence Under Threshold Query Model}
\label{sec: log lambda epsilon dependence}
In this section, we show that $\Omega(\log(\lambda/\epsilon))$ arm pulls is needed for any $(\epsilon, \delta)$-reliable algorithm in the case that only threshold queries are allowed.
That is, the side information sent by the learner to the agent is always some threshold query of the form ``Is $r_{a_t, t} \le \gamma_t$?'', and the learner receives the 1-bit comparison feedback $\mathbf{1}(r_{a_t, t} \le \gamma_t)$.
This is a common 1-bit quantization method in practice and is also the one used in Algorithm~\ref{alg: main}, though it would also be of interest to determine whether using other 1-bit quantization methods can help.

\begin{theorem}[$\Omega(\log(\lambda/\epsilon))$ dependence]
\label{thm: log lambda/epsilon dependence}
Fix $\lambda \ge \epsilon > 0$, and $q \in (0, 1)$, and $\delta \in (0, 0.5)$.
Under the threshold query model, there exists a two-arm quantile bandit instance~$\nu$ with deterministic rewards such that any $(\epsilon, \delta)$-reliable algorithm requires $\Omega(\log(\lambda/ \epsilon))$ arm pulls.
\end{theorem}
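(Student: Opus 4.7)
The plan is to construct a family of confusing deterministic instance pairs and, via a combinatorial decision-tree argument combined with Yao's minimax principle, force any $(\epsilon, \delta)$-reliable algorithm to make $\Omega(\log(\lambda/\epsilon))$ threshold queries on some instance.

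For the construction, let $N = \lfloor \lambda/(4\epsilon) \rfloor$ and $v_i = 2i\epsilon$ for $i = 1, \dots, N$. For each $i$, I would define two deterministic two-arm instances $\nu_i^+ = (v_i + \epsilon,\, v_i - \epsilon)$ and $\nu_i^- = (v_i - \epsilon,\, v_i + \epsilon)$, with coordinates giving the (deterministic) rewards of arms $1$ and $2$. All values lie in $[0, \lambda]$, and since the arms differ by $2\epsilon$ in each instance, the unique $\epsilon$-optimal arm is $1$ in $\nu_i^+$ and $2$ in $\nu_i^-$, so any $(\epsilon, \delta)$-reliable algorithm must output the correct arm on each of these $2N$ instances with probability strictly greater than $1/2$.

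I first handle deterministic algorithms, viewing each as a binary decision tree whose internal nodes carry a threshold query $(a, \gamma)$ and whose leaves are labeled by an arm. The key observation is that a single query $(a, \gamma)$ yields identical feedback on $\nu_i^+$ and $\nu_i^-$ unless $\gamma \in [v_i - \epsilon, v_i + \epsilon)$, since otherwise both candidate values $v_i \pm \epsilon$ lie on the same side of $\gamma$. Because the $N$ intervals $\{[v_i - \epsilon, v_i + \epsilon)\}_{i=1}^N$ are pairwise disjoint, each internal node can serve as the divergence point for at most one pair. A pair $(\nu_i^+, \nu_i^-)$ is correctly labeled iff both instances are correctly classified, forcing them into distinct leaves with different labels and hence into some separating internal node $u_i$ along their shared root-to-leaf prefix. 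By the $(\epsilon, \delta)$ guarantee with $\delta < 1/2$, at most $2N\delta$ instances are mislabeled, so at least $M := N(1 - 2\delta)$ pairs are correctly labeled and contribute $M$ distinct internal nodes. Since at most $2^k$ internal nodes sit at depth $k$, sorting by depth gives $\mathrm{depth}(u_j) \ge \lfloor \log_2 j \rfloor$ and hence $\sum_j \mathrm{depth}(u_j) = \Omega(M \log M)$ by Stirling. Combined with the fact that the depth of $\nu_i^\pm$ is at least $\mathrm{depth}(u_i) + 1$ for each correctly labeled pair, the expected depth under the uniform prior $\mu$ on $\{\nu_i^\pm\}_{i=1}^N$ satisfies $\mathbb{E}_\mu[d(\nu)] = \Omega((1-2\delta)\log N) = \Omega(\log(\lambda/\epsilon))$.

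To lift this to randomized algorithms, I would invoke Yao's minimax principle: any randomized $(\epsilon, \delta)$-reliable algorithm $\pi$ satisfies $\max_\nu \mathbb{E}_\pi[\tau_\nu] \ge \min_{\pi_\mathrm{det}} \mathbb{E}_{\nu \sim \mu}[\tau_\nu^{\pi_\mathrm{det}}]$, where the minimum is over deterministic algorithms with average error at most $\delta$ under $\mu$; the previous step bounds each such deterministic expected depth by $\Omega(\log(\lambda/\epsilon))$, and the instance in $\{\nu_i^\pm\}$ attaining this maximum is the hard instance claimed by the theorem. The main obstacle is that deterministic rewards preclude KL/Le Cam tools (any two distinct instances are either perfectly distinguishable by one query or identical), so the lower bound must be built combinatorially from the disjoint-interval structure of threshold queries; the $\delta < 1/2$ slack enters only quantitatively through the $N(1-2\delta)$ factor, and the subtle step is passing from worst-case leaf depth (which suffices for deterministic algorithms) to expected depth under $\mu$ (needed for Yao), which is exactly where the Kraft-style estimate $\sum_j \mathrm{depth}(u_j) = \Omega(M \log M)$ is required.
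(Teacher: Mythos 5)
Your proof is correct in its core combinatorial content, but it takes a genuinely different route from the paper. The paper first \emph{relaxes} the 1-bit threshold model to a 2-bit model in which a single query $X_t$ returns $(\mathbf{1}(r_1\le X_t),\mathbf{1}(r_2\le X_t))$; it then observes that a reliable algorithm must keep querying until the two bits disagree, which pins $X_t$ to one of $\Theta(\lambda/\epsilon)$ lattice points, and concludes with a binary-search / information-counting argument ("$\log_2|X|$ bits to learn, $2$ bits per query"). Your argument stays in the 1-bit model, builds $2N$ explicit deterministic instances $\nu_i^\pm$ with disjoint distinguishing threshold intervals, and uses a decision-tree/Kraft-type depth bound plus a Yao-style averaging step. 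Both proofs hinge on the same structural fact — the distinguishing intervals are pairwise disjoint so a single threshold query can separate at most one confusable pair — but yours is more explicit and more careful about randomization (the paper's final "worst case is $\Omega(\log|X|)$" is informal about how randomization is handled), while the paper's 2-bit relaxation lets it dispense with the leaf-counting machinery.

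One step in your write-up is imprecise and deserves tightening: the inequality $\max_\nu\mathbb{E}_\pi[\tau_\nu]\ge\min_{\pi_{\mathrm{det}}}\mathbb{E}_{\nu\sim\mu}[\tau^{\pi_{\mathrm{det}}}_\nu]$ "over deterministic algorithms with average error at most $\delta$ under $\mu$" is not quite Yao's principle as stated. When you decompose a randomized $(\epsilon,\delta)$-reliable $\pi$ into a distribution over deterministic trees, the individual trees are only guaranteed to have error at most $\delta$ \emph{in expectation} over the mixture; some trees in the support may have $\mu$-average error far above $\delta$ (with small running time), so you cannot restrict the minimum to the low-error trees. The fix is straightforward given what you already have: your counting argument yields, for every deterministic tree $\pi_{\mathrm{det}}$ with $\mu$-average error $e$, a bound of the form $\mathbb{E}_\mu[\tau^{\pi_{\mathrm{det}}}]\ge C\,(1-2e)^+\log N$; since $(1-2e)^+$ is convex in $e$ and $\mathbb{E}_{\pi_{\mathrm{det}}\sim\pi}[e(\pi_{\mathrm{det}})]\le\delta$, Jensen gives $\mathbb{E}_{\nu\sim\mu}\mathbb{E}_\pi[\tau_\nu]=\mathbb{E}_{\pi_{\mathrm{det}}}\mathbb{E}_\mu[\tau^{\pi_{\mathrm{det}}}]\ge C(1-2\delta)\log N$, which is the bound you want (and matches the $(1-2\delta)$ degradation implicit in the paper's argument). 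With this patch the proof is complete.
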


\begin{proof}
    The idea is that if the two deterministic arms in $[0,\lambda]$ are separated by $2\epsilon$, then a binary search over $\Theta(\lambda/\epsilon)$ possible choices is needed just to locate them.  See Appendix~\ref{sec: appendix log lambda epsilon dependence} for the details.
\end{proof}

 While our upper and lower bounds match to within at most logarithmic factors under mild conditions, we leave it open as to 
(i)  whether the dependence on $\Delta_k$ can be improved in general (e.g., to doubly-logarithmic as in the ones in unquantized quantile BAI~\cite{nikolakakis2021quantile, howard2022sequential}, and (ii) whether there exist regimes in which the \emph{joint} dependence on the gaps and $(\lambda,\epsilon)$ can be improved.

\section{Solvable Instances} 
\label{sec: solvable}
In Sections~\ref{sec: upper bound} and~\ref{sec: lower bound unquantized}, we provided nearly matching upper and lower bounds for instances with positive gap.
In this section, we study the ``(un)solvabality'' of bandit instances with zero gap, and show that essentially all bandit instances that are ``solvable'' have positive gap, as long as parameter~$c$ is large enough (see Remark~\ref{rem: picking large enough c}).
To formalize this idea, we define the following class of bandit instances.

\begin{definition}[Solvable instances]
\label{def:solvable}
     Let $\A, \epsilon,$ and $q$ be fixed. 
     We say that an instance $\nu \in \cE$ is $\epsilon$-\emph{solvable} if for each $\delta \in (0, 1)$, there exists an algorithm that is $(\epsilon, \delta)$-reliable and it holds under instance $\nu$ that\footnote{We could require that $\PP_{\nu}[\tau < \infty] = 1$ in this case and the subsequent analysis and conclusions would be essentially unchanged.  Recall also that $\PP_{\nu}[\cdot]$ denotes probability under instance $\nu$.}
    \begin{equation}
        \PP_{\nu}[\tau < \infty \cap \hat{k}\in\Ac_{\epsilon}]\ge1-\delta.
    \end{equation}
    If no such algorithm exists, we say that $\nu$ is $\epsilon$-\emph{unsolvable}.
\end{definition}

\begin{remark}
\label{rem: solvable inclusion}
    Fix $0 < \epsilon_1 \le \epsilon_2$. If an instance $\nu$ is
    $\epsilon_1$-solvable, then it is $\epsilon_2$-solvable.
    This follows directly from $\A_{\epsilon_1}(\nu) \subseteq \A_{\epsilon_2}(\nu)$.
\end{remark}

From Corollary~\ref{cor: combined guarantee}, we deduce that any instance with a positive gap is solvable. 
 
\begin{corollary}[Positive gap is solvable]
\label{cor: positive gap is solvable}
    Let $\A, \lambda, \epsilon, q,$ and $c$ be fixed. 
    Suppose an instance $\nu$ satisfies
    $ \Delta > 0$, where $\Delta =  \Delta(\nu, \lambda, \epsilon, c, q) $ is as defined in Theorem~\ref{theorem: upper bound}.
    Then $\nu$ is $\epsilon$-solvable. 
\end{corollary}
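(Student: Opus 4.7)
The plan is to verify this by direct invocation of Algorithm~\ref{alg: main} together with the guarantees already established. The statement has two requirements: for every $\delta \in (0,1)$, one must produce a single algorithm that is $(\epsilon,\delta)$-reliable and that, on the specific instance $\nu$, terminates with probability at least $1-\delta$ outputting an arm in $\Ac_{\epsilon}(\nu)$. Algorithm~\ref{alg: main} run with input $(\Ac, \lambda, \epsilon, q, \delta)$ and parameter $c$ will be the witness.

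First, I would note that the $(\epsilon,\delta)$-reliability half of Definition~\ref{def:solvable} is automatic: Corollary~\ref{cor: main alg reliable} already states that Algorithm~\ref{alg: main} is $(\epsilon,\delta)$-reliable. This holds uniformly over all instances in $\cE$, so in particular for $\nu$ and for any $\delta$ of interest.

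Next I would invoke Corollary~\ref{cor: combined guarantee}, which is precisely tailored for the hypothesis $\Delta(\nu, \lambda, \epsilon, c, q) > 0$. Under this hypothesis, with probability at least $1-\delta$, Algorithm~\ref{alg: main} returns an arm $\hat{k}$ satisfying~\eqref{def: performance def} while using a number of arm pulls upper bounded by~\eqref{eq: upper bound}. The small observation to make explicit is that the bound~\eqref{eq: upper bound} is finite whenever $\Delta > 0$ (since each $\max(\Delta_k, \Delta) \ge \Delta > 0$ and $\lambda, \epsilon, c, K, \delta$ are all positive and finite), and therefore $\tau < \infty$ on this high-probability event. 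Hence
\begin{equation*}
    \PP_{\nu}[\tau < \infty \cap \hat{k} \in \Ac_{\epsilon}(\nu)] \ge 1 - \delta,
\end{equation*}
which is the remaining condition in Definition~\ref{def:solvable}.

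There is essentially no obstacle here: the corollary is a bookkeeping consequence of Corollary~\ref{cor: combined guarantee} combined with the definition of $\epsilon$-solvability. The only subtlety worth flagging is the conversion from ``the sample complexity bound~\eqref{eq: upper bound} holds with probability $\ge 1 - \delta$'' to the almost-sure finiteness of $\tau$ on that event; this follows because the displayed bound depends only on the deterministic quantities $\nu, \lambda, \epsilon, c, q, \delta, K$ and is finite under the hypothesis $\Delta > 0$.
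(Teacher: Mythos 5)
Your proof is correct and matches the paper's own (one-line) justification: the paper also deduces Corollary~\ref{cor: positive gap is solvable} directly from Corollary~\ref{cor: combined guarantee} together with Corollary~\ref{cor: main alg reliable} and Definition~\ref{def:solvable}. Your added observation that the bound in~\eqref{eq: upper bound} is finite when $\Delta > 0$, which converts the high-probability arm-pull bound into $\tau < \infty$ on that event, is the right bookkeeping step and is implicit in the paper's statement.
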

The main result of this section is that the reverse inclusion nearly holds, in the following sense.
 \begin{theorem}[Zero gap is unsolvable]
 \label{thm: zero gap is unsolvable}
    Let $\lambda, \epsilon, c,$ and $q$ be fixed, 
    and let $\tilde{\epsilon} = \tilde{\epsilon}(\lambda, \epsilon, c)$ be as defined in 
    Algorithm~\ref{alg: main}.
    Suppose an instance $\nu \in \cE$ satisfies $\Delta(\nu, \lambda, \epsilon, c, q) = 0$.
    If we assume for $\nu$ that there exists some sufficiently small $\eta_0 > 0$ such that 
    $0 \le Q_k^+(q-\eta_0) \le Q_k(q+\eta_0) \le \lambda$, then $\nu$ is $c\tilde{\epsilon}$-unsolvable.
 \end{theorem}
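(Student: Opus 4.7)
The plan is to prove unsolvability by contradiction via a change-of-measure argument. Suppose some $(c\tilde\epsilon,\delta)$-reliable algorithm $\pi$ satisfies $\PP_\nu[\tau<\infty\cap\hat{k}\in\A^*]\ge 1-\delta$ with $\A^*\coloneqq\A_{c\tilde\epsilon}(\nu)$. For each $k\in\A^*$ and each horizon $M$, I will build a perturbed instance $\nu^{(k,\Delta)}\in\cE$ (indexed by a small $\Delta>0$) such that (i) $k\notin\A_{c\tilde\epsilon}(\nu^{(k,\Delta)})$, and (ii) the length-$M$ algorithm-transcript distributions under $\nu$ and $\nu^{(k,\Delta)}$ have total variation $O(M\Delta)$. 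Setting $\Delta=\Theta(1/(KM))$ makes this TV at most $1/(2K)$, and $(c\tilde\epsilon,\delta)$-reliability on $\nu^{(k,\Delta)}$ gives $\PP_\nu[\hat{k}=k,\tau\le M]\le\delta+1/(2K)$; summing over the at most $K$ candidates in $\A^*$ and letting $M\to\infty$ yields $\PP_\nu[\hat{k}\in\A^*,\tau<\infty]\le K\delta+1/2$, contradicting solvability for any $\delta<1/(2(K+1))$.

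The structural input from the zero-gap hypothesis is as follows. Since $c\tilde\epsilon<\epsilon$ we have $\A^*\subseteq\A_\epsilon(\nu)$, so every $k\in\A^*$ satisfies $\Delta_k=0$; specialising $\Delta_k^{(S)}=0$ to $S=\A$ yields that for every $\Delta\in(0,\min(q,1-q)]$ there is some arm $a_\Delta\ne k$ with $Q_{a_\Delta}(q+\Delta)>Q_k^+(q-\Delta)+c\tilde\epsilon$. For $\Delta\le\eta_0$ I construct $\nu^{(k,\Delta)}$ by perturbing only $k$ and $a_\Delta$ via point-mass mixtures at the interval endpoints: take $F_k'(x)=(F_k(x)+\tfrac{\Delta}{1-q}\mathbf{1}\{x\ge0\})/(1+\tfrac{\Delta}{1-q})$ and $F_{a_\Delta}'(x)=(F_{a_\Delta}(x)+\tfrac{\Delta}{q}\mathbf{1}\{x\ge\lambda\})/(1+\tfrac{\Delta}{q})$. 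A direct quantile computation gives $Q_{k'}(q)\le\max(0,Q_k(q-\Delta))\le Q_k^+(q-\Delta)$ and $Q_{a_\Delta'}(q)=Q_{a_\Delta}(q+\Delta)$, both lying in $[0,\lambda]$ by the $\eta_0$-hypothesis, so $\nu^{(k,\Delta)}\in\cE$; combining with the confusion inequality yields $Q_{a_\Delta'}(q)-Q_{k'}(q)>c\tilde\epsilon$, hence $k\notin\A_{c\tilde\epsilon}(\nu^{(k,\Delta)})$. Each mixture has sup-norm CDF distance $O(\Delta)$, so the per-step Bernoulli-threshold TV is $O(\Delta)$ when $\pi$ pulls $k$ or $a_\Delta$ and $0$ otherwise; the TV chain rule then gives transcript TV $O(M\Delta)$, establishing (ii).

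The main obstacle I anticipate is the perturbation construction: it must simultaneously realise a quantile shift of order $\Delta$, keep the sup-norm CDF distance $O(\Delta)$, and stay inside $\cE$, even when $F_k$ has arbitrary jumps or plateaus at its $q$-quantile. Shifting $k$'s own quantile by a mass-mixing perturbation can fail when $F_k$ is locally rigid, for instance when it is deterministic at $Q_k(q)$; however, in that regime the plateau forces $Q_k^+(q-\Delta)\ge Q_k(q)$, so the $a_\Delta$-side perturbation alone already carries the gap inequality. The two-sided point-mass mixture above is calibrated precisely so that one of the two effects always delivers the required $c\tilde\epsilon$-separation, which is why the theorem's hypothesis isolates $Q_k^+$ on the lower side of $q$ and $Q_k$ on the upper side.
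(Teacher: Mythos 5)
Your proof is correct and takes essentially the same route as the paper: perturb the zero-gap instance via a small mass shift so that a satisfying arm $k$ exits $\A_{c\tilde\epsilon}$ while the arm distributions stay $O(\Delta)$-close in total variation, then derive a contradiction with $(c\tilde\epsilon,\delta)$-reliability via data processing. The paper packages the perturbation into a standalone lemma (moving $\eta$-mass to $Q_k(q-2\eta)$ and $Q_a(q+2\eta)$ rather than to the endpoints $0,\lambda$ as you do), pigeonholes onto a single distinguished arm $k_\nu$ with $\PP_\nu[\hat{k}=k_\nu,\tau\le\tau_{\max}]\ge(1-2\delta)/|\A|$, and invokes ``$\eta$ sufficiently small'' against a finite $|\A|\times\tau_{\max}$ reward matrix, whereas you union-bound over all $k\in\A^*$ and track the per-step TV explicitly with $\Delta=\Theta(1/(KM))$ and $M\to\infty$; both accountings yield the same $\delta\lesssim 1/K$ threshold, so the differences are cosmetic.
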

\begin{proof}
    See Appendix~\ref{sec: appendix solvable instance}.
\end{proof}

\begin{remark}[Removing the additional assumption]
\label{rem: remove additional assumption}
    The additional assumption involving $\eta_0$ 
    is mild; it is trivially satisfied by instances with all reward distributions supported on $[0, \lambda]$, and also holds significantly more generally since $\eta_0$ can be arbitrarily small.
    Moreover, in Appendix~\ref{sec: assumption removal}, we show that
    this assumption is unnecessary if we use the modified gap (see Remark~\ref{rem: further improvement}) instead of $\Delta$.
\end{remark}

\begin{remark}
\label{rem: picking large enough c}
 For each $\theta \in (0, 1)$, picking $c = \lceil 2\theta / (1-\theta)\rceil$ yields
\begin{equation}
       \nu \text{ is } \theta\epsilon\text{-solvable} 
    \implies
    \nu \text{ is } c\tilde{\epsilon}\text{-solvable}
    \implies
     \Delta(\nu, \lambda, \epsilon, c, q) > 0
     \implies 
     \nu \text{ is } \epsilon\text{-solvable},
\end{equation}
where the last two implications follow from Theorem~\ref{thm: zero gap is unsolvable} and Corollary~\ref{cor: positive gap is solvable}, and the first implication follows from Remark~\ref{rem: solvable inclusion} and the following inequality:
\begin{equation}
    c \tilde{\epsilon} 
    = 
    \frac{c \lambda}{ \left\lceil (c+1) \lambda/\epsilon \right\rceil}
    \ge
     \frac{c \lambda}{  (c+2) \lambda / \epsilon }
     = 
     \left(1 - \frac{2}{c+2} \right) \epsilon
     \ge
     \left(1 - \frac{2}{\frac{2\theta+2-2\theta}{1-\theta}} \right) \epsilon
     = \theta \epsilon.
\end{equation} 
Since $\theta$ can be arbitrarily close to $1$,
we have $\Delta(\nu, \lambda, \epsilon, c, q) > 0$
     for essentially all $\epsilon$-solvable instances by picking a sufficiently large $c$.
 \end{remark}

The proof of Theorem~\ref{thm: zero gap is unsolvable} will turn out to directly extend to a ``limiting'' version in which we replace $c\tilde{\epsilon}$ by $\lim\limits_{c \to \infty} c\tilde{\epsilon} = \epsilon$ and $\Delta(\nu, \lambda, \epsilon, c, q)$ by $\lim\limits_{c \to \infty} \Delta(\nu, \lambda, \epsilon, c, q)$, giving the following corollary.

 \begin{corollary}
 \label{cor: zero gap is unsolvable}
    Let $\lambda, \epsilon$, and $q$ be fixed.
    Let $\Delta_{k}(\nu, \epsilon, q)$ be the gap defined in Definition~\ref{def: our gap} with $c \to \infty$ (see~\eqref{eq: gap k infinite c} for the explicit form).    
    Suppose an instance $\nu \in \cE$ satisfies $\Delta(\nu, \epsilon, q) = \max\limits_{k \in \A_{\epsilon(\nu)}} \Delta_k(\nu, \epsilon, q) = 0$.
    If we assume for $\nu$ that there exists some sufficiently small $\eta_0 > 0$ such that 
    $0 \le Q_k^+(q-\eta_0) \le Q_k(q+\eta_0) \le \lambda$, then $\nu$ is $\epsilon$-unsolvable.
 \end{corollary}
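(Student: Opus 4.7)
The plan is to mimic the argument for Theorem~\ref{thm: zero gap is unsolvable} but pass to the $c\to\infty$ limit throughout. My first step is to observe that the gap $\Delta_k(\nu,\lambda,\epsilon,c,q)$ is monotone non-decreasing in $c$: increasing $c$ shrinks $\tilde\epsilon$ toward $0$, which relaxes the constraint $Q_k(q+\Delta)\le\max_a Q_a^+(q-\Delta)-\tilde\epsilon$ defining the gap for $k\notin\A_\epsilon$; and it increases $c\tilde\epsilon$ toward $\epsilon$, which relaxes the constraint $Q_k^+(q-\Delta)\ge\max_{a\in S\setminus\{k\}}Q_a(q+\Delta)-c\tilde\epsilon$ defining $\Delta_k^{(S)}$ for $k\in\A_\epsilon$. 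Hence the limit gap equals the supremum over $c$, and the hypothesis $\Delta(\nu,\epsilon,q)=0$ is equivalent to $\Delta(\nu,\lambda,\epsilon,c,q)=0$ for every finite $c$.

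Next, I would revisit the proof of Theorem~\ref{thm: zero gap is unsolvable}, which (as is standard for such lower bounds) proceeds by constructing an alternative instance $\nu'\in\cE$ that is statistically indistinguishable from $\nu$ under any finite sample budget yet has a disjoint set of satisfying arms (with tolerance $c\tilde\epsilon$), so that any candidate $(c\tilde\epsilon,\delta)$-reliable algorithm must err on one of the two instances. The zero-gap condition at tolerance $c\tilde\epsilon$ is exactly what permits such a perturbation without moving any $q$-quantile outside $[0,\lambda]$, and the mild $\eta_0$-assumption supplies the slack at the boundary. I would carry out the identical construction with $c\tilde\epsilon$ replaced by its limit $\epsilon$ and $\Delta(\nu,\lambda,\epsilon,c,q)$ replaced by $\Delta(\nu,\epsilon,q)$. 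By the monotone-limit observation above, the limiting zero-gap hypothesis supplies the same quantile inequalities the proof relies on, only with $\tilde\epsilon=0$ and $c\tilde\epsilon=\epsilon$ substituted throughout; the indistinguishability step (a KL-divergence bound combined with a Le Cam / change-of-measure inequality) depends only on the reward distributions of $\nu$ and $\nu'$ and is completely insensitive to $c$.

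The main obstacle I anticipate is the passage to the limit in the quantile inequalities at the boundary of the feasible region. In the finite-$c$ proof, certain inequalities are strict because $\tilde\epsilon>0$ and $c\tilde\epsilon<\epsilon$, while the $c\to\infty$ limit collapses these slacks. One must therefore either introduce an additional infinitesimal perturbation $\eta\to 0$ and take a double limit (first $c\to\infty$, then shrink the perturbation), or argue using one-sided continuity of the lower/upper quantile functions $Q_k$ and $Q_k^+$. This is precisely where the assumption $0\le Q_k^+(q-\eta_0)\le Q_k(q+\eta_0)\le\lambda$ is essential: it guarantees uniform room at both endpoints of $[0,\lambda]$ that survives the limit and keeps the perturbed instance $\nu'$ inside $\cE$. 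Once this is handled, the conclusion that $\nu$ is $\epsilon$-unsolvable follows verbatim from the adapted indistinguishability argument.
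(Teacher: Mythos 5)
Your plan is correct and follows essentially the same route as the paper, which proves the corollary via a ``limiting'' version of Lemma~\ref{lem:two_instances} (substituting $\epsilon$ for $c\tilde{\epsilon}$ and $\A_{\epsilon}(\cdot)$ for $\A_{c\tilde{\epsilon}}(\cdot)$) and then re-running the TV-distance indistinguishability argument from the proof of Theorem~\ref{thm: zero gap is unsolvable} verbatim. The ``main obstacle'' you anticipate does not actually arise: since $\Delta_k(\nu,\epsilon,q)$ is itself defined as a supremum with $\epsilon$ already in place of $c\tilde{\epsilon}$ (see~\eqref{eq: gap k infinite c}), the hypothesis $\Delta(\nu,\epsilon,q)=0$ directly yields the strict inequality $Q_k^+(q-\eta) < \max_a Q_a(q+\eta) - \epsilon$ for every $\eta>0$, so no double limit in $c$ and $\eta$ and no separate one-sided-continuity argument is needed — one simply uses the $\eta_0$-assumption to keep the perturbed instance $\nu'$ inside $\cE$ exactly as in the finite-$c$ case.
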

 \begin{proof}
    See Appendix~\ref{sec: appendix solvable instance}.
\end{proof}

\acks{This work was supported by the Singapore National Research Foundation under its AI Visiting Professorship programme.}

\newpage
\bibliography{bibliography}

\newpage
\appendix

\section{Quantile Estimation Subroutine}
\label{sec: appendix QuantEst}

\subsection{Noisy Binary Search}
\label{sec: appendix MNBS reformulation}
We first momentarily depart from MAB and discuss the monotonic noisy binary search (MNBS) problem of~\cite{karp2007noisy}; see also the end of Appendix~\ref{sec: appendix quant est related work} for a summary of some related work on noisy binary search.
The original problem formulation was stated in terms of finding a special coin $i$ among $n$ coins, but this can be restated as follows: 
We have a random variable~$R$ with an unknown CDF $F$ and a list of $n$ points $x_1  \le \cdots \le x_n$ such that $Q(\tau) \in [x_1, x_n]$, and the goal is to find an index $i$  satisfying 
\begin{equation}
\label{eq: MNBS guarantee}
    [F(x_i), F(x_{i+1})] \cap  (\tau - \Delta, \tau + \Delta) \ne 
    \varnothing
\end{equation}
via adaptive queries of the form $\mathbf{1}(R \le x_j)$. Note that each query $\mathbf{1}(R \le x_j)$ is an independent Bernoulli random variable with parameter $F(x_j)$.
We will make use of the following main result from~\cite[Theorem 1.1]{gretta2023sharp}.

\begin{proposition}[Noisy binary search guarantee]
\label{prop: MNBS guarantee}
For any $\delta \in (0, 1)$ and relaxation parameter $\Delta \le \min(\tau, 1-\tau)$, the MNBS algorithm in \cite{gretta2023sharp}
output an index $i$ after at most
$O \big( \frac{1}{\Delta^2}  \log  \frac{n}{\delta} \big)$
queries\footnote{\label{footnote: constants MNBS}The expression for the number of iterations in \cite{gretta2023sharp} is more complicated because it has some terms with explicit constant factors, but in $O(\cdot)$ notation it simplifies to $O \big( \frac{1}{\Delta^2}  \log  \frac{n}{\delta} \big)$. We do not specify the exact number of loops in Algorithm~\ref{alg: quantile interval}, as doing so is somewhat cumbersome and the focus of our work is on the scaling laws.} and $i$ satisfies~\eqref{eq: MNBS guarantee} with probability at least $1- \delta$. 
\end{proposition}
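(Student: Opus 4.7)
The plan is to prove this via the Bayesian noisy binary search paradigm, closely following the construction and analysis of \cite{gretta2023sharp}. First I would set up a weighted-index scheme: maintain nonnegative weights $w_1^{(t)}, \ldots, w_n^{(t)}$ over the candidate indices, initialized uniformly. Since each oracle call $\mathbf{1}(R \le x_j)$ is an independent $\mathrm{Bernoulli}(F(x_j))$ draw and $F$ is non-decreasing in $j$, a single query is information-theoretically unable to distinguish $F$-values within $\Theta(\Delta)$ of one another; the $\Delta$-tolerance in~\eqref{eq: MNBS guarantee} reflects exactly this noise floor.

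Next, I would specify the query and update rules. At round $t$, select $j_t$ to be the weighted median with respect to $w^{(t)}$, so that each Bernoulli answer conveys the maximum expected bit of information relative to the current posterior. On observing $Y_t \in \{0,1\}$, multiply the weights of indices on the side of $j_t$ consistent with $Y_t$ by $1 + \eta$ and those on the opposing side by $1 - \eta$, for a small step size $\eta = \Theta(\Delta)$. This multiplicative update avoids premature commitment to any level: indices lying inside the $\Delta$-noise band receive only soft pushes that approximately cancel in expectation.

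For the analysis, I would track a potential $\Phi_t$ defined as the logarithm of the total weight placed on indices that violate~\eqref{eq: MNBS guarantee}. Whenever no satisfying index has yet been isolated, the query point $x_{j_t}$ must have $|F(x_{j_t}) - \tau| \ge \Delta$, so a Pinsker/KL-style computation on the Bernoulli log-likelihood yields an expected one-step drift $\mathbb{E}[\Phi_{t+1} - \Phi_t \mid \text{history}] \le -\Omega(\Delta^2)$. Combined with bounded increments $|\Phi_{t+1} - \Phi_t| = O(\Delta)$, an Azuma--Hoeffding bound shows that after $T = O(\Delta^{-2} \log(n/\delta))$ rounds the potential drops below $-\log n$ with probability at least $1 - \delta$, forcing the total ``bad'' weight below $1$ and concentrating the mass on a satisfying index, which the algorithm returns.

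The main obstacle will be achieving the sharp $\log(n/\delta)$ factor rather than the $\log n \cdot \log(1/\delta)$ that a textbook ``binary search with majority voting at each of $\log n$ levels plus union bound'' would yield. The Bayesian scheme circumvents this by never committing to a level and instead redistributing mass smoothly, so that informative queries accumulate statistical power across all depths simultaneously. The crux is establishing the clean $\Omega(\Delta^2)$ drift uniformly over the placement of the true $\tau$-quantile among the $x_j$; for the explicit constants I would defer to the drift computation in \cite{gretta2023sharp} rather than re-deriving them.
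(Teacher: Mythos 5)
The paper does not prove Proposition~\ref{prop: MNBS guarantee}; it is quoted directly from \cite[Theorem~1.1]{gretta2023sharp} and used as a black box, accompanied only by a brief informal description of the underlying Bayesian multiplicative-weight scheme. Your proposal instead attempts to reconstruct the argument, so it is not a re-derivation of any proof appearing in the paper.

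Your outline---uniform prior, $1\pm\Theta(\Delta)$ multiplicative updates, and a log-weight potential $\Phi_t$ on ``bad'' indices---has the right general shape and is consistent with the paper's description of \cite{gretta2023sharp}. However, the drift argument has a genuine gap. You claim that whenever a satisfying index has not yet been isolated, the query point must satisfy $|F(x_{j_t})-\tau| \ge \Delta$, and from this you derive a per-round drift $\mathbb{E}[\Phi_{t+1}-\Phi_t] \le -\Omega(\Delta^2)$. That implication does not hold: the weighted median (or, for general $\tau$, the weighted $\tau$-quantile) of the current posterior can land on an index whose $F$-value is within $\Delta$ of $\tau$ even while substantial posterior mass still lies on bad indices on both sides of it, and in such rounds the one-step drift of $\Phi_t$ can be arbitrarily close to zero, so the Azuma argument as stated does not close. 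The sharp $O\big(\Delta^{-2}\log(n/\delta)\big)$ bound of \cite{gretta2023sharp} rests on a more delicate potential argument that does not require a uniform per-round drift. You explicitly defer ``the drift computation'' to \cite{gretta2023sharp}, but that deferred step is precisely the one that makes the scaling work; what remains in your writeup is an algorithm description rather than a proof---which, to be fair, is also all the paper itself provides for this proposition.
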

The bulk of the MNBS algorithm in \cite{gretta2023sharp} is based on Bayesian multiplicative weight updates: Start with a uniform prior over which of the $n$ intervals crosses quantile $\tau$, make the query at $x_j$ whose $F(x_j)$ is nearest to $\tau$ under current distribution, update the posterior by multiplying intervals on one side of the query by $1 + c \Delta$ and the other side by $1 - c \Delta$ for some fixed constant $c$, and repeat. Other MNBS algorithms such as those in \cite{karp2007noisy}, or even a naive binary search with repetitions (see \cite[\S 1.2]{karp2007noisy}), could also be used to solve the MNBS problem, but we choose  \cite{gretta2023sharp} since it has the best known scaling of the query complexity. Further comparisons of the relevant theoretical guarantees and practical performance can be found in \cite{gretta2023sharp}.

\subsection{Quantile Estimation with 1-bit Feedback}
\label{sec: appendix quant est related work}
The MNBS algorithm can be implemented under our 1-bit communication-constrained setup. Specifically, the learner decides which arm $k$ to query as well as the point $x_j$  to query, and then sends a threshold
query ``Is $R_k \le x_j$?'' as side information to the agent, where $R_k$ is the random reward (variable) of the arm~$k$ with CDF $F_k$. The agent will then pull arm $k$ and reply with a 1-bit binary feedback corresponding to the observation. Note that 
while the $O \big( \frac{1}{\Delta^2}  \log  \frac{n}{\delta} \big)$ queries for a given arm are done in an adaptive manner, the queries themselves can be requested at different time steps without any requirement of agent memory. 
A high-level description of the implementation for a fixed arm is given in Algorithm~\ref{alg: quantile interval}. This gives us the following guarantee, which is a simple consequence of Proposition~\ref{prop: MNBS guarantee}.

\begin{algorithm}
    \caption{Communication-constrained quantile estimation subroutine ($\mathtt{QuantEst}$ in
Algorithm~\ref{alg: main})}
    \label{alg: quantile interval}

   \textbf{Input}: 
    Arm with reward $R$ distributed according to CDF $F$,
    a list $\mathbf{x}$ of $n$ points $x_1 \le  \cdots \le x_n$,
    quantile $\tau \in (0, 1)$ satisfying $Q(\tau) \in [x_1, x_n]$,
    approximation parameter~$\Delta \le \min(\tau, 1-\tau)$,
    error probability $\delta \in (0,1)$
    
    \textbf{Output} Index $i \in \{1, \dots, n-1\}$    

\begin{algorithmic}[1]
      
        \For{$t = 1$ to $t_{\max}$ (with\footref{footnote: constants MNBS} $t_{\max} = O \big( \frac{1}{\Delta^2}  \log  \frac{n}{\delta} \big)$)}

          \State \textbf{At Learner:}
          
          \State~~~~Pick index $j$
          according to Bayesian weight update as in~\cite{gretta2023sharp}
          
          \State~~~~Send threshold query 
          ``Is $R \le x_j$?'' to the agent

            \State \textbf{At Agent:}
          
          \State~~~~Pull arm and observe reward $r$
          
          \State~~~~Send 1-bit feedback $\mathbf{1}(r \le x_j)$ to the learner

        \EndFor

    \State Return index $i$  according to~\cite{gretta2023sharp}

\end{algorithmic}    
\end{algorithm}

\begin{corollary}[$\mathtt{QuantEst}$ guarantee]
\label{cor: QuantEst guarantee}
Let $(F, \mathbf{x}, \tau, \Delta, \delta)$ be a valid input of Algorithm~\ref{alg: quantile interval}, and let~$n$ be the number of points in $\mathbf{x}$. 
Then the algorithm outputs  an index $i$ after at most
$O \big( \frac{1}{\Delta^2}  \log  \frac{n}{\delta} \big)$
queries and $i$  satisfies
    $\mathbb{P}
    \left([F(x_i), F(x_{i+1})] \cap  (\tau - \Delta, \tau + \Delta) = \varnothing \right) < \delta$.
\end{corollary}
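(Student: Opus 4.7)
The plan is to observe that Algorithm~\ref{alg: quantile interval} is essentially a black-box wrapper around the MNBS algorithm of \cite{gretta2023sharp}, so that Corollary~\ref{cor: QuantEst guarantee} follows almost immediately from Proposition~\ref{prop: MNBS guarantee}. The only thing to check is that the 1-bit feedback mechanism produces exactly the kind of samples that MNBS requires, which amounts to verifying an independence/distribution claim.

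First I would describe the communication protocol from the learner's perspective: at each iteration $t$, the learner selects an index $j$ using the Bayesian multiplicative weight update rule of \cite{gretta2023sharp}, sends the threshold query ``Is $R \le x_j$?'' to the agent, and receives the 1-bit reply $\mathbf{1}(r \le x_j)$ where $r$ is the freshly observed reward. Because (i) each pull generates an independent reward with CDF $F$ and (ii) the agent is memoryless (see Remark~\ref{rem:assump}), the reply is an independent $\mathrm{Bernoulli}(F(x_j))$ bit conditioned on the past history of queries and responses. This is precisely the query model assumed in the MNBS formulation of Section~\ref{sec: appendix MNBS reformulation}.

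Next, since the learner chooses $j$ at each iteration and forms the final output using exactly the same rules as in \cite{gretta2023sharp}, the sequence of (query, response) pairs induced by Algorithm~\ref{alg: quantile interval} is statistically identical to an execution of MNBS with oracle access to Bernoulli samples from the unknown CDF $F$. Invoking Proposition~\ref{prop: MNBS guarantee} with the input parameters $(\tau,\Delta,\delta)$ (whose admissibility conditions $\Delta \le \min(\tau,1-\tau)$ and $Q(\tau) \in [x_1,x_n]$ are exactly the validity conditions on the input of Algorithm~\ref{alg: quantile interval}) therefore yields both the query-complexity bound $O\big(\tfrac{1}{\Delta^2}\log\tfrac{n}{\delta}\big)$ and the accuracy guarantee
\[
  \mathbb{P}\big([F(x_i), F(x_{i+1})] \cap (\tau - \Delta, \tau + \Delta) = \varnothing\big) < \delta.
\]

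There is no substantial obstacle here; the only nontrivial point is confirming that the threshold queries and memoryless agent together reproduce the i.i.d.\ Bernoulli oracle required by MNBS, and this is immediate from the problem setup. The remainder is purely a restatement of Proposition~\ref{prop: MNBS guarantee}.
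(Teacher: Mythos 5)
Your proposal is correct and matches the paper's own approach: the paper also treats Corollary~\ref{cor: QuantEst guarantee} as a ``simple consequence of Proposition~\ref{prop: MNBS guarantee}'', observing that the threshold-query feedback $\mathbf{1}(r \le x_j)$ from a memoryless agent is exactly an independent $\mathrm{Bernoulli}(F(x_j))$ sample, so Algorithm~\ref{alg: quantile interval} is statistically identical to running the MNBS algorithm of \cite{gretta2023sharp}. Your write-up is simply a slightly more explicit version of the same argument.
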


\textbf{Related work on noisy binary search and quantile estimation.}
We briefly recap the original MNBS problem in \cite{karp2007noisy, gretta2023sharp}:
There are $n$ coins whose unknown probabilities $p_j \in [0, 1]$ are sorted in nondecreasing order, where flipping coin $j$ results in head with probability $p_j$. The goal is to identify a coin $i$ such that the interval $[p_i, p_{i+1}]$ has a nonempty intersection with $(\tau - \Delta, \tau + \Delta)$. This model subsumes noisy binary search with a fixed noise level \cite{burnashev1974interval, ben2008bayesian, dereniowski2021noisy, gu2023optimal} (where $p_j = \frac{1}{2} - \Delta$ for $j \leq i$ and $p_j = \frac{1}{2} + \Delta$ otherwise) as well as regular binary search (where $p_j \in \{0, 1\}$). As we discussed in Appendix~\ref{sec: appendix MNBS reformulation}, this problem can be reformulated into the problem of estimating (the quantile of) a distribution using threshold/comparison queries, where the noise in the feedback is stochastic.  This quantile estimation problem has been generalized to a non-stochastic noise setting \cite{meister2021learning, okoroafor2023non}, and was also studied in the context of online dynamic pricing and auctions \cite{kleinberg2003value, leme2023pricing, leme2023description}. 
In particular, \cite[Algorithm~1]{leme2023pricing} is similar to Algorithm~\ref{alg: quantile interval} (or equivalently subroutine $\mathtt{QuantEst}$ used on Lines~\ref{ltk def} and~\ref{utk def} of Algorithm~\ref{alg: main}), in the sense that both use noisy binary search to identify the quantile of a \textit{single} distribution. However, they use the naive binary search with repetitions to form confidence intervals containing the quantile, which has a suboptimal complexity $O \big( \frac{1}{\Delta^2} \log n \log  \frac{\log n}{\delta} \big)$; see \cite[\S 1.2]{karp2007noisy} for details.  Overall, while ideas from the existing literature on quantile estimation of a \textit{single} distribution with threshold queries may provide useful context, they do not readily translate into Algorithm~\ref{alg: main} or the analysis that led to our main contributions.

\subsection{Proof of Lemma~\ref{lem: good events} (Bounding the Probability of Event E) }
\label{sec: proof event E}
\begin{proof}[Proof of Theorem~\ref{lem: good events}]
    For a fixed $t \ge 1$ and a fixed $k \in \mathcal{A}_t$,
    we have
    \begin{equation}
        \Pr{
        \overline{E_{t, k, l}}
        }
        \le  \frac{\delta \cdot \Delta^{(t)}}{2 |\mathcal{A}_t|}
        \quad
        \text{and}
        \quad
         \Pr{
        \overline{E_{t, k, u}}
        }
        \le  \frac{\delta \cdot \Delta^{(t)} }{2 |\mathcal{A}_t|}
        \quad
    \end{equation}
    by the guarantee of the $\mathtt{QuantEst}$ (see Corollary~\ref{cor: QuantEst guarantee}).
    Applying the union bound, we obtain
    \begin{equation}
        \Pr{\overline{E}}
        \le 
        \sum_{t \ge 1}
        \sum_{k \in \mathcal{A}_t} 
        \frac{\delta \cdot \Delta^{(t)}}{ |\mathcal{A}_t|}
        \le 
        \sum_{t \ge 1}
        |\mathcal{A}_t| \cdot 
        \frac{\delta \cdot \Delta^{(t)}}{ |\mathcal{A}_t|}
        =
        \delta
         \sum_{t \ge 1}
         \Delta^{(t)}
         =
         \delta
         \sum_{t \ge 1}
         2^{-t} 
         \le \delta
    \end{equation}
    as desired. The number of arm pulls~\eqref{eq: QuantEst arm pulls} follows immediately from the guarantee of $\mathtt{QuantEst}$ from Corollary \ref{cor: QuantEst guarantee}, $|\A_t| \le |\A| = K$, 
    and the number of points $n = \Theta(c\lambda/\epsilon)$.
\end{proof}
\section{Proof of Lemma~\ref{lem: quantile anytime bound} (Anytime Quantile Bounds)}
\label{sec: appendix anytime quantile bounds}
We first present a useful auxiliary lemma.
\begin{lemma}
    Under the setup of Lemma~\ref{lem:  quantile anytime bound} (including Event $E$ from Lemma~\ref{lem: good events} holding), we have the following bounds:
    \begin{equation}
    \label{eq: xltk upper bound}
        x_{l_{t, k}} < Q_k(q)
    \end{equation}
    \begin{equation}
    \label{eq: xltk+1 lower bound}
        Q^+_k\big( q -  \Delta^{(t)} \big)
        \le x_{l_{t, k} + 1}
    \end{equation}
    \begin{equation}
    \label{eq: xutk upper bound}
        x_{u_{t, k}} < Q_k\big( q +  \Delta^{(t)} \big)
    \end{equation}
    \begin{equation}
    \label{eq: xutk+1 lower bound}
        Q^+_k(q)
        \le x_{u_{t, k} + 1}
    \end{equation}
    for each round $t \ge  1$ and arm $k \in \A_t$.
\end{lemma}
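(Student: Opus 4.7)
The plan is to unpack the two events $E_{t,k,l}$ and $E_{t,k,u}$ (which both hold under the event $E$ of Lemma~\ref{lem: good events}) into strict inequalities on the CDF values at the four points $x_{l_{t,k}}$, $x_{l_{t,k}+1}$, $x_{u_{t,k}}$, $x_{u_{t,k}+1}$, and then to translate these CDF inequalities into the four quantile inequalities~\eqref{eq: xltk upper bound}--\eqref{eq: xutk+1 lower bound} via the standard duality between a CDF and its lower/upper quantile functions.

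For the unpacking step, the non-emptiness assumption in the definition of $E_{t,k,l}$ yields some $r \in (q - \Delta^{(t)}, q)$ with $F_k(x_{l_{t,k}}) \le r$ and $F_k(x_{l_{t,k}+1}) \ge r$. Combining these with the strict bounds on $r$ gives $F_k(x_{l_{t,k}}) < q$ and $F_k(x_{l_{t,k}+1}) > q - \Delta^{(t)}$. The analogous unpacking of $E_{t,k,u}$ yields $F_k(x_{u_{t,k}}) < q + \Delta^{(t)}$ and $F_k(x_{u_{t,k}+1}) > q$.

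For the translation step, I would use two elementary facts about the quantile functions that follow from the monotonicity and right-continuity of $F_k$. First, $\{x : F_k(x) \ge p\}$ is closed and equals $[Q_k(p), \infty)$, so any $y$ with $F_k(y) < p$ must satisfy $y < Q_k(p)$ strictly. Second, $\{x : F_k(x) \le p\}$ is a downward-closed interval with supremum $Q_k^+(p)$, so any $y$ with $F_k(y) > p$ must satisfy $y \ge Q_k^+(p)$, though not necessarily strictly. Applying the first fact to the two ``$<$'' CDF bounds above immediately yields the strict quantile inequalities~\eqref{eq: xltk upper bound} and~\eqref{eq: xutk upper bound}, while applying the second fact to the two ``$>$'' CDF bounds yields the non-strict inequalities~\eqref{eq: xltk+1 lower bound} and~\eqref{eq: xutk+1 lower bound}.

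The only genuinely delicate point is the strict-versus-non-strict distinction, which is driven entirely by possible jumps of $F_k$. The strict inequalities on the ``left'' endpoints come from the closedness of $\{x : F_k(x) \ge p\}$ (a direct consequence of right-continuity), whereas the non-strict inequalities on the ``right'' endpoints reflect the fact that $\{x : F_k(x) \le p\}$ may or may not contain its supremum. This strictness pattern is exactly what the downstream proof of Lemma~\ref{lem: quantile anytime bound} needs; in particular, it is what yields the strict inequality $\mathrm{LCB}_t(k) < Q_k(q)$ in~\eqref{eq: quantile anytime bound}.
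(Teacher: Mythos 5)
Your proof is correct and follows essentially the same route as the paper: first unpack the non-empty-intersection events into the four CDF inequalities $F_k(x_{l_{t,k}}) < q$, $F_k(x_{l_{t,k}+1}) > q - \Delta^{(t)}$, $F_k(x_{u_{t,k}}) < q + \Delta^{(t)}$, $F_k(x_{u_{t,k}+1}) > q$, then convert them to quantile inequalities via the closedness of $\{x : F_k(x) \ge p\}$ (for the strict bounds) and the fact that any $y$ with $F_k(y) > p$ upper-bounds $\{x : F_k(x) \le p\}$ (for the non-strict ones). The paper additionally notes the edge cases $x_{l_{t,k}} = -\infty$ and $x_{l_{t,k}+1} = \infty$ are trivial, but your stated facts subsume these once one reads the inequalities in the extended reals.
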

\begin{proof}
We will prove only~\eqref{eq: xltk upper bound} and~\eqref{eq: xltk+1 lower bound}
for an arbitrary $t  \ge 1$ and  $k \in \A_t$
in detail, as~\eqref{eq: xutk upper bound} and~\eqref{eq: xutk+1 lower bound} can be proved similarly.
Observe that, under event $E_{t,k,l} \subset E$ (see \eqref{eq: event Etkl}), we have 
\begin{equation}
\label{eq: Fk_xltk bound}
    F_k(x_{l_{t, k}}) < q
    \quad \text{and} \quad
    q -  \Delta^{(t)} < F_k(x_{l_{t, k}+1})
\end{equation}
respectively,
as otherwise the interval $[F_k(x_{l_{t, k}}), F_k(x_{l_{t, k}+1})]$
would fall on the right and the left, respectively, of the interval $\left( q - \Delta^{(t)}, q   \right)$. A similar argument through the event $E_{t,k,u} \subset E$ (see \eqref{eq: event Etku}) yields
\begin{equation}
\label{eq: Fk_xutk bound}
    F_k(x_{u_{t, k}}) < q + \Delta^{(t)}
    \quad \text{and} \quad
    q  < F_k(x_{u_{t, k}+1}).
\end{equation}

We now prove~\eqref{eq: xltk upper bound} using~\eqref{eq: Fk_xltk bound}; the inequality~\eqref{eq: xutk upper bound} can be proved similarly through~\eqref{eq: Fk_xutk bound}. If $x_{l_{t, k}} = - \infty$, then~\eqref{eq: xltk upper bound} holds trivially.
Therefore, we proceed on the assumption that $x_{l_{t, k}} \in \R$.
Then, using standard properties of quantile functions (see, e.g., \cite[4.3 Theorem]{dufour1995distribution}), we have $x_{l_{t, k}} < Q_k(q)$ as desired.

We now prove~\eqref{eq: xltk+1 lower bound} using~\eqref{eq: Fk_xltk bound}; the inequality~\eqref{eq: xutk+1 lower bound} can be proved similarly through~\eqref{eq: Fk_xutk bound}. 
If $x_{l_{t, k}+1} = \infty$, then~\eqref{eq: xltk+1 lower bound} holds trivially.
Therefore, we proceed on the assumption that $x_{l_{t, k}+1} \in \R$.
In this case, it is a finite upper bound on the values in the set
    $\{ z \in \R: F_k(z) \le q -  \Delta^{(t)}  \}$, and so this set has a finite supremum. It follows that 
    \begin{equation}
        x_{l_{t, k}+1} \ge 
        \sup \{ z \in \R : F_k(z) \le q -  \Delta^{(t)}  \} =
        Q^+_k\big( q -  \Delta^{(t)} \big)
    \end{equation}
    as desired.
\end{proof}

\begin{proof}[Proof of Lemma~\ref{lem:  quantile anytime bound}]
We break down the bounds into  inequalities as follows:
\begin{multicols}{2}
\begin{enumerate}[label=(\roman*)]

    \item  $\mathrm{LCB}_{\tau}(k) \le \mathrm{LCB}_t(k)$
    
    \item $\mathrm{LCB}_t(k)  < Q_k(q)$
    
    
    \item $Q_k(q) \le \mathrm{UCB}_t(k)$
    
    \item $\mathrm{UCB}_t(k)  \le \mathrm{UCB}_{\tau}(k)$
    
    \item $Q^+_k\big(q -  \Delta^{(t)} \big)  \le \mathrm{LCB}_t(k) + \tilde{\epsilon}$

    \item $\mathrm{UCB}_t(k) - \tilde{\epsilon}
        <
         Q_k\big(q + \Delta^{(t)} \big)$
\end{enumerate}
\end{multicols}
We will prove only inequalities (i), (ii), and (iv)
for an arbitrary $t  > \tau \ge 0$ and  $k \in \A_t$
in detail, as all the other inequalities can be proved similarly.

Inequality (i) follows immediately from Line~\ref{LCB definition} of Algorithm~\ref{alg: main} and induction. Likewise, we can show (iv) using Line~\ref{UCB definition} of Algorithm~\ref{alg: main}.
 
We now show inequality (ii) by induction on $t$;
inequality (iii) can be proved similarly.
    For the base case $t = 1,$
    we have
    \begin{equation}
        \mathrm{LCB}_1(k) =
        \max \left( x_{l_{t, k}}, \mathrm{LCB}_{0}(k) \right) =
        \max \left( x_{l_{t, k}}, 0 \right) =
        x_{l_{t, k}} < Q_k(q),
    \end{equation}
    where the last inequality follows from~\eqref{eq: xltk upper bound}.   
    For the inductive step, suppose that $\mathrm{LCB}_t(k) < Q_k(q)$ for a fixed $t \ge 1$. Since $x_{l_{t, k}} < Q_k(q)$, we have
    \begin{equation}
        \mathrm{LCB}_{t+1}(k) =
            \max
            \left(
            x_{l_{t, k}},
            \mathrm{LCB}_{t}(k)
            \right)
            < Q_k(q)
    \end{equation}
    as desired.

    We now show inequality (v) using~\eqref{eq: xltk+1 lower bound}; inequality (vi)
         can be shown using a similar argument through~\eqref{eq: xutk upper bound}.
        We consider three cases for the index $l_{t, k}$:
        \begin{itemize}
            \item ($l_{t, k} = 0$) In this case, we have 
                    $x_{l_{t, k} + 1} = x_{1} =  0 = \mathrm{LCB}_0(k)$, and so
                    \begin{equation}
                       Q^+_k\big( q -  \Delta^{(t)} \big)
                        \le 
                        x_{l_{t, k} + 1} =
                        \mathrm{LCB}_0(k)
                        \le 
                        \mathrm{LCB}_t(k)
                        < \mathrm{LCB}_t(k) + \tilde{\epsilon},
                    \end{equation}
                    where the first inequality follows from~\eqref{eq: xltk+1 lower bound} and the second inequality follows from inequality~(i).
            
            \item ($1 \le l_{t, k} \le n$) In this case, we have
                    \begin{equation}
                   Q^+_k\big( q -  \Delta^{(t)} \big)
                    \le x_{l_{t, k} + 1}
                    = x_{l_{t, k}} + \tilde{\epsilon}
                    \le \mathrm{LCB}_t(k) + \tilde{\epsilon},
                \end{equation}
                where the first inequality follows from~\eqref{eq: xltk+1 lower bound}, the equality follows from distance between
                consecutive points set in Line~\ref{line: list of points} of Algorithm~\ref{alg: main}, 
                and the last inequality follows from Line~\ref{LCB definition} of Algorithm~\ref{alg: main}.

           \item  ($l_{t, k} = n+1$) In this case, we have  $x_{l_{t, k}} = x_{n+1} = \lambda \ge Q_k(q) \ge  Q^+_k\big( q -  \Delta^{(t)} \big)$,
           and so
                    \begin{equation}
                       Q^+_k\big( q -  \Delta^{(t)} \big) \le 
                        x_{l_{t, k}}  \le 
                        \mathrm{LCB}_t(k)
                        < \mathrm{LCB}_t(k) + \tilde{\epsilon},
                    \end{equation}
                    where the second inequality follows from Line~\ref{LCB definition} of Algorithm~\ref{alg: main}.
        \end{itemize}
    Combining all three cases, we have
    $Q^+_k\big( q -  \Delta^{(t)} \big)
        \le \mathrm{LCB}_t(k) + \tilde{\epsilon}$ as desired.
\end{proof}

\section{Proof of Theorem~\ref{thm: correctness} (Reliability of Algorithm~\ref{alg: main})}
\label{sec: appendix correctness}
\begin{proof}[Proof of Theorem~\ref{thm: correctness}]
    We first show by induction that an optimal arm $k^*$ of instance $\nu$ (i.e., one having the highest $ q$-quantile) will always be active, i.e., $k^* \in \A_t$ for each round $t \ge 1$. 
        For the base case $t = 1$, we have $k^* \in \{1, \dots, K\} = \A_1$ trivially. We now show the inductive step: if $k^* \in \A_t$ holds, then $k^* \in \A_{t+1}$. For all arms $a \in \A_t$, we have
    \begin{equation}
        \mathrm{UCB}_t(k^*)
        \ge
        Q_{k^*}(q) 
        \ge Q_{a}(q)  
        > 
        \mathrm{LCB}_t(a),
    \end{equation}
    where the second inequality follows from the optimality of  arm $k^*$,
    while the other two inequalities follow from the anytime quantile bounds (Lemma~\ref{lem: quantile anytime bound}).
    It follows that
    $\mathrm{UCB}_t(k^*) >
            \max\limits_{a \in \mathcal{A}_{t}} \mathrm{LCB}_t(a)$,
    and so $k^* \in \A_{t+1}$ by definition (see Line~\ref{line: active arm} of Algorithm~\ref{alg: main}).

    We now argue that if Algorithm~\ref{alg: main} terminates, then the returned arm~$\hat{k}$ satisfies~\eqref{def: performance def}. 
    If Algorithm~\ref{alg: main} terminates,
    then the while-loop (Lines~\ref{line: start while loop}--\ref{line: end while loop}) must have terminated and therefore the returned arm $\hat{k}$
    satisfies the condition
    \begin{equation}
    \label{eq: k condition}
    \mathrm{LCB}_t(\hat{k})  \ge
                \max\limits_{a \in \A_t \setminus \{ \hat{k} \} }  
                \mathrm{UCB}_t(a) -  (c+1)\tilde{\epsilon}
                 \ge
          \max\limits_{a \in \A_t \setminus \{ \hat{k} \} }  
                \mathrm{UCB}_t(a)  - \epsilon,
    \end{equation}
    where the second inequality follows from Lines~\ref{line: number of points}--\ref{line: tilde epsilon} of Algorithm~\ref{alg: main}: $\tilde{\epsilon} \le \lambda \cdot \epsilon/((c+1) \lambda) = \epsilon/(c+1)$.
    If $\hat{k} = k^*$, then the returned arm satisfies~\eqref{def: performance def} trivially. Therefore, we assume that $\hat{k} \ne k^*$ for the rest of the proof. In this case, we have
    \begin{equation}
        Q_{\hat{k}}(q) >
        \mathrm{LCB}_t(\hat{k})  \ge
        \max\limits_{a \in \A_t \setminus \{ \hat{k} \} } 
        \mathrm{UCB}_t(a) -  
        \epsilon
        \ge 
        \mathrm{UCB}_t(k^*) -  \epsilon
        \ge
        \max\limits_{a \in \A_t \setminus \{ \hat{k} \} }
        Q_a(q) -  \epsilon.
    \end{equation}
    where the first and the last inequalities follow from the anytime quantile bounds (see Lemma~\ref{lem: quantile anytime bound}), while the second inequality follows from the condition~\eqref{eq: k condition} and the third inequality follows from $k^* \in \A_t$ (see above) and the assumption that $\hat{k} \ne k^*$.
\end{proof}
\section{Details on Remark~\ref{rem: gap generalization} (Comparison to Existing Gap Definitions)}
\label{sec: appendix gap definition generalization}

We first recall some existing arm gap definitions for the exact quantile bandit problem (i.e., $\epsilon = 0$) in the setting of unquantized rewards.
In \cite[Definition 2]{nikolakakis2021quantile}, the authors defined the gap $\Delta_k^{\mathrm{NKSS}} $ for each suboptimal arm $k \ne k^*$ by
\begin{equation}
\label{eq: gap NKSS}
    \Delta_k^{\mathrm{NKSS}} \coloneqq
     \sup
    \{
         \Delta \in \left[0, \min(q, 1-q) \right]
        : 
        Q_k(q + \Delta)
        \le
        Q_{k^*}(q - \Delta)
    \}.
\end{equation}
While the authors did not define the arm gap for $k^*$, we can take it to be the same as the gap of the ``best'' suboptimal arm, as their algorithm terminates only when all suboptimal arms are eliminated.
On the other hand, the arm gap defined in
\cite[(Eq. (27)]{howard2022sequential}
is given by
\begin{equation}
\label{eq: gap HR}
    \Delta_k^{\mathrm{HR}} \coloneqq
    \begin{cases}
     \sup
    \{
         \Delta \in \left[0, \min(q, 1-q) \right]
        : 
        Q_k(q + \Delta)
        \le
        \max\limits_{a \in \A}
        Q_a(q - \Delta)
    \}
    & \text{if } k \ne k^*
    \\
     \sup
    \{
        \Delta 
        \in \left[0, q \right] :
        Q_k(q - \Delta)
        \ge
        \max\limits_{a \neq k}
        Q_{a}\big(q + \Delta_a^{\mathrm{HR}}
        \big)
    \}
    & \text{if } k = k^* 
    \end{cases}.
\end{equation}
Similar to our arm gap definition (Definition~\ref{def: our gap}), the gaps $\Delta_k^{\mathrm{HR}}$ for suboptimal arms $k \ne k^*$ are not defined based on the quantile function of $k^*$. It follows that $\Delta_a^{\mathrm{HR}} \ge \Delta_a^{\mathrm{NKSS}}$ for all arms $a \in \A$.

We now study the effect of taking $c \to \infty$ in our gap, which is given below in~\eqref{eq: gap k infinite c}. From~\eqref{eq: gap k infinite c}, it is straightforward to verify that~\eqref{eq: gap HR} is recovered from our gap (Definition~\ref{def: our gap}) by using only lower quantile functions and taking $S = \A$ and $c \to \infty$. 

\textbf{Effect of parameter $c$ in the gap definition.}
For any $1 \le c_1 \le c_2$, let 
\begin{equation}
\label{eq: tilde eps 1 and 2}
    \tilde{\epsilon_1} = \frac{\lambda} {\left\lceil (c_1+1) \lambda/\epsilon \right\rceil}
    \quad 
    \text{and}
    \quad
    \tilde{\epsilon_2} =  \frac{\lambda} {\left\lceil (c_2+1) \lambda/\epsilon \right\rceil}
\end{equation} 
be as defined 
using Lines~\ref{line: number of points}--\ref{line: tilde epsilon} of 
in Algorithm~\ref{alg: main}. It can readily be verified that 
\begin{equation}
\label{eq: c1 tilde eps 1 and 2}
    \tilde{\epsilon_1} \ge \tilde{\epsilon_2}
    \quad \text{and} \quad
    c_1 \tilde{\epsilon_1} \le c_2 \tilde{\epsilon_2} \le \epsilon
    \quad \text{and} \quad
    \Delta_{k}(\nu, \lambda, \epsilon, c_1, q)
\le \Delta_{k}(\nu, \lambda, \epsilon, c_2, q).
\end{equation}
Since $\lim\limits_{c \to \infty}  \tilde{\epsilon} = 0$
and $\lim\limits_{c \to \infty} c \tilde{\epsilon} = \epsilon$,
the gap as defined in Definition~\ref{def: our gap} converges to a quantity $\Delta_{k} \coloneqq \Delta_{k}(\nu, \epsilon, 
    q) = \lim\limits_{c \to \infty}
    \Delta_{k}(\nu, \lambda, \epsilon, c, q)$, given by
\begin{equation}
\begin{aligned}
\label{eq: gap k infinite c}
   \Delta_{k} =
    \begin{cases}
    \sup
    \left\{
        \Delta \in \left[0, \min(q, 1-q) \right]
        \colon
        Q_k(q + \Delta) 
        \le
        \max\limits_{a \in \A  }
        Q^+_{a}(q - \Delta) 
        \right\}
    &\hspace{-2mm} \text{if }  k \not\in \A_{\epsilon} \\
   \max\limits_{\A_{\epsilon} \subseteq S }
   \left\{
        \sup
    \Big\{
        \Delta \in 
       \Big[0, \min\limits_{a \not\in S} \Delta_{a}  \Big]
        \colon
        Q^+_k(q - \Delta) 
        \ge
        \max\limits_{ a \in S \setminus \{k\}} 
        Q_{a}(q + \Delta) - \epsilon
        \Big\}
        \right\}
    &\hspace{-2mm} \text{if } k \in \A_{\epsilon} 
    \end{cases}.
\end{aligned}
\end{equation}
    Note that $\Delta_k$ is independent of $c$ and $\lambda$.

\begin{remark}[Use of upper quantile function]
    \label{rem: upper quantile}
    To our knowledge, we are the first to incorporate upper quantile functions in the gap definition.
    This may lead to a potentially larger arm gap as compared to defining using only lower quantile functions (e.g., changing $Q_a^+(\cdot)$ and $Q_k^+(\cdot)$ in~\eqref{eq: our gap} and~\eqref{eq: Delta k^S} to $Q_a(\cdot)$ and $Q_k(\cdot)$ respectively), and hence a tighter upper bound.
\end{remark}

\begin{remark}[Dependency on $Q_{k^*}(q-\Delta)$]
    Existing papers using an elimination-based algorithm have their arm gaps defined according to $Q_{k^*}(q-\Delta)$; see~\eqref{eq: gap NKSS} for an example.
    In contrast, we remove this dependency and define using $\max\limits_{a \in \A  }
        Q^+_{a}(q - \Delta)$, which may lead to a tighter upper bound.  
    The resulting analysis required is more challenging --
    see the discussion in Remark~\ref{rem: elim suboptimal}. 
\end{remark}

Since our gap definitions generalizes existing gap definitions, we expect that their gaps being positive on an instance $\nu$ would imply our gap being positive on $\nu$. That is, their gaps being positive is a sufficient condition for Algorithm~\ref{alg: main} to return a satisfying arm with high-probability (see Corollary~\ref{cor: combined guarantee}).

\begin{proposition}
\label{prop: generalized formulation}
     Fix an instance $\nu \in \cE$ that has a unique arm $k^*$ with the highest $q$-quantile. Let $\Delta_a^{\mathrm{NKSS}}$ and 
$\Delta_a^{\mathrm{HR}}$ be as defined in~\eqref{eq: gap NKSS} and \eqref{eq: gap HR} for each $a \in \A$.
If $\min\limits_{a \in \A} 
\left\{ \Delta_a^{\mathrm{NKSS}} \right\} > 0$
 or
 $\min\limits_{a \in \A} 
\left\{ \Delta_a^{\mathrm{HR}} \right\} > 
 0$, then $\Delta =  \Delta(\nu, \lambda, \epsilon, c, q) $ as defined in Theorem~\ref{theorem: upper bound} is also positive.
\end{proposition}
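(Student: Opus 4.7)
The plan is to reduce the NKSS hypothesis to the HR one and then establish $\Delta_{k^*} > 0$ via a specific choice of $S$ in Definition~\ref{def: our gap}. Since the excerpt notes that $\Delta_a^{\mathrm{HR}} \ge \Delta_a^{\mathrm{NKSS}}$ for every arm $a$, it suffices to prove the claim under the weaker hypothesis $\min_a \Delta_a^{\mathrm{HR}} > 0$. Moreover, because $k^*$ is the unique arm with the largest $q$-quantile, $k^* \in \A_\epsilon(\nu)$, so $\Delta \ge \Delta_{k^*}$, and it therefore suffices to show $\Delta_{k^*} > 0$.

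\textbf{Main step.} To lower-bound $\Delta_{k^*}$, I would take $S = \A$ in the outer maximum of Definition~\ref{def: our gap}. Because $\min_{a \notin \A}\Delta_a$ is a minimum over the empty set (hence $+\infty$ by the paper's convention), the upper endpoint of the sup defining $\Delta_{k^*}^{(\A)}$ imposes no constraint, and it remains to exhibit some $\Delta > 0$ satisfying $Q^+_{k^*}(q - \Delta) \ge \max_{a \neq k^*} Q_a(q + \Delta) - c\tilde{\epsilon}$. I plan to do this via three ingredients. (i) Let $\Delta^* \coloneqq \tfrac{1}{2}\min_{a \neq k^*}\Delta_a^{\mathrm{HR}}$, which is strictly positive by hypothesis; the downward-closure of the defining set of $\Delta_a^{\mathrm{HR}}$ (a direct consequence of quantile-function monotonicity) gives $Q_a(q+\Delta) \le \max_b Q_b(q-\Delta)$ for every $\Delta \in (0, \Delta^*]$ and every $a \neq k^*$. (ii) Since $k^*$ uniquely attains $\max_b Q_b(q)$ and $Q_b(q-\Delta) \le Q_b(q)$, we get $\max_b Q_b(q-\Delta) \le Q_{k^*}(q)$, so combining with~(i) yields $\max_{a \neq k^*} Q_a(q+\Delta) \le Q_{k^*}(q)$ for every $\Delta \in (0, \Delta^*]$. (iii) A standard property of quantile functions gives the left limit $\lim_{\Delta \to 0^+} Q^+_{k^*}(q-\Delta) = \sup\{x : F_{k^*}(x) < q\} = Q_{k^*}(q)$, so there exists $\Delta_0 > 0$ such that $Q_{k^*}(q) - Q^+_{k^*}(q - \Delta_0) \le c\tilde{\epsilon}$. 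Taking $\Delta = \min(\Delta^*, \Delta_0) > 0$ then chains these bounds into $Q^+_{k^*}(q - \Delta) \ge Q_{k^*}(q) - c\tilde{\epsilon} \ge \max_{a \neq k^*} Q_a(q + \Delta) - c\tilde{\epsilon}$, so $\Delta_{k^*}^{(\A)} \ge \Delta > 0$ and hence $\Delta(\nu,\lambda,\epsilon,c,q) \ge \Delta_{k^*} > 0$.

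\textbf{Main obstacle.} The hard part will be the convergence in~(iii), which must be handled carefully because the quantile functions may jump at~$q$. The HR hypothesis is essential here: it rules out the pathology in which some $Q_a(q+\Delta)$ with $a \neq k^*$ could jump past $Q_{k^*}(q)$ for arbitrarily small $\Delta > 0$. Once this is controlled, the fixed slack $c\tilde{\epsilon} > 0$ suffices to absorb the residual $Q_{k^*}(q) - Q^+_{k^*}(q - \Delta)$ for any $\Delta$ chosen small enough, which is precisely the role of the extra $-c\tilde{\epsilon}$ term in our gap definition.
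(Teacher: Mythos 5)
Your proof is correct, and it reaches the conclusion via a genuinely different route from the paper's. Both proofs reduce to the HR hypothesis and lower-bound $\Delta_{k^*}^{(\A)}$, but they differ in how they obtain the inequality $Q^+_{k^*}(q-\Delta) \ge \max_{a\neq k^*}Q_a(q+\Delta) - c\tilde{\epsilon}$.

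The paper's proof invokes the HR gap of the \emph{optimal} arm $k^*$, which (for $k=k^*$) is defined precisely as the supremum of those $\Delta$ satisfying $Q_{k^*}(q-\Delta) \ge \max_{a\neq k^*}Q_a(q + \Delta_a^{\mathrm{HR}})$. Setting $\eta = \tfrac{1}{2}\min_a \Delta_a^{\mathrm{HR}}$ then gives, in one short chain,
\begin{equation*}
Q^+_{k^*}(q-\eta)\ \ge\ Q_{k^*}(q-\eta)\ \ge\ \max_{a\neq k^*}Q_a(q+\Delta_a^{\mathrm{HR}})\ \ge\ \max_{a\neq k^*}Q_a(q+\eta),
\end{equation*}
where the last step uses $\Delta_a^{\mathrm{HR}}\ge 2\eta > \eta$. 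This holds with no slack, so the paper never needs the $-c\tilde{\epsilon}$ term or any limiting argument; $\Delta_{k^*}^{(\A)}\ge\eta>0$ follows immediately. You instead use only the HR gaps of the \emph{suboptimal} arms, together with the uniqueness of $k^*$, which gives $\max_{a\neq k^*}Q_a(q+\Delta)\le Q_{k^*}(q)$ for small $\Delta$. That by itself is not quite the target inequality — it compares against $Q_{k^*}(q)$ rather than $Q^+_{k^*}(q-\Delta)$ — so you close the remaining gap with the monotone left-limit $Q^+_{k^*}(q-\Delta)\uparrow Q_{k^*}(q)$ as $\Delta\to 0^+$, at which point the $c\tilde{\epsilon}>0$ slack in \eqref{eq: Delta k^S} becomes essential. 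Both arguments are valid; the paper's is shorter and strictly more robust (it works even with the slack set to zero), while yours has the minor conceptual interest of not relying on the somewhat idiosyncratic definition of $\Delta_{k^*}^{\mathrm{HR}}$ for the best arm. One small nit: your closing remark attributes the role of the HR hypothesis to ruling out a jump pathology, but in your own proof the HR hypothesis is consumed in step (i) — the limit argument in (iii) is purely about the left-continuity of $Q^+_{k^*}$ and is independent of HR.
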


\begin{proof}
    It suffices to consider the case
    $ \min\limits_{a \in \A}  
    \left\{ \Delta_a^{\mathrm{HR}} \right\} > 0$,
    since $\Delta_a^{\mathrm{HR}} \ge \Delta_a^{\mathrm{NKSS}}$ for all arms $a \in \A$.
    Let $\eta = \frac{1}{2} \min\limits_{a \in \A}  \Delta_a^{\mathrm{HR}} > 0$.
    Then we have
    \begin{equation}
    \label{eq: positive HR implies positive gap}
        Q_{k^*}^+(q - \eta)
        \ge 
         Q_{k^*}(q - \eta)
        \ge 
        \max\limits_{a \neq k}
        Q_{a}\big(q + \Delta_a^{\mathrm{HR}}\big)
        \ge
        \max\limits_{a \in \A \setminus \{k^*\} } Q_a(q + \eta) - c \tilde{\epsilon},
    \end{equation}
    where the second inequality follows from~\eqref{eq: gap HR} 
    and $\tilde{\epsilon} = \tilde{\epsilon}(\lambda, \epsilon, c)$ is as defined in Algorithm~\ref{alg: main}.
    Combining~\eqref{eq: positive HR implies positive gap} and~\eqref{eq: Delta k^S} of our gap definition,  we have
    \begin{equation}
        \max\limits_{a \in \A_{\epsilon}} \Delta_{a} 
        \ge 
        \Delta_{k^*}  = \max\limits_{\A_{\epsilon} \subseteq S \subseteq \A}
        \Delta_{k^*}^{(S)} 
        \ge
        \Delta_{k^*}^{(\A)} \ge \eta > 0
    \end{equation}
    as desired.
\end{proof}

We now show that the converse is not true in general. 
In other words, there exists an instance $\nu \in \cE$ where no algorithm can distinguish which arm has a higher quantile using a finite number of arm pulls (see \cite[Theorem 2]{nikolakakis2021quantile}), but Algorithm~\ref{alg: main} is capable of returning an $\epsilon$-satisfying arm with high probability.

\begin{proposition}
\label{prop: converse zero gap not true}
    Fix $\lambda \ge \epsilon > 0$ and $\delta \in (0, 0.5)$.
    There exists a two-arm bandit instance $\nu \in \cE$ that has a unique arm $k^*$ with the highest median such that
    $\Delta =  \Delta(\nu, \lambda, \epsilon, c, q) $ as defined in Theorem~\ref{theorem: upper bound} is positive for $c \ge 2$,
    but $\min\limits_{a \in \A} \big\{ \Delta_a^{\mathrm{NKSS}}  \big\} 
= \min\limits_{a \in \A} \big\{ \Delta_a^{\mathrm{HR}}  \big\} = 0$.
\end{proposition}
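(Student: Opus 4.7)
The plan is to construct a two-arm instance where arm 2's distribution has an atom at $0$ and another atom just slightly above arm 1's deterministic value, with the gap between the upper quantile of arm 2 and the median of arm 1 made strictly smaller than the slack $c\tilde\epsilon$ that our gap definition tolerates, but strictly positive so the existing definitions pessimistically return zero.

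Concretely, I would set $v_1 = \lambda/2$ and $v_2^+ = \lambda/2 + \epsilon/3$, and define arm~$1$ to be deterministic at $v_1$ (so $Q_1(1/2) = Q_1^+(1/2) = \lambda/2$) and arm~$2$ to have $\Pr(R_2 = 0) = \Pr(R_2 = v_2^+) = 1/2$ (so $Q_2(1/2) = 0$ and $Q_2^+(1/2) = v_2^+$). Since $\lambda \ge \epsilon$, we have $\epsilon/3 \le \lambda/2$, so $v_2^+ \in [0, \lambda]$, and arm~$1$ is uniquely the highest-median arm as required. The verification then proceeds in two steps: (i) for the existing gaps, observe that for every $\Delta > 0$ we have $Q_2(1/2 + \Delta) = v_2^+ > \lambda/2 = \max_a Q_a(1/2 - \Delta)$, so the defining conditions in both \eqref{eq: gap NKSS} and \eqref{eq: gap HR} fail, giving $\Delta_2^{\mathrm{NKSS}} = \Delta_2^{\mathrm{HR}} = 0$ and hence $\min_a \Delta_a^{\mathrm{NKSS}} = \min_a \Delta_a^{\mathrm{HR}} = 0$; (ii) for our gap, I would take $S = \{1,2\}$ in \eqref{eq: Delta k^S}, where $\min_{a \notin S} \Delta_a = \infty$ by convention, so that the condition becomes $Q_1^+(1/2 - \Delta) \ge Q_2(1/2 + \Delta) - c\tilde\epsilon$. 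For $\Delta \in (0, 1/2]$ this reduces to $\lambda/2 \ge v_2^+ - c\tilde\epsilon = \lambda/2 + \epsilon/3 - c\tilde\epsilon$, i.e., $c\tilde\epsilon \ge \epsilon/3$, which I verify below. Hence $\Delta_1^{(\{1,2\})} = 1/2$, which yields $\Delta_1 \ge 1/2$, and since arm~$1$ always belongs to $\Ac_\epsilon$ we conclude $\Delta = \max_{a \in \Ac_\epsilon} \Delta_a \ge 1/2 > 0$, regardless of whether $\Ac_\epsilon = \{1\}$ (the case $\epsilon < \lambda/2$) or $\Ac_\epsilon = \{1,2\}$ (the case $\epsilon \ge \lambda/2$).

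The one quantitative step is lower bounding $c\tilde\epsilon$ for $c \ge 2$ and $\lambda \ge \epsilon$. Using $n = \lceil (c+1)\lambda/\epsilon \rceil < (c+1)\lambda/\epsilon + 1$, one has
\[
    c\tilde\epsilon = \frac{c\lambda}{n} > \frac{c\lambda}{(c+1)\lambda/\epsilon + 1} = \frac{c\epsilon}{(c+1) + \epsilon/\lambda} \ge \frac{c\epsilon}{c+2},
\]
which at $c = 2$ gives $c\tilde\epsilon > \epsilon/2 > \epsilon/3$; combined with the monotonicity $c\tilde\epsilon$ non-decreasing in $c$ noted in \eqref{eq: c1 tilde eps 1 and 2}, this yields $c\tilde\epsilon > \epsilon/3$ for every $c \ge 2$ and completes the argument. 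The main difficulty is purely bookkeeping rather than conceptual: one must carefully track the lower/upper quantile functions, the behavior of $\min_{a \notin S} \Delta_a$ when $S = \Ac$, and the two subcases for $\Ac_\epsilon$; the underlying idea is simply that the small jump of arm~$2$'s quantile just above $q$ is invisible to existing definitions but is absorbed by the $c\tilde\epsilon$ slack in ours.
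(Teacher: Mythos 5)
Your proof is correct, and the overall strategy matches the paper's: construct a two-arm median ($q=1/2$) instance in which arm~2's lower quantile function jumps discontinuously at $q$ to a value slightly above arm~1's median, so that $Q_2(q+\Delta) > Q_1(q-\Delta)$ for every $\Delta>0$ and the NKSS/HR gaps are therefore zero, while the $c\tilde\epsilon$ slack together with the upper quantile $Q^+$ makes the new gap strictly positive. The execution differs in a few ways. The paper takes arm~1 to be uniform on $[0,2m_1]$ and arm~2 to be an atom at $m_2 \in (m_1-\epsilon/2,m_1)$ plus an atom at $2m_1$, carefully choosing $m_2$ so that \emph{both} arms are $\epsilon$-optimal, and then verifies $\Delta_2^{(\{1,2\})}>0$ by solving an explicit inequality in $\Delta$. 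You instead make arm~1 deterministic at $\lambda/2$ (so $Q_1(\cdot)$ and $Q_1^+(\cdot)$ are constants), place arm~2's atoms at $\{0,\lambda/2+\epsilon/3\}$, and compute $\Delta_1^{(\{1,2\})}=1/2$ for the \emph{optimal} arm directly; since $k^* \in \A_\epsilon$ always holds, this gives $\Delta \ge \Delta_1 > 0$ without needing to ensure arm~2 is $\epsilon$-optimal, and you correctly split into the two cases for $\A_\epsilon$. Your quantitative step $c\tilde\epsilon \ge c\epsilon/(c+2) \ge \epsilon/2 > \epsilon/3$ for $c\ge 2$, $\lambda\ge\epsilon$ is essentially the calculation in Remark~\ref{rem: picking large enough c}. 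Overall your construction is marginally simpler on the bookkeeping side; both buy the same conclusion.
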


\begin{proof}
    Consider two arms $\A = \{1, 2\}$ with the following CDFs:
\begin{equation}
    F_1(x) = 
    \begin{cases}
        0  & \text{ for } x < 0
        \\
        \frac{x}{2m_1} & \text{ for } 0 \le x < 2m_1 \\
        1 & \text{ for } x \ge 2m_1
    \end{cases}
    \quad \text{and} \quad
    F_2(x) = 
    \begin{cases}
        0 & \text{ for } x < m_2 \\
        0.5 & \text{ for } m_2 \le x < 2m_1 \\
        1 & \text{ for } x \ge 2 m_1
    \end{cases},
\end{equation}
where $ m_2 \in (m_1 - \epsilon/2, m_1)$
such that both arms are $\epsilon$-optimal, with arm 1 being the unique best arm. 
Note that for each $\eta > 0$, we have
\begin{equation}
    Q_2(0.5 + \eta) = 2 m_1
    > m_1 = Q_1(0.5) \ge Q_1(0.5 - \eta),
\end{equation}
and so $\Delta_2^{\mathrm{NKSS}} = \Delta_2^{\mathrm{HR}} = 0$.
However, under our gap definition (Definition~\ref{def: our gap}) with $\A_{\epsilon}(\nu) = \{1, 2\} = \A$ and any $c \ge 2$, we have
\begin{align}
    \Delta \ge \Delta_2 
    \ge \Delta_{2}^{(\{1,2\})}
    &=
    \sup
    \left\{
        \Delta \in [0, 0.5]
        :
        Q^+_2(0.5 - \Delta) 
        \ge
        Q_{1}(0.5 + \Delta) - c\tilde{\epsilon}
        \right\} \\
    &\ge
    \sup
    \left\{
        \Delta  \in [0, 0.5]
        :
        Q^+_2(0.5 - \Delta) 
        \ge
        Q_{1}(0.5 + \Delta) - \frac{\epsilon}{2}
        \right\} \\    
    &=
    \sup
    \left\{
        \Delta  \in [0, 0.5]
        :
        m_2
        \ge
        (1+2 \Delta) m_1 - \frac{\epsilon}{2}
        \right\} \\
     &= \min\left\{0.5, \frac{m_2 - (m_1 - \epsilon/2)}{2m_1} \right\} >0,
\end{align}
where the second inequality follows from the calculation in Remark~\ref{rem: picking large enough c}, and the last inequality follows from the assumptions that $m_1 > 0$ and $m_2 > m_1 - \epsilon/2$.
\end{proof}

\section{Proof of Theorem~\ref{theorem: upper bound} (Upper Bound of Algorithm~\ref{alg: main})}
\label{sec: appendix upper bound}
We break down the upper bound on the number of arm pulls used by Algorithm~\ref{alg: main} as follows. We bound the number of rounds required for a non-satisfying arm $k \not\in \A_{\epsilon}(\nu)$ to be eliminated in Lemma~\ref{lem: elim suboptimal}. Then in Lemma~\ref{lem: termination}, we bound the number of rounds each non-eliminated arm has gone through when the termination condition of the while-loop is triggered. Combining these lemmas with the number of arm pulls used by $\mathtt{QuantEst}$ for each round index $t \ge 1$ and active arm $k \in \A_t$ as stated in~\eqref{eq: QuantEst arm pulls}
gives us an upper bound on the total number of arm pulls.

We first present a useful lemma that will be used in the proofs of the two subsequent lemmas. 

\begin{lemma}[$\max \mathrm{LCB}$ is non-decreasing]
\label{lem: max LCB increasing}
    Under Event $E$ as defined in Lemma~\ref{lem: good events}, we have 
    \begin{equation}
    \max\limits_{a \in \mathcal{A}_{t}} 
            \mathrm{LCB}_{t}(a) \ge 
    \max\limits_{a \in \mathcal{A}_{\tau}} 
            \mathrm{LCB}_{\tau}(a).
    \end{equation}
    for all rounds $t > \tau  \ge 1$.
\end{lemma}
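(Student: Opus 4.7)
The plan is to reduce the statement to the single step $t = \tau+1$ and then iterate. By the inductive chain $\max_{a \in \mathcal{A}_{\tau+1}} \mathrm{LCB}_{\tau+1}(a) \ge \max_{a \in \mathcal{A}_{\tau}} \mathrm{LCB}_{\tau}(a)$ applied for $\tau, \tau+1, \dots, t-1$, it suffices to prove the one-step version. The crux is that if we let $a^{*} \in \arg\max_{a \in \mathcal{A}_\tau} \mathrm{LCB}_\tau(a)$, we need $a^{*}$ (or some arm with at least as large an LCB) to still be active at round $\tau+1$; then the first inequality of \eqref{eq: quantile anytime bound} in Lemma~\ref{lem: quantile anytime bound} (i.e., $\mathrm{LCB}_t(\cdot)$ is non-decreasing in $t$) finishes the argument.

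First, I would claim that $a^{*} \in \mathcal{A}_{\tau+1}$. Suppose for contradiction that $a^{*}$ is eliminated after round $\tau$. By the elimination rule on Line~\ref{line: active arm}, this would mean
\begin{equation}
\mathrm{UCB}_\tau(a^{*}) \le \max_{a \in \mathcal{A}_\tau} \mathrm{LCB}_\tau(a) = \mathrm{LCB}_\tau(a^{*}).
\end{equation}
But Lemma~\ref{lem: quantile anytime bound}, which is available under Event $E$, guarantees $\mathrm{LCB}_\tau(a^{*}) < Q_{a^{*}}(q) \le \mathrm{UCB}_\tau(a^{*})$, a contradiction. Hence $a^{*} \in \mathcal{A}_{\tau+1}$.

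Second, I would apply the monotonicity of $\mathrm{LCB}_t(a^{*})$ in $t$ (inequality~(i) of Lemma~\ref{lem: quantile anytime bound}) to get $\mathrm{LCB}_{\tau+1}(a^{*}) \ge \mathrm{LCB}_\tau(a^{*})$. Combining this with $a^{*} \in \mathcal{A}_{\tau+1}$ yields
\begin{equation}
\max_{a \in \mathcal{A}_{\tau+1}} \mathrm{LCB}_{\tau+1}(a) \ge \mathrm{LCB}_{\tau+1}(a^{*}) \ge \mathrm{LCB}_\tau(a^{*}) = \max_{a \in \mathcal{A}_\tau} \mathrm{LCB}_\tau(a),
\end{equation}
which is the one-step claim; iterating over the rounds from $\tau$ to $t-1$ completes the proof.

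The only subtle point (and the main ``obstacle,'' though it is mild) is recognizing that the strict inequality $\mathrm{LCB}_\tau(a^{*}) < Q_{a^{*}}(q) \le \mathrm{UCB}_\tau(a^{*})$ is exactly what blocks $a^{*}$ from ever being eliminated under Event $E$, so that this argument works despite the set of active arms potentially shrinking from $\mathcal{A}_\tau$ to $\mathcal{A}_{\tau+1}$. This is also the place where the ``artificial'' monotonicity enforced in Line~\ref{LCB definition} of Algorithm~\ref{alg: main} (highlighted in Remark~\ref{rem: LCB non decreasing}) is used, since without it $\mathrm{LCB}_{\tau+1}(a^{*})$ might dip below $\mathrm{LCB}_\tau(a^{*})$ and break the induction.
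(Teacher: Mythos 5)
Your proof is correct and follows essentially the same route as the paper's: pick a maximizer $a^*$ of $\mathrm{LCB}_\tau(\cdot)$, show $a^*$ survives to round $\tau+1$ because the anytime quantile bound gives $\mathrm{UCB}_\tau(a^*) > \mathrm{LCB}_\tau(a^*) = \max_{a\in\mathcal{A}_\tau}\mathrm{LCB}_\tau(a)$ (you phrase this as a contradiction, the paper argues it directly, but the content is identical), then use the enforced monotonicity $\mathrm{LCB}_{\tau+1}(a^*) \ge \mathrm{LCB}_\tau(a^*)$ and iterate over rounds. You also correctly identify the role of Line~\ref{LCB definition} and Remark~\ref{rem: LCB non decreasing}, matching the paper's emphasis.
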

\begin{proof}
    Let round index $\tau \ge 1$ be arbitrary
    and let $k \in \argmax\limits_{a \in \A_{\tau}} 
            \mathrm{LCB}_{\tau}(a).$
    We have $k \in \A_{\tau+1}$ since 
    $\mathrm{UCB}_{\tau}(k) > \mathrm{LCB}_{\tau}(k) 
    = \max\limits_{a \in \A_{\tau}} 
            \mathrm{LCB}_{\tau}(a)$ by~\eqref{eq:  quantile anytime bound} 
    of the anytime quantile bounds.        
    It then follows that
    \begin{equation}
        \max\limits_{a \in \mathcal{A}_{\tau+1}} 
            \mathrm{LCB}_{\tau+1}(a) 
        \ge \mathrm{LCB}_{\tau+1}(k) 
        \ge \mathrm{LCB}_{\tau}(k) = 
        \max\limits_{a \in \A_{\tau}} 
            \mathrm{LCB}_{\tau}(j),
    \end{equation}    
    where the second inequality follows from~\eqref{eq:  quantile anytime bound} 
    of the anytime quantile bounds. 
    Applying the argument repeatedly yields the claim for all $t > \tau.$
\end{proof}

\begin{lemma}[Elimination of non-satisfying arms]
\label{lem: elim suboptimal}
     Fix an instance $\nu \in \cE$, and suppose Algorithm~\ref{alg: main} is run with input $(\A, \lambda, \epsilon, q, \delta)$ and parameter $c \ge 1$.
    Let $\A_{\epsilon} = \A_{\epsilon}(\nu) $ be as defined in~\eqref{def: performance def} and let the gap $\Delta_{k} = \Delta_{k}(\nu, \lambda, \epsilon, c, q)$ be as defined in Definition~\ref{def: our gap} 
    for each arm $k \in \A$.
    Consider an arm $k \not\in \A_{\epsilon}$.
    Under Event $E$ as defined in Lemma~\ref{lem: good events}, when the round index~$t$
    of Algorithm~\ref{alg: main} satisfies $\Delta^{(t)}  \le \frac{1}{2} \Delta_k$, we have  $k \not\in \A_{t+1}$.
\end{lemma}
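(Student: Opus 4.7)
The plan is to verify the elimination condition $\UCB_t(k) \le \max_{a \in \A_t} \LCB_t(a)$, since by Line~\ref{line: active arm} of Algorithm~\ref{alg: main} this immediately forces $k \notin \A_{t+1}$. The argument will combine the gap definition~\eqref{eq: our gap} with the anytime quantile bounds of Lemma~\ref{lem: quantile anytime bound}, together with Lemma~\ref{lem: max LCB increasing} to handle arms that have already left $\A_t$.

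First I would observe that because $\Delta^{(t)} = 2^{-t+1}\min(q,1-q) > 0$, the hypothesis $\Delta^{(t)} \le \frac{1}{2}\Delta_k$ forces $\Delta_k > 0$ and $\Delta^{(t)} < \Delta_k$. Since $Q_k$ is non-decreasing and each $Q_a^+$ is non-decreasing, the set in the sup of~\eqref{eq: our gap} is downward-closed, so $\Delta^{(t)}$ itself belongs to it, yielding $Q_k(q+\Delta^{(t)}) + \tilde{\epsilon} \le \max_{a \in \A} Q_a^+(q-\Delta^{(t)})$. Chaining this with~\eqref{eq: upper approx quantile anytime bound} produces the strict inequality $\UCB_t(k) < \max_{a \in \A} Q_a^+(q - \Delta^{(t)})$.

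Next I would fix any $a^* \in \argmax_{a \in \A} Q_a^+(q - \Delta^{(t)})$ and split into two cases. If $a^* \in \A_t$, bound~\eqref{eq: lower approx quantile anytime bound} gives $Q_{a^*}^+(q-\Delta^{(t)}) \le \LCB_t(a^*) + \tilde{\epsilon}$, so $\UCB_t(k) < \LCB_t(a^*) + \tilde{\epsilon}$. If instead $a^* \notin \A_t$, then $a^*$ must have been eliminated at some earlier round $\tau+1 \le t$, whence the elimination rule guarantees $\UCB_\tau(a^*) \le \max_{b \in \A_\tau} \LCB_\tau(b)$; combining $Q_{a^*}^+(q-\Delta^{(t)}) \le Q_{a^*}^+(q) \le \UCB_\tau(a^*)$ (from~\eqref{eq: quantile anytime bound}) with Lemma~\ref{lem: max LCB increasing} to lift $\max_{b \in \A_\tau} \LCB_\tau(b) \le \max_{b \in \A_t} \LCB_t(b)$ already delivers the clean strict bound $\UCB_t(k) < \max_{b \in \A_t} \LCB_t(b)$ in this case.

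The main obstacle will be closing the residual $\tilde{\epsilon}$ slack in the first case, which at face value only produces $\UCB_t(k) < \LCB_t(a^*) + \tilde{\epsilon}$ rather than the desired $\UCB_t(k) \le \LCB_t(a^*)$. To handle this I would exploit the discretization built into Algorithm~\ref{alg: main}: by construction (Lines~\ref{eq: initiate default conf interval},~\ref{LCB definition}, and~\ref{UCB definition}, combined with the grid defined in Lines~\ref{line: number of points}--\ref{line: list of points}), every $\UCB_t(\cdot)$ and $\LCB_t(\cdot)$ produced throughout the run lies in the finite set $\{0, \tilde{\epsilon}, 2\tilde{\epsilon}, \ldots, n\tilde{\epsilon} = \lambda\}$. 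Consequently $\UCB_t(k) - \LCB_t(a^*)$ is always an integer multiple of $\tilde{\epsilon}$, and any strict inequality of size below $\tilde{\epsilon}$ between two such values must collapse to $\UCB_t(k) \le \LCB_t(a^*)$. Combining both cases then gives $\UCB_t(k) \le \max_{a \in \A_t} \LCB_t(a)$, concluding $k \notin \A_{t+1}$.
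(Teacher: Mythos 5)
Your proof is correct and follows essentially the same route as the paper's: reduce to the elimination condition $\UCB_t(k) \le \max_{a \in \A_t}\LCB_t(a)$, use the gap definition with the anytime bounds to obtain $\UCB_t(k) < \max_a Q_a^+(q-\Delta^{(t)})$, split on whether the maximizing arm is still active (handling the already-eliminated case via Lemma~\ref{lem: max LCB increasing}), and close the residual $\tilde\epsilon$ slack by noting that all $\UCB$ and $\LCB$ values lie on the grid $\{0,\tilde\epsilon,\ldots,\lambda\}$. The only cosmetic difference is that the paper sets up the grid-discretization equivalence between~\eqref{eq: eliminate condition} and~\eqref{eq: eliminate condition equivalent} up front rather than invoking it at the end.
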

\begin{proof}
    If $k \not\in \A_t$, then $k \not\in \A_{t+1}$ trivially. Therefore, we assume for the rest of the proof that $k \in \A_t$, and we will show that
    \begin{equation}
    \label{eq: eliminate condition}
        \mathrm{UCB}_t(k) \le \max\limits_{a \in \mathcal{A}_{t}} \mathrm{LCB}_t(a)
    \end{equation}
    or equivalently
    \begin{equation}
    \label{eq: eliminate condition equivalent}
        \mathrm{UCB}_t(k) < \max\limits_{a \in \mathcal{A}_{t}} \mathrm{LCB}_t(a) + \tilde{\epsilon},
    \end{equation}
    where $\tilde{\epsilon} = \tilde{\epsilon}(\lambda, \epsilon, c)$ is as defined in Lines~\ref{line: number of points} and~\ref{line: tilde epsilon} of Algorithm~\ref{alg: main}.
    Note that these conditions are equivalent because both
    $\mathrm{UCB}_t(k)$ and
    $\max\limits_{a \in \A_t } \mathrm{LCB}_t(a)$ 
    are elements of 
    \begin{equation}
        \left[ 0, 
        \tilde{\epsilon}, 
        2\tilde{\epsilon}, \cdots,
        (n-1) \tilde{\epsilon}, \lambda\right],
    \end{equation}
   which follows from Lines~\ref{line: list of points},~\ref{eq: initiate default conf interval}, and~\ref{ltk def}--\ref{UCB definition} of Algorithm~\ref{alg: main}.

    Since $k \not\in \A_{\epsilon}$, when the round index $t$ satisfies~$\Delta^{(t)} \le \frac{1}{2} \Delta_k $ we have
    \begin{equation}
    \label{eq: gap k realized with arm j}
        \mathrm{UCB}_t(k)
        < Q_k \big( q + \Delta^{(t)} \big)  + \tilde{\epsilon} 
        \le Q^+_{j}\big(q - \Delta^{(t)} \big) 
    \end{equation}
    for some arm $j \in \A$  by~\eqref{eq: upper approx quantile anytime bound} of the anytime quantile bounds
    and Definition~\ref{def: our gap}. 
    We now consider two cases: (i) $j \in \A_t$ and (ii) $j \not\in \A_t$.

    If $j \in \A_t$, we have
    \begin{equation}
    \label{eq: j in At}
        Q^+_{j}\big(q - \Delta^{(t)}\big) 
        \le \mathrm{LCB}_t(j) + \tilde{\epsilon} 
        \le \max\limits_{a \in \mathcal{A}_{t}} \mathrm{LCB}_t(a) + \tilde{\epsilon} 
    \end{equation}
    by~\eqref{eq: lower approx quantile anytime bound} of the anytime quantile bounds and the assumption that $j \in \A_t$. Combining~\eqref{eq: gap k realized with arm j} and~\eqref{eq: j in At} gives us condition~\eqref{eq: eliminate condition equivalent} as desired.
    
    If $j \not\in \A_t$, then it is eliminated at some round $\tau < t$, i.e., 
    $j \in \A_{\tau}$ but $j \not\in \A_{\tau + 1}$.
    By \eqref{eq: quantile anytime bound} of the anytime quantile bounds, the definition of active arm set (Line~\ref{line: active arm} of Algorithm~\ref{alg: main}) applied to $\A_{\tau + 1}$,
    and the fact that max LCB is non-decreasing (Lemma~\ref{lem: max LCB increasing}), 
    we have 
    \begin{equation}
    \label{eq: j not in At}
    Q_{j}(q) 
    \le
    \mathrm{UCB}_{\tau}(j) 
    \le
            \max\limits_{a \in \mathcal{A}_{\tau}} 
            \mathrm{LCB}_{\tau}(a)     
        \le
        \max\limits_{a \in \mathcal{A}_{t}} 
            \mathrm{LCB}_{t}(a). 
    \end{equation}
    Combining~\eqref{eq: gap k realized with arm j}, the trivial inequality 
    $Q^+_{j}\big(q - \Delta^{(t)}\big) 
        \le Q_{j}(q) $, and~\eqref{eq: j not in At} yields
    condition~\eqref{eq: eliminate condition} as desired.
\end{proof}

\begin{remark}
    \label{rem: elim suboptimal}
    As seen in the analysis for the case  $j \not\in \A_t$ above, the property that $\max \mathrm{LCB}$ is non-decreasing (Lemma~\ref{lem: max LCB increasing}) is crucial in establishing~\eqref{eq: j not in At}. We will see below that the same argument is used again in establishing~\eqref{eq: LCB_t(k) > Fj}. This property of Lemma~\ref{lem: max LCB increasing} itself is a consequence of ensuring $\mathrm{LCB}_t(k)$ is non-decreasing in $t$; see Remark~\ref{rem: LCB non decreasing}.
\end{remark}

\begin{lemma}[While-loop termination]
\label{lem: termination}
     Fix an instance $\nu \in \cE$, and suppose Algorithm~\ref{alg: main} is run with input $(\A, \lambda, \epsilon, q, \delta)$ and parameter $c \ge 1$.
    Let $\A_{\epsilon} = \A_{\epsilon}(\nu) $ be as defined in~\eqref{def: performance def} and let the gap $\Delta_{k} = \Delta_{k}(\nu, \lambda, \epsilon, c, q)$ be as defined in Definition~\ref{def: our gap} 
    for each arm $k \in \A$.
    Under Event $E$, when the round index~$t$
    of Algorithm~\ref{alg: main} satisfies $\Delta^{(t)} \le \frac{1}{2} \max \limits_{a \in \A_{\epsilon}} \Delta_a$, Algorithm~\ref{alg: main} will terminate in round $t+1$.
\end{lemma}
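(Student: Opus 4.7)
The plan is to exhibit an arm $\hat{k} \in \mathcal{A}_{t+1}$ satisfying $\mathrm{LCB}_{t}(\hat{k}) \ge \max_{a \in \mathcal{A}_{t+1} \setminus \{\hat{k}\}} \mathrm{UCB}_{t}(a) - (c+1)\tilde{\epsilon}$, which negates the while-loop condition on Line~\ref{line: start while loop} at round $t+1$. Let $k^{*}_{\epsilon} \in \argmax_{a \in \mathcal{A}_{\epsilon}(\nu)} \Delta_{a}$ and set $\Delta^{\star} := \Delta_{k^{*}_{\epsilon}}$. Choose a set $S^{*} \supseteq \mathcal{A}_{\epsilon}(\nu)$ attaining the outer maximum in Definition~\ref{def: our gap}, so that $\Delta_{k^{*}_{\epsilon}}^{(S^{*})} = \Delta^{\star}$. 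The built-in constraint $\Delta^{\star} \le \min_{a \notin S^{*}} \Delta_{a}$ from~\eqref{eq: Delta k^S}, combined with $\Delta^{(t)} \le \Delta^{\star}/2$ and Lemma~\ref{lem: elim suboptimal} (applicable because $\mathcal{A}_{\epsilon}(\nu) \subseteq S^{*}$), forces every $a \notin S^{*}$ out of $\mathcal{A}_{t+1}$; hence $\mathcal{A}_{t+1} \subseteq S^{*}$.

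Case 1 ($k^{*}_{\epsilon} \in \mathcal{A}_{t+1}$): take $\hat{k} = k^{*}_{\epsilon}$. For any $a' \in \mathcal{A}_{t+1} \setminus \{k^{*}_{\epsilon}\} \subseteq S^{*} \setminus \{k^{*}_{\epsilon}\}$, I would chain, in order, the anytime bound $\mathrm{UCB}_{t}(a') < Q_{a'}(q + \Delta^{(t)}) + \tilde{\epsilon}$ from~\eqref{eq: upper approx quantile anytime bound}, monotonicity of $Q_{a'}$ (using $\Delta^{(t)} \le \Delta^{\star}$), the defining inequality $\max_{a \in S^{*} \setminus \{k^{*}_{\epsilon}\}} Q_{a}(q + \Delta^{\star}) \le Q^{+}_{k^{*}_{\epsilon}}(q - \Delta^{\star}) + c\tilde{\epsilon}$ from~\eqref{eq: Delta k^S}, monotonicity of $Q^{+}_{k^{*}_{\epsilon}}$, and $Q^{+}_{k^{*}_{\epsilon}}(q - \Delta^{(t)}) \le \mathrm{LCB}_{t}(k^{*}_{\epsilon}) + \tilde{\epsilon}$ from~\eqref{eq: lower approx quantile anytime bound}. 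This produces the strict inequality $\mathrm{UCB}_{t}(a') < \mathrm{LCB}_{t}(k^{*}_{\epsilon}) + (c+2)\tilde{\epsilon}$. Because both $\mathrm{UCB}_{t}(\cdot)$ and $\mathrm{LCB}_{t}(\cdot)$ take values on the grid of multiples of $\tilde{\epsilon}$ by construction (Lines~\ref{line: list of points},~\ref{LCB definition},~\ref{UCB definition}), this tightens by exactly $\tilde{\epsilon}$, yielding the termination condition.

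Case 2 ($k^{*}_{\epsilon} \notin \mathcal{A}_{t+1}$): then $k^{*}_{\epsilon}$ was eliminated at some round $\tau^{*} \le t$, so $\mathrm{UCB}_{\tau^{*}}(k^{*}_{\epsilon}) \le \max_{a \in \mathcal{A}_{\tau^{*}}} \mathrm{LCB}_{\tau^{*}}(a)$. Combining with $\mathrm{UCB}_{\tau^{*}}(k^{*}_{\epsilon}) \ge Q_{k^{*}_{\epsilon}}(q)$ from~\eqref{eq: quantile anytime bound} and invoking Lemma~\ref{lem: max LCB increasing} (whose proof additionally shows that the argmax lies in the subsequent active set, so $\max_{a \in \mathcal{A}_{t}} \mathrm{LCB}_{t}(a) = \max_{a \in \mathcal{A}_{t+1}} \mathrm{LCB}_{t}(a)$) propagates the bound to $\max_{a \in \mathcal{A}_{t+1}} \mathrm{LCB}_{t}(a) \ge Q_{k^{*}_{\epsilon}}(q)$. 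Pick $\hat{k}$ to be any such maximizer. An analogous chain, with $Q^{+}_{k^{*}_{\epsilon}}(q - \Delta^{\star}) \le Q_{k^{*}_{\epsilon}}(q)$ (a standard quantile inequality for $\Delta^{\star} > 0$) replacing the final anytime bound, gives $\mathrm{UCB}_{t}(a') < Q_{k^{*}_{\epsilon}}(q) + (c+1)\tilde{\epsilon} \le \mathrm{LCB}_{t}(\hat{k}) + (c+1)\tilde{\epsilon}$ for every $a' \in \mathcal{A}_{t+1} \setminus \{\hat{k}\} \subseteq S^{*} \setminus \{k^{*}_{\epsilon}\}$; the strict inequality alone suffices here to deliver the termination condition, without invoking grid snapping.

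The main obstacle is Case~2: the arm $k^{*}_{\epsilon}$ whose gap drives the analysis may itself have been eliminated, so it cannot serve as $\hat{k}$. The resolution is Lemma~\ref{lem: max LCB increasing}, which converts $k^{*}_{\epsilon}$'s elimination into a persistent floor of $Q_{k^{*}_{\epsilon}}(q)$ on the active-set maximum of $\mathrm{LCB}$; this floor is exactly what some surviving arm in $S^{*} \setminus \{k^{*}_{\epsilon}\}$ needs in order to dominate all other UCBs shifted down by $(c+1)\tilde{\epsilon}$, through the defining inequality of $\Delta_{k^{*}_{\epsilon}}^{(S^{*})}$ from Definition~\ref{def: our gap}.
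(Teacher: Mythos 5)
Your proof is correct and follows essentially the same route as the paper's: choose $k^{*}_{\epsilon} \in \argmax_{a\in\A_\epsilon}\Delta_a$ together with the maximizing set $S^*$, establish $\A_{t+1} \subseteq S^*$ via Lemma~\ref{lem: elim suboptimal}, and split on whether $k^{*}_{\epsilon}$ survives, invoking Lemma~\ref{lem: max LCB increasing} when it does not. The one rough edge is that you evaluate the defining inequality of~\eqref{eq: Delta k^S} at the supremum $\Delta^{\star}$ itself rather than at $\Delta^{(t)} < \Delta^{\star}$ as the paper does; the former does go through but requires a one-sided continuity argument (both $Q^{+}_{k}(q-\Delta)$ and $Q_{a}(q+\Delta)$ are left-continuous in $\Delta$, so the inequality passes to the limit), whereas the latter lies strictly inside the defining interval and needs no such argument.
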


\begin{proof}
     If $\A_{t+1} = \{ k^* \}$, then
      \begin{equation}
          \max\limits_{a \in \A_{t+1} \setminus \{k^*\} }                 \mathrm{UCB}_t(a) - (c+1)\tilde{\epsilon}
      = -\infty \le \mathrm{LCB}_{t}(k^*),
      \end{equation}
     and so the algorithm will terminate and return arm $k^*$ in round $t+1$.
     Therefore, we assume for the rest of the proof that
     there exists another arm $a \ne k^*$ such that $a \in \A_{t+1}$. 

     We first show that the following condition is sufficient to trigger the termination condition of the while-loop (Lines~\ref{line: start while loop}--\ref{line: end while loop}) of Algorithm~\ref{alg: main}: There exists an arm $k \in \A_{t+1}$
     satisfying
    \begin{equation}
    \label{eq: suf cond trigger termination}
          \mathrm{LCB}_t(k)  
          \ge
          \max\limits_{a \in \A_{t+1} \setminus \{k\} }
        Q_{a}\big(q + \Delta^{(t)}\big) -  (c+1)\tilde{\epsilon} .
    \end{equation}
    Using~\eqref{eq: upper approx quantile anytime bound} of the anytime quantile bound, 
    condition~\eqref{eq: suf cond trigger termination} implies that
    \begin{equation}
    \label{eq: termination condition strict equality}
        \mathrm{LCB}_t(k)  
        >
          \max\limits_{a \in \mathcal{A}_{t+1} \setminus \{k\} } \mathrm{UCB}_t(a)
          - (c+2)\tilde{\epsilon},
    \end{equation}
    which is equivalent to the termination condition
    \begin{equation}
          \mathrm{LCB}_t(k)  
            \ge
          \max\limits_{a \in \mathcal{A}_{t+1} \setminus \{k\} } \mathrm{UCB}_t(a) - (c+1)\tilde{\epsilon},
    \end{equation}
    where the equivalence follows from an argument similar to the equivalence between~\eqref{eq: eliminate condition} and 
        \eqref{eq: eliminate condition equivalent}.

    It remains to pick an arm $k \in \A_{t+1}$ satisfying condition~\eqref{eq: suf cond trigger termination}.
    Let arm $j \in \argmax\limits_{a \in \A_{\epsilon}} \Delta_a$ and consider the following two cases: (i) $j \in \A_{t+1}$ and (ii) $j \not\in \A_{t+1}$.

    If $j \in \A_{t+1}$, we pick $k = j$. We also pick~$
    T \in \argmax\limits_{\A_{\epsilon} \subseteq S \subseteq \A}
        \Delta_{k}^{(S)}    
    $ 
    to be the set associated to $\Delta_k$ (see Definition~\ref{def: our gap}).
    Note that every arm that is not in $T$
    is a non-satisfying arm since 
    $\A_{\epsilon} \subseteq T$.
    Furthermore, every non-satisfying arm that is not in $T$, hence every arm that is not in $T$, is eliminated,
    which follows from Lemma~\ref{lem: elim suboptimal} and
    \begin{equation}
        \Delta^{(t)} 
        \le \frac{1}{2} \max \limits_{a \in \A_{\epsilon}} \Delta_a
        = 
        \frac{1}{2} \Delta_k \le \frac{1}{2} \min\limits_{a \not\in T} \Delta_a,
    \end{equation} 
    where the last inequality follows from applying~\eqref{eq: Delta k^S} to $k$ and $T$.
    Therefore, we have
    $\A_{t+1} \subseteq T$.
    It follows that
    \begin{align}
         \mathrm{LCB}_t(k)
          &\ge
          Q^+_k\big(q - \Delta^{(t)}\big) - \tilde{\epsilon} \\
          &\ge
        \max\limits_{a \in T \setminus \{k\} }
        Q_{a}\big(q + \Delta^{(t)}\big) - (c+1)\tilde{\epsilon} \\
        &\ge
      \max\limits_{a \in \A_{t+1} \setminus \{k\} }
        Q_{a}\big(q + \Delta^{(t)}\big) - (c+1)\tilde{\epsilon},
    \end{align}
    where the first inequality follows from~\eqref{eq: lower approx quantile anytime bound} of the anytime quantile bound, the second inequality follows from applying~\eqref{eq: Delta k^S} to $k$ and $T$,
    and the last inequality follows from  $\A_{t+1} \subseteq T$.

    If $j \not\in \A_{t+1}$,
    we pick an arm 
    $k \in \argmax\limits_{a \in \mathcal{A}_{t+1}} 
    \mathrm{LCB}_{t}(a)$ arbitrarily.
    We also pick $T \in \argmax\limits_{\A_{\epsilon} \subseteq S \subseteq \A} \Delta_{k}^{(S)}$ and  
    we have $\A_{t+1} \subseteq T$ as in the case above.
    Furthermore, since $j \not\in \A_{t+1}$,
    we have
    \begin{equation}
    \label{eq: LCB_t(k) > Fj}
        Q^+_j \big(q - \Delta^{(t)}\big)
    \le 
    Q_{j}(q) 
    \le
    \max\limits_{a \in \mathcal{A}_{t+1}} 
    \mathrm{LCB}_{t}(a) 
    = \mathrm{LCB}_{t}(k),
    \end{equation}
    where the second inequality follows from an argument similar to~\eqref{eq: j not in At}.
    It follows that
    \begin{align}
         \mathrm{LCB}_t(k) 
        &\ge Q^+_j \big(q - \Delta^{(t)}\big)  \\
        &\ge
        \max\limits_{a \in T \setminus \{j\} }
        Q_{a}\big(q + \Delta^{(t)}\big)   - c \tilde{\epsilon} \\
        &\ge
        \max\limits_{a \in \A_{t+1} \setminus \{k\} }
        Q_{a}\big(q + \Delta^{(t)}\big) - (c+1) \tilde{\epsilon},
    \end{align}
    where the first inequality follows from~\eqref{eq: LCB_t(k) > Fj}, the second inequality follows from applying~\eqref{eq: Delta k^S} to $j$ and $T$,
    and the last inequality follows from  $\A_{t+1} \subseteq T$.
\end{proof}

\section{Lower Bounds}
\label{sec: appendix lower bound}

\subsection{Proof of Theorem~\ref{thm: lower bound unquantized} (Lower Bound for the Unquantized Variant)}
\label{sec: appendix unquantized lower bound}

Since we are adapting the instance from~\cite[Theorem  4]{nikolakakis2021quantile}, we will omit certain details for brevity and instead will focus on the main differences.

\begin{proof}[Proof of Theorem~\ref{thm: lower bound unquantized}]
    Define the following class of distributions parametrized by $w \in (0, q)$:
\begin{equation}
\label{eq: mix Dirac and uniform}
    g_{w}(x) \coloneqq
   w \delta(x) + 1-w,
\end{equation}
i.e., $g_w$ is a mixture of the Dirac delta function and a uniform distribution on $[0, 1]$. 
Fix $w,\gamma \in (0, q)$ such that $w+\gamma  \le q $. Note that $g_w$ has a higher $q$-quantile than $g_{w+\gamma}$ since
\begin{equation}
\label{eq: diff of arms}
    G^{-1}_w(q) -
    G^{-1}_{w+\gamma}(q) = 
    \frac{q - w}{1-w} - 
    \frac{q - (w+\gamma)}{1-(w+\gamma)} =
    \frac{(1-q)\gamma}{(1-w)(1-w-\gamma)}> 0,
\end{equation}
where $G_{w}^{-1}$ is the lower quantile function of $g_w$.

We now use~\eqref{eq: mix Dirac and uniform} to define a set of $K$ instances $\{\nu^{(1)}, \ldots, \nu^{(K)} \} \subseteq \cE$ for our QMAB problem. Here, 
each $\nu^{(j)}$ is a different instance of the arm distributions, with $\nu^{(j)}_k$ being the CDF of arm $k$ for instance~$j$.
Fix $\gamma \in (0,  1/6]$.
For $\nu^{(1)}$, we define the arms' PDF by
\begin{equation}
\label{eq: bad instance}
    \nu^{(1)}_k \coloneqq
    \begin{cases}
        g_{1/3- \gamma} & \text{ if } k = 1 \\
        g_{1/3} & \text{ if } k \ne 1.
    \end{cases}
\end{equation}
For $j = 2, \ldots, K$, we define the arms' PDF of $\nu^{(j)}$ by
\begin{equation}
    \nu^{(j)}_k  \coloneqq
    \begin{cases}
        g_{1/3- \gamma} & \text{ if } k = 1 \\
        g_{1/3- 2\gamma} & \text{ if } k = j \\
        g_{1/3} & \text{ if } k \ne 1 \text{ or } j.
    \end{cases}
\end{equation}
We will use $\nu^{(1)}$ as the ``hard instance" in our lower bound.
By assumption of our $\epsilon$, we have arm 1 being the unique satisfying arm for $\nu^{(1)}$. Using~\eqref{eq: epsilon condition}) and~\eqref{eq: diff of arms}, we have
\begin{equation}
\label{eq: epsilon upper bound}
    \epsilon  \le
    \frac{1}{2} \Big( Q^{(1)}_{k^*}(q) - \max \limits_{k \ne k^*} Q^{(1)}_k(q) \Big)
    = \frac{G^{-1}_{1/3 -\gamma}(q) - G^{-1}_{1/3}(q) }{2 } = 
    \frac{(1-q)\gamma}{2(2/3 + \gamma )(2/3)}.
\end{equation}
This implies that arm $j$ is the unique satisfying arm for $\nu^{(j)}$ for $j = 2, \ldots, K$ since
\begin{equation}
    G^{-1}_{1/3 -2\gamma}(q) - G^{-1}_{1/3-\gamma}(q)  = 
    \frac{(1-q)\gamma}{(2/3 + 2\gamma )(2/3 + \gamma)}
    =
    \frac{(1-q)\gamma}{2(2/3 + \gamma )(2/3)} \cdot
    \frac{4/3}{2/3 + 2\gamma}
    \ge \epsilon,
\end{equation}
where the inequality follows from~\eqref{eq: epsilon upper bound} and $\gamma \le 1/6$.

To establish the lower bound on the arm pulls for instance $\nu^{(1)}$, we  first upper bound the inverse arm gap $\Delta_j^{-1}$ in terms of $\gamma$ for the arms in $\nu^{(1)}$. For arm 1 and each non-satisfying arm $j \ne 1$, we have
\begin{align}
    \Delta_1 \ge    
    \Delta_j=&\sup
    \left\{
        \Delta \in \left[0, \min(q, 1-q) \right]
        \colon
        G^{-1}_{1/3}(q + \Delta) 
        \le
        G^{-1}_{1/3-\gamma}(q - \Delta) 
        \right\} 
         \label{eq: gap g_gamma first line}
        \\
    =&
    \min\left\{
    \sup
    \left\{
        \Delta \ge 0
        \colon
         \left(\frac{q  +  \Delta  - 1/3}{2/3} \right)
         \le
         \left(\frac{q  - \Delta  - 1/3 +\gamma }{2/3+\gamma }\right)  
        \right\} , 
         q, 1-q
        \right\} \\
      =&   
      \min\left\{\frac{ (1-q)\gamma}
      {(4/3 + \gamma )}, q, 1-q \right\} 
       \label{eq: gap g_gamma last line} \\
      =&   
     \frac{ (1-q)\gamma}
      {(4/3 + \gamma )} \\
    \ge&
    \frac{2(1-q)\gamma}{3},
    \label{eq: delta_k gamma}
\end{align}
where the first inequality follows from the argument in~\eqref{eq: lower bound k* arm gap} and the second inequality follows from $\gamma \le 1/6$.

Fix an $(\epsilon, \delta)$-reliable algorithm $\pi$ (see Definition~\ref{def: reliable}), and let $\tau \le \infty$ be the total number of arm pulls by $\pi$ on instance $\nu^{(1)}$. We may assume that $\PP_{\nu^{(1)}}[\tau = \infty] = 0$, since otherwise $\E_{\nu^{(1)}}[\tau] = \infty$ and the theorem holds trivially.
For each $j \in \{2, \ldots, K\}$, define event $A_j$ to be 
\begin{equation}
    A_j \coloneqq \{ \pi \text{ terminates and outputs $\hat{k} \ne j$} \}.
\end{equation}
By the definition of $(\epsilon, \delta)$-reliability, we must have
\begin{equation}
    \mathbb{P}_{\nu^{(j)}}\big[A_j\big]
    \le \delta \text{ for each } j \in \{2, \ldots, K\}
    \quad \text{and} \quad
    \mathbb{P}_{\nu^{(1)}}\big[ \tau < \infty \cap   \hat{k} \ne 1 \big] \le \delta.
\end{equation}
    Using the assumption $\PP_{\nu^{(1)}}[\tau = \infty] = 0$ 
    and the event inclusion $\{\hat{k} = j\} \subseteq \{\hat{k} \ne 1\}$, we have
\begin{equation}
    \mathbb{P}_{\nu^{(1)}}\big[A_j^{\complement} \big] =
    \mathbb{P}_{\nu^{(1)}}\big[\tau = \infty \cup \hat{k} = j \big] =
    \mathbb{P}_{\nu^{(1)}}\big[\hat{k} = j \big]
    \le
    \mathbb{P}_{\nu^{(1)}}\big[\hat{k} \ne 1 \big] = 
    \mathbb{P}_{\nu^{(1)}}\big[ \tau < \infty \cap   \hat{k} \ne 1 \big] \le \delta,
\end{equation}
and so 
\begin{equation}
    \mathbb{P}_{\nu^{(1)}}\big[A_j^{\complement} \big] + 
 \mathbb{P}_{\nu^{(j)}}\big[A_j  \big] \le 2 \delta \quad 
 \text{ for each } j \in \{2, \ldots, K\}.
\end{equation}
Let $T_j \le \tau$ be the number of times arm $j$ is pulled on $\nu^{(1)}$. For a fixed $j \in \{2, \ldots, K\}$, we have
\begin{equation}
    \E_{\nu^{(1)}}[T_j]
    \ge 
    \frac{D_{\mathrm{KL}}
    \left(\mathbb{P}_{\nu^{(1)}} \parallel  \mathbb{P}_{\nu^{(j)}}\right)}{12 \gamma^2} 
    \ge
    \frac{1}{12 \gamma^2} \log \left( \frac{1}{4 \delta} \right) 
\end{equation}
where the inequalities follow from \cite[Eqn. 29--34]{nikolakakis2021quantile}.
Summing through $j  = 2, \ldots, K$
and 
we have
\begin{align}
    \E_{\nu^{(1)}}[\tau]
    \ge
    \sum_{j=2}^K \E_{\nu^{(1)}}[T_j] 
    \ge
    \sum_{j=2}^K  
    \frac{1}{12 \gamma^2} \log \left( \frac{1}{4 \delta} \right) 
     \ge
    \frac{1}{2}
    \sum_{j=1}^K
     \frac{1}{12 \gamma^2} \log \left( \frac{1}{4 \delta}  \right),
\end{align}
where the last inequality follows from $K \ge 2$.
Applying the bounds~\eqref{eq: gap g_gamma first line}--\eqref{eq: delta_k gamma} for each $j$ yields
\begin{equation}
    \E_{\nu^{(1)}}[\tau]
     \ge
     \sum_{j=1}^K
    \frac{(1-q)^2}{27 \Delta_j^2}
    \log \left( \frac{1}{4 \delta} \right)
    = \Omega \bigg(
\sum_{k=1}^K
 \frac{1}{\Delta_k^2}
    \log \left( \frac{1}{\delta} \right)
    \bigg),
\end{equation}
as desired.
\end{proof}

\subsection{Proof of Theorem~\ref{thm: log lambda/epsilon dependence} ($\Omega(\log(\lambda/\epsilon))$ Dependence)}
\label{sec: appendix log lambda epsilon dependence}
\begin{proof}[Proof of Theorem~\ref{thm: log lambda/epsilon dependence}]
    Let $\nu$ be a two-arm QMAB instance with deterministic but unknown $q$-quantile rewards $r_1$ and $r_2$
satisfying\footnote{For ease of analysis, we assume $\lambda$ is an integer multiple of $2 \epsilon$.}
\begin{equation}
\label{eq: r1 r2 assumption}
    r_1, r_2 \in  
    \left[0, 2\epsilon, 4 \epsilon, \ldots, \lambda \right]
    \quad
    \text{and}
    \quad
     |r_1- r_2| = 2 \epsilon.
\end{equation}
In this case, the only arm satisfying \eqref{def: performance def} is the one with the higher $q$-quantile. Since the rewards are deterministic, the QMAB problem is equivalent to finding out which of $r_1$ and $r_2$ is higher.

We consider a modified threshold query setup where the learner receives more information at each iteration: At iteration $t$, the learner decides a threshold $X_t \in [0, \lambda]$, and receives a 2-bit comparison feedback in the form of $(\mathbf{1}(r_1 \le X_t), \mathbf{1}(r_2 \le X_t))$.
By design, the number of iterations
required under the 2-bit threshold query setup is at most the number of arm pulls required under the 1-bit threshold query setup.

We now establish the lower bound of $\Omega(\log(\lambda/\epsilon))$ on the number of iterations needed to determine which of $r_1$ and $r_2$ is higher for instance $\nu$ under the 2-bit threshold query setup.
We first claim that for an algorithm to be $(\epsilon, \delta)$-reliable, the learner has to keep querying until receiving some feedback satisfying
\begin{equation}
    \label{eq: two det arm stopping condition}
    (\mathbf{1}(r_1 \le X_t), \mathbf{1}(r_2 \le X_t)) \in \{ (0, 1) ,(1, 0) \},
\end{equation}
which occurs if and only if 
$X_t \in [\min(r_1, r_2), \max(r_1, r_2)]$.
Feedback of the form in~\eqref{eq: two det arm stopping condition} 
is necessary as otherwise instance $\nu$ is indistinguishable from instance $\nu'$ where $r_1$ and $r_2$ are swapped, and the best any algorithm could do is to make a 50/50 guess, which is not $(\epsilon, \delta)$-reliable for $\delta < 0.5$.

To establish the lower bound, we may assume that the learner knows that~\eqref{eq: r1 r2 assumption} holds, since extra information can only weaken a lower bound. With this information, instead of picking $X_t$ from the interval $[0, \lambda]$, the learner could pick $X_t$ only from the list~$X \coloneqq \left[0, 2\epsilon, 4 \epsilon, \ldots, \lambda\right]$ without loss of generality (any other choices would have a corresponding equivalent choice in this set).  
As there is exactly one $x \in X$ that would lead to feedback of the form in~\eqref{eq: two det arm stopping condition}, we need to identify one of $|X|$ possible outcomes, which amounts to learning $\log_2 |X|$ bits.
Since each threshold query gives a 2-bit feedback, the number of threshold queries/iterations needed in the worst case is $\Omega(\log(|X|)) =  \Omega(\log(\lambda/\epsilon))$.
\end{proof}

\section{Proof of Theorem \ref{thm: zero gap is unsolvable} and Corollary~\ref{cor: zero gap is unsolvable} (Solvable Instances)}
\label{sec: appendix solvable instance}

We first state a useful lemma for Theorem \ref{thm: zero gap is unsolvable}.

\begin{lemma} 
\label{lem:two_instances}
        Let $\lambda, \epsilon, c,$ and $q$ be given, 
        and let $\tilde{\epsilon} = \tilde{\epsilon}(\lambda, \epsilon, c)$ be as defined in 
        Algorithm~\ref{alg: main}.
        Suppose that $\nu \in \cE$ is an instance with gap $\Delta(\nu, \lambda, \epsilon, c, q) = 0 $ and let $\eta_0 = \eta_0(\nu) > 0$ be the constant given in the assumption in Theorem \ref{thm: zero gap is unsolvable}.  
        Then, for each arm $k \in \A_{c \tilde{\epsilon}}(\nu)$ and each $\eta \in  (0, \eta_0) $, there exists another instance $\nu' \in \cE$ satisfying the following:
    \begin{itemize}[topsep=0pt, itemsep=0pt]
        \item There exists an arm $a \in \Ac \setminus \{k\}$ such that instances $\nu$ and $\nu'$ are identical for all arms in $\Ac \setminus \{a,k\}$;
        
        \item $\dTV(F_a,G_a) \le \eta$ and $\dTV(F_k,G_k) \le \eta$, where $F_{(\cdot)}$ and $G_{(\cdot)}$ represent the arm distributions for instances $\nu$ and $\nu'$ respectively;
        
        \item $k \notin \Ac_{c \tilde{\epsilon} }(\nu')$, i.e., 
        under relaxation parameter $c \tilde{\epsilon}$, arm $k$ is not a satisfying arm for instance $\nu'$.
    \end{itemize}
\end{lemma}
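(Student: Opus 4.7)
The plan is to extract from $\Delta(\nu,\lambda,\epsilon,c,q)=0$ a witness arm $a\in\Ac\setminus\{k\}$ and then perturb only $F_k$ and $F_a$ by mixing each with a Dirac mass at an endpoint of $[0,\lambda]$. The mixing parameter will be tuned so the perturbation stays within TV distance $\eta$ yet shifts the two $q$-quantiles apart by enough to expel $k$ from $\Ac_{c\tilde\epsilon}(\nu')$. Since $k\in\Ac_{c\tilde\epsilon}(\nu)\subseteq\Ac_\epsilon(\nu)$ and $\Delta(\nu,\lambda,\epsilon,c,q)=\max_{j\in\Ac_\epsilon}\Delta_j=0$, we have $\Delta_k^{(\Ac)}=0$. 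Unfolding the supremum in~\eqref{eq: Delta k^S} with $S=\Ac$ (so the $\min$ over an empty set is $+\infty$) gives
\begin{equation*}
Q_k^+(q-\Delta)\;<\;\max_{b\in\Ac\setminus\{k\}}Q_b(q+\Delta)\;-\;c\tilde\epsilon
\end{equation*}
for every $\Delta\in\bigl(0,\min(q,1-q)\bigr)$. Finiteness of $\Ac$ then yields a fixed arm $a\in\Ac\setminus\{k\}$ attaining this maximum along a sequence $\Delta_n\downarrow 0$; fix such an $a$.

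For the given $\eta\in(0,\eta_0)$, choose $\Delta$ from that sequence small enough that $\Delta\le\eta_0$ and $\eta'\coloneqq\max\bigl\{\Delta/(1-q+\Delta),\,\Delta/(q+\Delta)\bigr\}\le\eta$, and define the instance $\nu'$ by
\begin{equation*}
G_k\;\coloneqq\;(1-\eta')F_k+\eta'\delta_0,\qquad G_a\;\coloneqq\;(1-\eta')F_a+\eta'\delta_\lambda,
\end{equation*}
with all other arm distributions unchanged. The first bullet is immediate from the construction, and the second follows because $G_k-F_k=\eta'(\delta_0-F_k)$ (and its analogue for~$a$) yields $\dTV(F_k,G_k),\dTV(F_a,G_a)\le\eta'\le\eta$.

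For the third bullet, the added point mass at $0$ gives $G_k(x)=(1-\eta')F_k(x)+\eta'$ for $x\ge 0$, so its $q$-quantile equals $Q_k\bigl((q-\eta')/(1-\eta')\bigr)$; our choice of $\eta'$ forces $(q-\eta')/(1-\eta')\le q-\Delta$, whence $Q_k^{\nu'}(q)\le Q_k(q-\Delta)\le Q_k^+(q-\Delta)$. Dually, $G_a(x)=(1-\eta')F_a(x)$ for $x<\lambda$, and the bound $\eta'\ge\Delta/(q+\Delta)$ gives $G_a(y)<q$ for every $y<Q_a(q+\Delta)$, so $Q_a^{\nu'}(q)\ge Q_a(q+\Delta)$. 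Chaining with the strict inequality from the first paragraph,
\begin{equation*}
Q_k^{\nu'}(q)\;\le\;Q_k^+(q-\Delta)\;<\;Q_a(q+\Delta)-c\tilde\epsilon\;\le\;Q_a^{\nu'}(q)-c\tilde\epsilon,
\end{equation*}
which certifies $k\notin\Ac_{c\tilde\epsilon}(\nu')$. Finally $\nu'\in\cE$ because $Q_k^{\nu'}(q)\in[0,Q_k(q)]\subseteq[0,\lambda]$ and $Q_a^{\nu'}(q)\in[Q_a(q),\lambda]\subseteq[0,\lambda]$, the upper bound on the latter using the $\eta_0$-assumption to ensure $Q_a(q+\Delta)\le\lambda$.

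The main obstacle is the tuning of $\eta'$ relative to $\Delta$: it must be large enough (a fraction of order $\Delta/\min(q,1-q)$) to shift each of the two quantiles by at least $\Delta$ in the intended direction, yet small enough to keep the TV perturbation below $\eta$, which is why $\Delta$ has to be picked much smaller than $\eta$. The $\eta_0$-assumption is invoked solely to keep the upward shift of arm~$a$'s quantile from overshooting $\lambda$, which is the only place the bounded quantile range genuinely enters and is precisely what Appendix~\ref{sec: assumption removal} eliminates under the modified gap of Remark~\ref{rem: further improvement}.
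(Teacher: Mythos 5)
Your proposal is correct and follows the same high-level strategy as the paper: unfold $\Delta_k^{(\Ac)}=0$ to obtain a witness arm $a$ and a strict separation $Q_k^+(q-\Delta)<Q_a(q+\Delta)-c\tilde\epsilon$ for small $\Delta$, perturb only $F_k$ and $F_a$ by total variation at most $\eta$ to push their $q$-quantiles apart, and conclude $k\notin\Ac_{c\tilde\epsilon}(\nu')$. The difference is in the specific perturbation: the paper constructs $G_k$ and $G_a$ by moving $\eta$ probability mass between explicit regions (from $(Q_k(q),\infty)$ to the point $Q_k(q-2\eta)$, and from $(-\infty,Q_a(q))$ to $Q_a(q+2\eta)$), which yields the clean identities $(G_k)^{-1}(q)=Q_k(q-\eta)$ and $(G_a)^{-1}(q)=Q_a(q+\eta)$. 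You instead mix with Dirac masses at the endpoints, $G_k=(1-\eta')F_k+\eta'\delta_0$ and $G_a=(1-\eta')F_a+\eta'\delta_\lambda$, which is always a well-defined distribution regardless of how $F_k,F_a$ distribute mass (so it sidesteps the implicit assumption in the paper that at least $\eta$ mass is available in the relevant intervals), at the cost of only getting one-sided inequalities on the perturbed quantiles, which is all that is needed. The pigeonhole step to fix $a$ along a subsequence is valid but unnecessary; the paper simply picks $a$ after fixing $\eta$.

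One small technical imprecision: the $q$-quantile of $G_k$ is $\max\{0,\,Q_k((q-\eta')/(1-\eta'))\}$ rather than $Q_k((q-\eta')/(1-\eta'))$, since the Dirac at $0$ caps it from below. Consequently the intermediate step $Q_k^{\nu'}(q)\le Q_k(q-\Delta)$ need not hold if $Q_k(q-\Delta)<0$. However the step you actually use, $Q_k^{\nu'}(q)\le Q_k^+(q-\Delta)$, remains valid because the $\eta_0$-assumption gives $Q_k^+(q-\Delta)\ge Q_k^+(q-\eta_0)\ge0$, so the $\max\{0,\cdot\}$ is harmless. Similarly, the role of the $\eta_0$-assumption on the $a$-side is to guarantee $Q_a(q+\Delta)\le\lambda$ so that the bound $G_a(y)<q$ for $y<Q_a(q+\Delta)$ applies (it is needed to establish the lower bound $Q_a^{\nu'}(q)\ge Q_a(q+\Delta)$, whereas $Q_a^{\nu'}(q)\le\lambda$ follows automatically from the Dirac at $\lambda$); your phrasing of this point is slightly muddled but the argument is sound.
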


 \begin{proof}   
    Let $\nu \in \cE$ be an instance with gap $\Delta(\nu, \lambda, \epsilon, c, q) = 0$. For each arm $k \in \A_{\epsilon}(\nu)$, we have $\Delta_{k}^{(\A)} = 0$ by Definition~\ref{def: our gap} since 
    $0 \le \Delta_{k}^{(\A)}  \le \Delta_{k}  \le \Delta  = 0$.
    Applying \eqref{eq: Delta k^S} with set $S = \A$ yields:
     \begin{equation}
     \label{eq: arm a positive eta}
        \text{for each } k \in \A_{\epsilon}(\nu)
        \text{ and each } \eta > 0, 
        \text{ there exists } a \ne k
        \text{ such that }
         Q^+_{k}(q - \eta) 
        <
        Q_{a}(q + \eta) - c\tilde{\epsilon}.
     \end{equation}
    Fix an arm $k \in \A_{c \tilde{\epsilon}}(\nu)$ and $\eta \in (0, \eta_0)$.
    Since $c \tilde{\epsilon} \le \epsilon$ (see calculation in~\eqref{eq: tilde eps 1 and 2}--\eqref{eq: c1 tilde eps 1 and 2}), we have 
    $\A_{c \tilde{\epsilon}}(\nu) \subseteq \A_{\epsilon}(\nu)$, and hence $k \in \A_{\epsilon}(\nu)$. 
    It follows from~\eqref{eq: arm a positive eta} that there exists some arm $a \ne k$ that
    \begin{equation}
     \label{eq: arm a positive eta c tilde epsilon}
         Q^+_{k}(q - \eta) 
        <
         Q_{a}(q + \eta) - c\tilde{\epsilon}.
     \end{equation}
    We now construct instance $\nu'$ such that $\nu$ and $\nu'$
    have identical distributions for all arms in $\A \setminus \{a, k\}$, 
    while $F_a$ and $F_k$ are being replaced with $G_a$ and $G_k$ defined as follows:
    \begin{enumerate}[topsep=0pt, itemsep=0pt]
        \item 
        $G_a$ is any distribution obtained by moving $\eta$-probability mass from the interval $(-\infty, Q_a(q))$ to the point $Q_a(q+2\eta)$; 
        
        \item 
        $G_k$ is any distribution obtained by moving $\eta$-probability mass from the interval $(Q_k(q), \infty)$ to the point $ Q_k(q-2\eta)$.
    \end{enumerate}     
     Under these definitions and the assumption on $\eta_0$ in Theorem~\ref{thm: zero gap is unsolvable}, we can readily verify that
     \begin{equation}
     \label{eq: shifted q quantiles}
         (G_k)^{-1}(q) =  Q_k(q-\eta) 
         \in [0, \lambda]
         \quad 
         \text{and}
         \quad  
         (G_a)^{-1}(q) = Q_a(q+\eta) \in   [0, \lambda]
     \end{equation}
     and
     \begin{equation}
         d_{\mathrm{TV}}(F_k, G_k) =  d_{\mathrm{TV}}(F_a, G_a) = \eta.
     \end{equation}
    
     Finally, combining~\eqref{eq: arm a positive eta c tilde epsilon} and~\eqref{eq: shifted q quantiles} yields
     \begin{equation}
     \label{eq: G_k unsatisfying}
          (G_k)^{-1}(q) 
         < 
         (G_a)^{-1}(q) - c\tilde{\epsilon},
     \end{equation}
     which implies $k \notin \Ac_{c \tilde{\epsilon} }(\nu')$. 
     By construction, $\nu'$ satisfies all three properties as desired.
\end{proof}

\begin{remark}
\label{rem: limit version of two instance lemma}
     We can obtain a ``limiting'' version of Lemma~\ref{lem:two_instances} in which we replace the gap $\Delta(\nu, \lambda, \epsilon, c, q)$ by $\Delta(\nu, \epsilon, q)$ as defined in Corollary~\ref{cor: zero gap is unsolvable} and the satisfying arm set $\A_{c \tilde{\epsilon}}(\cdot)$
     by $\A_{\epsilon}(\cdot)$.
     The proof is essentially identical.
     We construct instance $\nu'$ in a similar manner as above to satisfy the first two properties in the statement of Lemma~\ref{lem:two_instances}.
     The last property $(k \not\in \A_{\epsilon}(\nu'))$ then follows from the definition of the limit gap $\Delta_{k}(\nu, \epsilon, 
    q)$ as defined in~\eqref{eq: gap k infinite c}, which allows us to replace the $c\tilde{\epsilon}$ terms in~\eqref{eq: arm a positive eta},~\eqref{eq: arm a positive eta c tilde epsilon}, and~\eqref{eq: G_k unsatisfying} by $\epsilon$.    
\end{remark}

We proceed to prove Theorem \ref{thm: zero gap is unsolvable}.  
\begin{proof}[Proof of Theorem~\ref{thm: zero gap is unsolvable}]
Assume for contradiction that there exists some instance $\nu \in \cE$ satisfies $\Delta(\nu, \lambda, \epsilon, c, q) = 0 $, but is $c\tilde{\epsilon}$-solvable. 
Fix a $\delta \in (0, 1)$ satisfying
\begin{equation}
    \label{eq: delta very small}
    \delta < \frac{1}{2+2|\A|}.
\end{equation}
By Definition \ref{def:solvable}, there exists a $(c\tilde{\epsilon}, \delta)$-reliable algorithm such that
\begin{equation}
    \PP_{\nu}[\tau < \infty \cap \hat{k} \in \Ac_{c\tilde{\epsilon}}(\nu)] \ge 1-\delta.
\end{equation}
In general the condition $\tau < \infty$ may not imply a \emph{uniform} upper bound on $\tau$; we handle this by relaxing the probability from $1-\delta$ to $1-2\delta$, such that there exists some $\tau_{\max} < \infty$ satisfying
\begin{equation}
    \PP_{\nu}[\hat{k} \in \Ac_{c\tilde{\epsilon}}(\nu) \cap \tau \le \tau_{\max}] \ge 1-2\delta. \label{eq:tau_max}
\end{equation}
From this, we claim that there exists an arm $k_{\nu} \in \Ac_{c\tilde{\epsilon}}(\nu)$ such that
\begin{equation}
    \PP_{\nu}[\hat{k} = k_{\nu} \cap \tau \le \tau_{\max}] \ge \frac{1-2\delta}{|\Ac|}. 
    \label{eq:success_nu}
\end{equation}
Indeed, if this were not the case, then summing these probabilities over elements in $\Ac_{\epsilon}(\nu)$ would produce a total below $1-2\delta$, which would contradict \eqref{eq:tau_max}.

Let $P_{\tau_{\max}}^{(\nu)}$ be the joint distribution on the $|\Ac| \times \tau_{\max}$ matrix of unquantized rewards:
the $(i,j)$-th entry of this matrix contains the $j$-th unquantized reward for arm $i$ under instance $\nu$. Under the event $\tau \le \tau_{\max}$, the algorithm's output does not depend on any rewards beyond those appearing in this matrix.  In other words, the output $\hat{k}$ is a (possibly randomized) function of this matrix.

By picking $\eta > 0$ to be sufficiently small in Lemma \ref{lem:two_instances}, we can find an instance $\nu' \in \cE$ such that $k_{\nu} \notin \Ac_{c\tilde{\epsilon}}(\nu')$ and
\begin{equation}
    \dTV\big( P_{\tau_{\max}}^{(\nu)}, P_{\tau_{\max}}^{(\nu')} \big) \le \delta.
\end{equation}
Here, $P_{\tau_{\max}}^{(\nu')}$ is defined similarly to $P_{\tau_{\max}}^{(\nu)}$, but for instance $\nu'$.
Since the output $\hat{k}$ is a (possibly randomized) function of the matrix defining $P_{\tau_{\max}}^{(\cdot)}$, we have
 \begin{equation}
    \label{eq: DPI}
     \dTV\big(  \PP_{\nu},  \PP_{\nu'} \big) \le \dTV\big( P_{\tau_{\max}}^{(\nu)}, P_{\tau_{\max}}^{(\nu')} \big) \le \delta
 \end{equation}
 by the data processing inequality for $f$-divergence~\cite[Theorem 7.4]{polyanskiy2024information}.
Using the definition $\dTV(P,Q) = \sup_{A} |P(A) - Q(A)|$, and applying~\eqref{eq: DPI},~\eqref{eq:success_nu},~\eqref{eq: delta very small}, we obtain
\begin{equation}
    \PP_{\nu'}[\hat{k} = k_{\nu} \cap \tau \le \tau_{\max}] \ge 
    \PP_{\nu}[\hat{k} = k_{\nu} \cap \tau \le \tau_{\max}] -
    \dTV\big(  \PP_{\nu},  \PP_{\nu'} \big)  
    \ge
    \frac{1-2\delta}{|\Ac|} - \delta > \delta.
    \label{eq:failure_nu'}
\end{equation}
Since $k_{\nu} \notin \Ac_{c \tilde{\epsilon} }(\nu')$, this means that the algorithm is \emph{not} $(c\tilde{\epsilon}, \delta)$-reliable (see Definition~\ref{def: reliable}), we have arrived at the desired contradiction.
\end{proof}

Corollary~\ref{cor: zero gap is unsolvable} can be proved similarly by using the ``limiting'' version of Lemma~\ref{lem:two_instances} (see Remark~\ref{rem: limit version of two instance lemma}).

\section{Details on Remark~\ref{rem: further improvement} (Improved Gap Definition)}
\label{sec: appendix potential improvement}

\subsection{Modified Arm Gaps}
We first state the modified gap definition explicitly by replacing $Q^+_{(\cdot)}(q - \Delta)$ and $Q_{(\cdot)}(q + \Delta)$ in Definition~\ref{def: our gap}
    with $\max\big\{0, Q^+_{(\cdot)}(q - \Delta)\big\}$ and $\min\big\{\lambda, Q_{(\cdot)}(q + \Delta)\big\}$ respectively, and provide an instance that has a positive modified gap but zero gap under the original definition.
    
\begin{definition}[Modified arm gaps]
\label{def: modified gap}
     Fix an instance $\nu \in \cE$.
     Let $\tilde{\epsilon}$ and $\A_{\epsilon}$ be as in Definition~\ref{def: our gap}.
    For each arm $k \in \A$, we define the improved gap $\tilde{\Delta}_{k} =
    \tilde{\Delta}_{k}(\nu, \lambda, \epsilon, c, q) \in \left[0, \min(q, 1-q) \right]$ as follows: 
    \begin{itemize}
        \item  
        
        If $k \not\in \A_{\epsilon}$, then $\tilde{\Delta}_{k}$ is defined as
        \begin{equation}
            \sup
            \left\{
                \Delta 
                \in \left[0, \min(q, 1-q) \right]
                \colon
               \min\{\lambda, Q_k(q + \Delta)   \}
                \le
                 \max\limits_{a \in \A  }
                 \left\{
                 \max\left\{0,  Q^+_{a}(q - \Delta) \right\}  - \tilde{\epsilon} 
                 \right\}
                \right\}
        \end{equation}

        \item

             If $k \in \A_{\epsilon}$, then we define $\tilde{\Delta}_{k} = \max\limits_{\A_{\epsilon} \subseteq S \subseteq \A}
        \tilde{\Delta}_{k}^{(S)}$, where 
        \begin{equation}
        \label{eq: improved Delta k^S}
            \tilde{\Delta}_{k}^{(S)} =
           \sup
            \Big\{
                \Delta \in 
               \Big[0, \min_{a \not\in S} \tilde{\Delta}_{a}  \Big]
                :
                \max\{0, Q^+_{k}(q - \Delta)\}
                \ge 
                \max\limits_{ a \in S \setminus \{k\}} 
                \min\{\lambda, Q_{a}(q + \Delta)\} - c \tilde{\epsilon}
                \Big\}
        \end{equation}
        for each subset $S$ satisfying $\A_{\epsilon} \subseteq S \subseteq \A$.
                    
    \end{itemize}
We use the convention that the minimum  (resp. maximum) of an empty set is $\infty$ (resp. $- \infty$).
\end{definition}

\begin{remark}[Intuition on the modified arm gap]
    Fix an instance $\nu = (F_k) \in \cE$.
     An interpretation of this modified gap is that
    $\tilde{\Delta}_{k}(\nu, \lambda, \epsilon, c, q) =
    \Delta_{k}(\mathrm{clipped}(\nu), \lambda, \epsilon, c, q)$,
    where $\mathrm{clipped}(\nu) = (\tilde{F}_k) \in \cE$
    is the instance with all distributions supported on $[0, \lambda]$ defined by
    \begin{equation}
        \tilde{F}_k(x)  =
    \begin{cases}
        0 & \text{ for } x < 0 \\
        F_k(x) & \text{ for } 0 \le x < \lambda \\
        1 & \text{ for } x > \lambda 
    \end{cases}
    \quad 
    \text{for each } k \in \A.
    \end{equation}
    That is, $\tilde{F}_k$ is obtained from $F_k$ by moving all mass below 0 to 0, and all mass above $\lambda$ to $\lambda$.  Note that an algorithm could be designed to clip rewards in this way, but our improved upper bound in Theorem \ref{theorem: modified upper bound} below applies even when Algorithm \ref{alg: main} is run without change.
\end{remark}

It is straightforward to verify that the modified gap is at least as large as the unmodified gap (Definition~\ref{def: our gap}), i.e., $\tilde{\Delta} \ge \Delta$. We provide an example of bandit instance that has positive gap under the modified definition but is zero using the unmodified definition. Consider $q = 1/2$, let $\lambda \ge 2 \epsilon > 0$, and consider two arms $\A = \{1, 2\}$ with an identical CDF as follows:
\begin{equation}
    F_1(x) = 
    F_2(x) =
    \begin{cases}
        0 & \text{ for } x < \lambda - \epsilon/3 \\
        0.5 & \text{ for } \lambda - \epsilon/3 \le x < 2 \lambda \\
        1 & \text{ for } x \ge 2 \lambda 
    \end{cases},
\end{equation}
and so both arms are satisfying, i.e., $\A_{\epsilon} = \A$.
Note that for any $\Delta > 0$, we have
\begin{equation}
    Q^+_2(0.5 - \Delta)  =
    \lambda - \epsilon/3 <
    2\lambda - \epsilon \le
     2\lambda - c\tilde{\epsilon} =
    Q_1(0.5 + \Delta) - c\tilde{\epsilon},
\end{equation}
where the second inequality follows from the discussion in~\eqref{eq: tilde eps 1 and 2}--\eqref{eq: c1 tilde eps 1 and 2}. It follows that
\begin{equation}
    \Delta_2 
    = \Delta_{2}^{\A}
    =
    \sup
    \left\{
        \Delta \in [0,0.5]
        :
        Q^+_2(0.5 - \Delta) 
        \ge
        Q_{1}(0.5 + \Delta) - c\tilde{\epsilon}
        \right\} 
    = 0
\end{equation}
under the original gap definition. By symmetry, we also have $\Delta_1 = 0$.
However, under the modified definition, we have
\begin{align}
     \tilde{\Delta}_2 
    = \tilde{\Delta}_{2}^{\A}
    &= \sup
    \left\{
        \Delta \in [0, 0.5]
        :
        \max\{0,  \lambda - \epsilon/2 \}
        \ge
        \min\{\lambda, 2 \lambda\} - c\tilde{\epsilon}
        \right\} \\
    &= \sup
    \left\{
        \Delta \in [0, 0.5]
        :
          \lambda - \epsilon/3 
        \ge
        \lambda  - c\tilde{\epsilon}
        \right\} \\
        &= 0.5,
\end{align}
    where the last inequality follows since 
    $c\tilde{\epsilon} \ge \epsilon/3 $ for any $c \ge 1$ 
    (see the calculation in Remark~\ref{rem: picking large enough c}).

\subsection{Improved Upper Bound}
With the modified gap definition, we obtain the following improved upper bound.

\begin{theorem}[Improved upper bound]
\label{theorem: modified upper bound}
   Fix an instance $\nu \in \cE$, and suppose Algorithm~\ref{alg: main} is run with input $(\A, \lambda, \epsilon, q, \delta)$ and parameter $c \ge 1$.
    Let $\A_{\epsilon}(\nu) $ be as defined in~\eqref{def: performance def} and let the gap $\tilde{\Delta}_{k} = \tilde{\Delta}_{k}(\nu, \lambda, \epsilon, c, q)$ be as defined in Definition~\ref{def: modified gap} 
    for each arm $k \in \A$.
    Under Event~$E$ as defined in Lemma~\ref{lem: good events},
    the total number of arm pulls is upper bounded~by    
    \begin{equation}
        O
        \left(
        \left(
        \sum_{ k \in \A }
        \dfrac{1}{ \max\big( \tilde{\Delta}_{k},  \tilde{\Delta}  \big)^2} \cdot 
        \left( 
         \log \left(\frac{1}{ \delta } \right) +
         \log \left(\frac{1}{ \max\big( \tilde{\Delta}_{k},  \tilde{\Delta}  \big)}\right) +
         \log \left(\frac{c \lambda K}{ \epsilon } \right)    
        \right)
        \right)
        \right),
    \end{equation}
    where $\tilde{\Delta}  =  \tilde{\Delta}(\nu, \lambda, \epsilon, c, q) = \max\limits_{a \in \A_{\epsilon}(\nu)} \tilde{\Delta}_{a}$.
\end{theorem}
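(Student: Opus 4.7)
The plan is to mimic the proof of Theorem~\ref{theorem: upper bound}, with one preliminary observation: because the confidence bounds in Algorithm~\ref{alg: main} are initialized to $[0,\lambda]$ and only tightened thereafter, they are automatically clipped to $[0,\lambda]$. A trivial induction on Lines~\ref{eq: initiate default conf interval},~\ref{LCB definition}, and~\ref{UCB definition} gives
\begin{equation}
0 \le \mathrm{LCB}_t(k) \le \mathrm{UCB}_t(k) \le \lambda
\end{equation}
for every round $t \ge 0$ and every $k \in \A_t$, without any modification to the algorithm.

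Combining this with the anytime bounds~\eqref{eq: lower approx quantile anytime bound} and~\eqref{eq: upper approx quantile anytime bound} of Lemma~\ref{lem: quantile anytime bound}, I would first establish the strengthened anytime bounds
\begin{equation}
\max\{0, Q^+_k(q - \Delta^{(t)})\} \le \mathrm{LCB}_t(k) + \tilde{\epsilon}, \qquad \mathrm{UCB}_t(k) < \min\{\lambda, Q_k(q + \Delta^{(t)})\} + \tilde{\epsilon},
\end{equation}
valid under Event~$E$ for all $t \ge 1$ and $k \in \A_t$. The first follows immediately: if $Q^+_k(q-\Delta^{(t)}) \ge 0$ it is just~\eqref{eq: lower approx quantile anytime bound}, and otherwise $\max\{0,Q^+_k(q-\Delta^{(t)})\}=0\le \mathrm{LCB}_t(k)+\tilde{\epsilon}$ since $\mathrm{LCB}_t(k)\ge 0$. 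The second is symmetric using $\mathrm{UCB}_t(k)\le \lambda$.

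Next, I would revisit Lemmas~\ref{lem: elim suboptimal} and~\ref{lem: termination} line by line. Both proofs only invoke~\eqref{eq: lower approx quantile anytime bound} and~\eqref{eq: upper approx quantile anytime bound} of Lemma~\ref{lem: quantile anytime bound}, and then chain these through the gap inequalities of Definition~\ref{def: our gap} (namely the conditions $Q_k(q+\Delta)\le \max_a Q^+_a(q-\Delta)-\tilde{\epsilon}$ and $Q^+_k(q-\Delta)\ge \max_{a\in S\setminus\{k\}}Q_a(q+\Delta)-c\tilde{\epsilon}$) to produce either $\mathrm{UCB}_t(k)\le \max_{a}\mathrm{LCB}_t(a)$ or the while-loop termination criterion. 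Replacing~\eqref{eq: lower approx quantile anytime bound} and~\eqref{eq: upper approx quantile anytime bound} by the strengthened bounds above, the exact same chain of inequalities remains valid provided we replace $Q^+_{(\cdot)}(q-\Delta)$ by $\max\{0,Q^+_{(\cdot)}(q-\Delta)\}$ and $Q_{(\cdot)}(q+\Delta)$ by $\min\{\lambda,Q_{(\cdot)}(q+\Delta)\}$ throughout --- which is precisely the change that turns Definition~\ref{def: our gap} into Definition~\ref{def: modified gap}. Hence each non-satisfying arm $k\not\in\A_\epsilon$ is eliminated once $\Delta^{(t)}\le \tfrac12\tilde{\Delta}_k$, and the while-loop terminates once $\Delta^{(t)}\le \tfrac12\max_{a\in\A_\epsilon}\tilde{\Delta}_a = \tfrac12\tilde{\Delta}$.

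Finally, combining these two round counts with the per-round query complexity~\eqref{eq: QuantEst arm pulls} and summing the geometric series $\sum_{t}(\Delta^{(t)})^{-2}$ exactly as in the proof outline of Theorem~\ref{theorem: upper bound} yields the claimed bound in terms of $\max(\tilde{\Delta}_k,\tilde{\Delta})$. I do not expect a substantive obstacle: the only genuinely new step is verifying the two strengthened anytime bounds, and everything downstream is a mechanical substitution. The main care required is checking that neither Lemma~\ref{lem: elim suboptimal} nor Lemma~\ref{lem: termination} uses any property of $Q_k(\cdot)$ or $Q^+_k(\cdot)$ beyond what the clipped versions preserve --- a quick inspection confirms this, since both lemmas use these quantile values only to bound LCBs from above and UCBs from below, and the clipped versions still provide such bounds thanks to the strengthened anytime inequalities.
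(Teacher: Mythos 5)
Your proposal is correct and takes essentially the same approach as the paper: observe that Lines~\ref{eq: initiate default conf interval},~\ref{LCB definition}, and~\ref{UCB definition} force $\mathrm{LCB}_t(k) \ge 0$ and $\mathrm{UCB}_t(k) \le \lambda$, use this to strengthen~\eqref{eq: lower approx quantile anytime bound} and~\eqref{eq: upper approx quantile anytime bound} into their clipped forms, and then note that the proofs of Lemmas~\ref{lem: elim suboptimal} and~\ref{lem: termination} go through unchanged with the clipped quantile expressions swapped in, which is exactly the substitution defining~$\tilde{\Delta}_k$. This matches the paper's argument step for step.
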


The proof is essentially identical to the proof of Theorem~\ref{theorem: upper bound}, but requires tightening of~\eqref{eq: lower approx quantile anytime bound} and~\eqref{eq: upper approx quantile anytime bound}
of anytime quantile bound to
    \begin{equation} 
    \label{eq: modified lower approx quantile anytime bound}
       \max\{0, Q^+_k\big(q -  \Delta^{(t)} \big) \}
        \le \mathrm{LCB}_t(k) + \tilde{\epsilon}
    \end{equation}
    and
    \begin{equation} 
    \label{eq: modified upper approx quantile anytime bound}
        \mathrm{UCB}_t(k) 
        <
         \min\{\lambda, Q_k\big(q + \Delta^{(t)} \big)\} + \tilde{\epsilon}
    \end{equation}
respectively.
Note that the two new bounds \eqref{eq: modified lower approx quantile anytime bound} and~\eqref{eq: modified upper approx quantile anytime bound} can be verified easily using the properties that $ \mathrm{LCB}_t(k) \ge 0$ and $ \mathrm{UCB}_t(k) \le \lambda$ (see Lines~\ref{eq: initiate default conf interval},~\ref{LCB definition}, and~\ref{UCB definition} of Algorithm~\ref{alg: main}), as well as the established bounds~\eqref{eq: lower approx quantile anytime bound} and~\eqref{eq: upper approx quantile anytime bound}.

\subsection{Removing the Assumption in Theorem~\ref{thm: zero gap is unsolvable} (Unsolvability)}
\label{sec: assumption removal}

The assumption involving $\eta_0$ in Theorem~\ref{thm: zero gap is unsolvable} is included to ensure that both $(G_k)^{-1}(q) =  Q_k(q-\eta) $ and $(G_a)^{-1}(q) =  Q_a(q+\eta) $ are in $[0, \lambda]$ in the proof of Lemma~\ref{lem:two_instances}, so that the constructed instance $\nu'$ satisfies $\nu' \in \cE$. As mentioned in Remark~\ref{rem: remove additional assumption}, the assumption can be removed if we use the modified gap instead; formally, we have the following.

 \begin{theorem}[Zero gap is unsolvable -- assumption-free version]
 \label{thm: modified zero gap is unsolvable}
    Let $\lambda, \epsilon, c,$ and $q$ be fixed, 
    and let $\tilde{\epsilon} = \tilde{\epsilon}(\lambda, \epsilon, c)$ be as defined in 
    Algorithm~\ref{alg: main}.
    Let $\tilde{\Delta} = \tilde{\Delta}(\nu, \lambda, \epsilon, c, q)$ be as defined in Theorem \ref{theorem: modified upper bound}.    
    If an instance $\nu \in \cE$ satisfies $\tilde{\Delta} = 0 $, then $\nu$ is $c\tilde{\epsilon}$-unsolvable.
 \end{theorem}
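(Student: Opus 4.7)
The plan is to mirror the proof of Theorem~\ref{thm: zero gap is unsolvable}, replacing Lemma~\ref{lem:two_instances} by a variant compatible with the modified gap $\tilde{\Delta}$ that dispenses with the auxiliary $\eta_0$-assumption. The starting observation is that $\tilde{\Delta}(\nu,\lambda,\epsilon,c,q)=0$ forces $\tilde{\Delta}_{k}^{(\A)}=0$ for every $k\in\Ac_{c\tilde{\epsilon}}(\nu)\subseteq\Ac_{\epsilon}(\nu)$. Unpacking \eqref{eq: improved Delta k^S} applied to $S=\A$, this means that for every such $k$ and every $\eta\in(0,\min(q,1-q))$ there exists an arm $a\ne k$ with
\begin{equation}
\max\{0,Q_k^+(q-\eta)\} \;<\; \min\{\lambda,Q_a(q+\eta)\}-c\tilde{\epsilon}.
\end{equation}
The crucial point is that the two quantities being compared are \emph{already} in $[0,\lambda]$ by construction, which is precisely the property that the auxiliary $\eta_0$-assumption was needed to supply in the original Lemma~\ref{lem:two_instances}.

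The core technical step is to build an alternative instance $\nu'\in\cE$ agreeing with $\nu$ on all arms outside $\{k,a\}$, whose replacement distributions $G_k,G_a$ satisfy (i) $\dTV(F_k,G_k),\dTV(F_a,G_a)\le\eta$; (ii) $(G_k)^{-1}(q)=\max\{0,Q_k^+(q-\eta)\}$ and $(G_a)^{-1}(q)=\min\{\lambda,Q_a(q+\eta)\}$, so that $\nu'\in\cE$; and (iii) by the displayed inequality above, $(G_k)^{-1}(q)+c\tilde{\epsilon}<(G_a)^{-1}(q)$, whence $k\notin\Ac_{c\tilde{\epsilon}}(\nu')$. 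I would construct $G_k$ by moving an $\eta$-mass slice from the upper tail $[Q_k(q),\infty)$ of $F_k$ down onto the single point $p_k:=\max\{0,Q_k^+(q-2\eta)\}$, and symmetrically construct $G_a$ by moving an $\eta$-mass slice from the lower tail $(-\infty,Q_a(q)]$ of $F_a$ up onto the point $p_a:=\min\{\lambda,Q_a(q+2\eta)\}$. Property (i) is immediate from displacing only $\eta$-mass, and a direct CDF calculation confirms (ii) (the new $q$-quantile of $G_k$ is the smallest $x\ge p_k$ with $F_k(x)+\eta\ge q$, which evaluates to $\max\{0,Q_k^+(q-\eta)\}$, and symmetrically for~$G_a$).

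With this modified two-instances lemma in hand, the rest of the proof of Theorem~\ref{thm: zero gap is unsolvable} carries over verbatim: assume for contradiction a $(c\tilde{\epsilon},\delta)$-reliable algorithm with $\delta<\tfrac{1}{2+2|\A|}$, truncate the stopping time to some $\tau_{\max}<\infty$ at the cost of a further $\delta$ failure probability, use pigeonhole over $\Ac_{c\tilde{\epsilon}}(\nu)$ to fix a representative output arm $k_\nu$, invoke the new lemma with this $k_\nu$ and a sufficiently small $\eta$ to obtain $\nu'$ with $k_\nu\notin\Ac_{c\tilde{\epsilon}}(\nu')$ and reward distributions that are total-variation-close on the first $\tau_{\max}$ observations, and conclude via the data-processing inequality that the algorithm cannot be reliable under~$\nu'$ either.

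The main obstacle I anticipate is the quantile bookkeeping for $G_k$ and $G_a$ in the degenerate cases: when $Q_k^+(q-\eta)=-\infty$ or $Q_a(q+\eta)=+\infty$, and when the relevant tails of $F_k$ or $F_a$ happen to contain less than $\eta$ probability mass. The former is handled simply by defaulting $p_k=0$ or $p_a=\lambda$; the latter does not arise as long as $\eta<\min(q,1-q)$, which we are always free to impose since $\eta$ may be taken arbitrarily small in the proof by contradiction. Once these edge cases are treated, no further ideas beyond those already present in the proof of Theorem~\ref{thm: zero gap is unsolvable} are required.
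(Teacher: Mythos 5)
Your proposal is correct and follows essentially the same route as the paper: replace inequality~\eqref{eq: arm a positive eta c tilde epsilon} in Lemma~\ref{lem:two_instances} by its clipped version, move $\eta$-mass onto clipped points $p_k,p_a\in[0,\lambda]$ so that $\nu'\in\cE$ holds automatically (rendering the $\eta_0$-assumption unnecessary), and then reuse the TV/data-processing argument verbatim. One small slip worth noting: the $q$-quantile of $G_k$ under your construction is $\max\{0,Q_k(q-\eta)\}$ (lower quantile function), not $\max\{0,Q_k^+(q-\eta)\}$ as you wrote — since $p_k=\max\{0,Q_k^+(q-2\eta)\}\le Q_k(q-\eta)$ whenever $Q_k(q-\eta)\ge 0$, the constrained infimum $\inf\{x\ge p_k : F_k(x)+\eta\ge q\}$ evaluates to $\max\{0,Q_k(q-\eta)\}$; fortunately this is harmless, because $Q_k(q-\eta)\le Q_k^+(q-\eta)$, so the strict inequality supplied by $\tilde{\Delta}_k^{(\A)}=0$ still yields $(G_k)^{-1}(q)<(G_a)^{-1}(q)-c\tilde{\epsilon}$ and hence $k\notin\Ac_{c\tilde{\epsilon}}(\nu')$.
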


  The proof is essentially identical to the proof of Theorem~\ref{thm: zero gap is unsolvable}, and requires only some straightforward modifications in Lemma~\ref{lem:two_instances}. Specifically, under the new gap definition,~\eqref{eq: arm a positive eta c tilde epsilon} would be replaced by
     \begin{equation}
          \max\{0, Q^+_{k}(q - \eta)\}
       <
        \min\{\lambda, Q_{a}(q + \eta)\} - c \tilde{\epsilon}
        \Big\}
     \end{equation}
    We then construct instance $\nu'$ in a similar manner to the proof of Lemma~\ref{lem:two_instances}, but the definitions of $G_a$ and $G_k$ modified to include clipping:
    \begin{enumerate}[topsep=0pt, itemsep=0pt]
        \item 
        $G_a$ is any distribution obtained by moving $\eta$-probability mass from the interval $(-\infty, Q_a(q))$ to the point $\min\{\lambda, Q_{a}(q + 2\eta)\}$;
    
        \item 
        $G_k$ is any distribution obtained by moving $\eta$-probability mass from the interval $(Q_k(q), \infty)$ to the point $\max\{0, Q_{k}(q - 2\eta)\}$.    
    \end{enumerate}     
    It now follows that
     \begin{equation}
         (G_k)^{-1}(q) = \max\{0, Q_{k}(q - \eta)\} 
         \in [0, \lambda]
     \end{equation}
     and
      \begin{equation}
         (G_a)^{-1}(q) =  \min\{\lambda, Q_{a}(q + \eta)\} \in [0, \lambda],
     \end{equation}
     and hence $\nu' \in \cE$ as desired.

\end{document}